\renewcommand\AB@authnote[1]{\textsuperscript{#1}\hspace{5pt}}
\title{\normalfont Pre-Trained AI Model Assisted Online Decision-Making under Missing Covariates: A Theoretical Perspective}
\author{
\begin{minipage}[t]{0.3\textwidth}
    \centering
    Haichen Hu \\
    {\small\text{CCSE, MIT}} \\
    {\small\texttt{huhc@mit.edu}}
\end{minipage}
\hspace{0.12em}
\begin{minipage}[t]{0.3\textwidth}
    \centering
    David Simchi-Levi \\
    {\small\text{IDSS, MIT}} \\
    {\small\texttt{dslevi@mit.edu}}
\end{minipage}
}
\date{}
\begin{document}
\maketitle

\begin{abstract}
We study a sequential contextual decision-making problem in which certain covariates are missing but can be imputed using a pre-trained AI model. From a theoretical perspective, we analyze how the presence of such a model influences the regret of the decision-making process. We introduce a novel notion called \textit{``model elasticity''}, which quantifies the sensitivity of the reward function to the discrepancy between the true covariate and its imputed counterpart. This concept provides a unified way to characterize the regret incurred due to model imputation, regardless of the underlying missingness mechanism. More surprisingly, we show that under the missing at random (MAR) setting, it is possible to sequentially calibrate the pre-trained model using tools from orthogonal statistical learning and doubly robust regression. This calibration significantly improves the quality of the imputed covariates, leading to much better regret guarantees. Our analysis highlights the practical value of having an accurate pre-trained model in sequential decision-making tasks and suggests that model elasticity may serve as a fundamental metric for understanding and improving the integration of pre-trained models in a wide range of data-driven decision-making problems.

\textit{\small Key words: Artificial Intelligence, Sequential Decision-Making, Model Calibration, Contextual Bandits, Doubly Robust Regression, Statistical Learning} 
\end{abstract}
\vspace{-0.5em}
\noindent\rule{\textwidth}{1pt} 
\vspace{1em}
\section{Introduction}\label{sec:intro}
In a wide range of operational settings, including personalized pricing \citep{bu2022context}, healthcare decision support \citep{ameko2020offline}, and digital platform management \citep{he2020contextual}, decision-makers are increasingly required to adaptively learn from limited feedback while personalizing decisions to user- or context-specific information. A common modeling framework for such problems is the contextual bandit, in which a decision-maker observes covariates (or context) associated with a decision instance, selects an action, and then receives partial feedback, typically in the form of a realized reward or cost \citep{wang2025dynamic,keyvanshokooh2025contextual,li2025online}. The aim is to learn a policy that balances exploration and exploitation while adapting to the contextual information over time.

However, in many practical applications, the full set of contextual features is often only partially observable. Customers may choose to withhold demographic or behavioral information for privacy reasons \citep{martin2017data,bracale2025dynamic}; in healthcare systems, historical records may be incomplete or fragmented \citep{dziura2013strategies,turkson2021handling}; and in operational platforms, data pipelines may fail to record certain covariates. The presence of missing covariates renders the standard contextual bandit formulation ill-posed: the decision-maker lacks key information required to condition actions on relevant states, potentially leading to biased learning and degraded performance. While one might consider discarding instances with incomplete context, doing so may substantially reduce data availability, particularly in high-frequency decision environments.

To mitigate the impact of missing covariates, it is natural to consider auxiliary sources of information that may serve as proxies or surrogates. Recent advances in artificial intelligence—particularly the use of large-scale pre-trained models and LLMs—have enabled the generation of predictions or synthetic features at low marginal cost for science and engineering \citep{jumper2021highly,irvin2019chexpertlargechestradiograph}. For instance, in a digital pricing or online recommendation system, a language model trained on historical customer interactions may be used to infer likely missing attributes based on partially filled forms or clickstreams \citep{yu2025application,zhang2025contextualthompsonsamplinggeneration,cai2025activeexplorationautoregressivegeneration}. These AI-generated predictions, however, are not ground truth: they may be biased, miscalibrated, or trained under a different data regime.  This raises a natural question:
\begin{center}
\emph{Is it possible to incorporate pre-trained AI surrogate predictions into the downstream decision-making framework and systematically correct for their potential bias or miscalibration using observed feedback over time?}
\end{center}
\begin{figure}[ht]
    \begin{center}
\centerline{\includegraphics[width=0.5\linewidth]{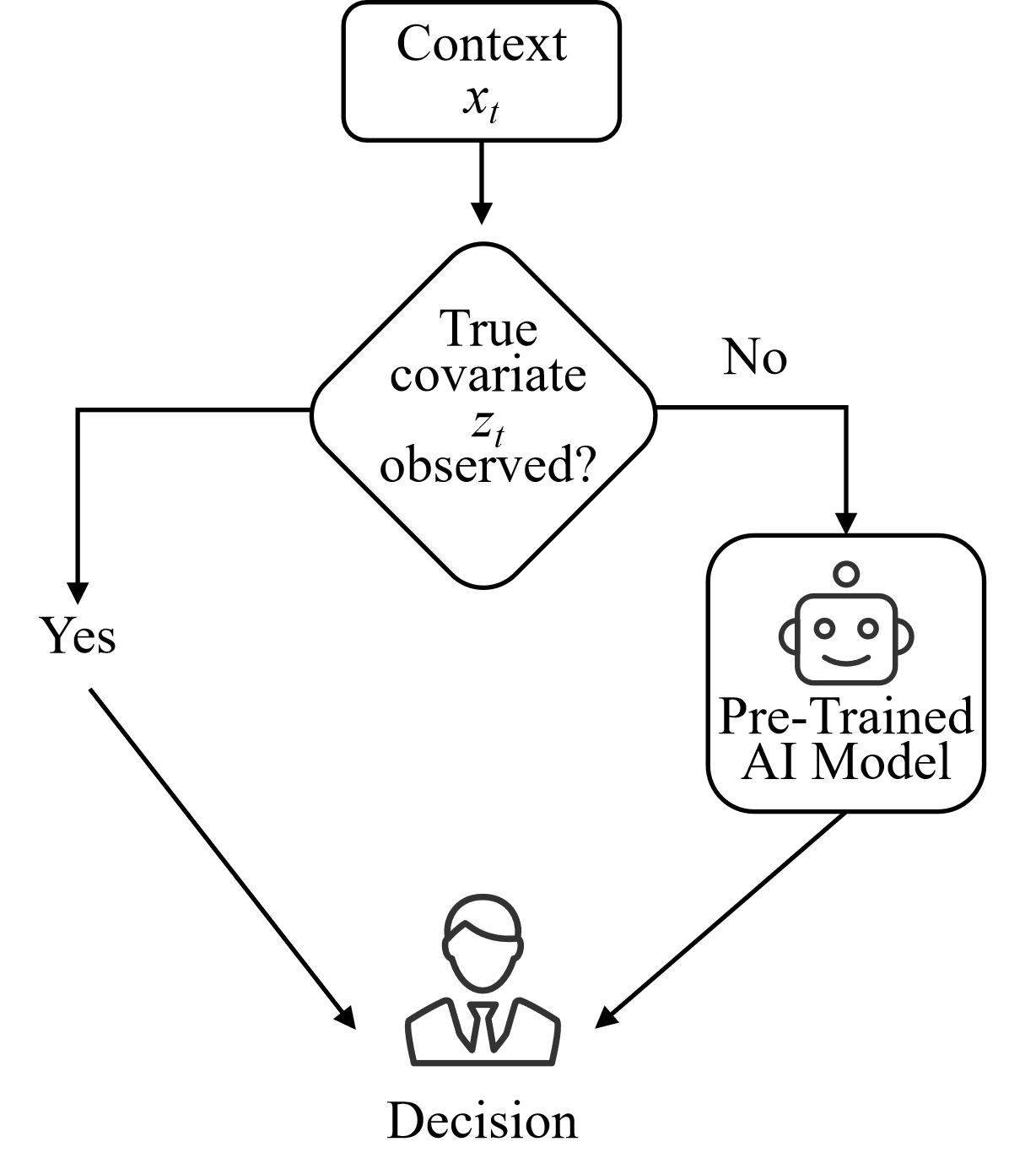}}
\caption{Pre-trained model assisted sequential decision-making with missing covariates\label{fig:illustration}.}
\end{center}
\vspace{-5 mm}
\vskip -0.2in
\end{figure}
\subsection{Our Contribution}
In this paper, we study the popular problem about the AI assisted downstream decision-making problem from a theoretical side. Specifically, we consider a sequential decision-making problem in which the decision-maker observes only partial context but has access to a pre-trained model that provides predictions about the missing features. Our objective is to develop a principled framework for incorporating such auxiliary information into the learning process while ensuring long-run performance guarantees. Beginning with the most general Missing Not At Random (MNAR) framework, we characterize the risk of reward estimation using pre-trained model imputation, and show that it decomposes into a standard oracle risk term and an additional term reflecting the quality of the surrogate model—what we refer to as ``model elasticity with respect to the pseudo response''. This decomposition provides insight into how the auxiliary predictions interact with the learning algorithm and affect cumulative performance.

We further consider settings in which the missingness mechanism satisfies a Missing At Random (MAR) condition, allowing the bias in the pre-trained model to be sequentially corrected using observed data. Under this assumption, we can dynamically calibrate the pre-trained model that integrates both real and surrogate information through orthogonal statistical learning \citep{foster2023orthogonalstatisticallearning}, and analyze the resulting performance in terms of learning efficiency and long-run regret. Our results suggest that pre-trained models, when properly calibrated, can significantly improve decision quality in contextual environments with missing information, offering a promising direction for robust data-driven operations.

\subsection{Roadmap of the Paper}
We present our results in the following structure. In \cref{sec:model}, we introduce the problem setting of sequential decision-making with missing covariates and a pre-trained model for imputation. In \cref{sec:algorithm}, we propose a general algorithmic framework and provide its statistical guarantee based on our newly introduced concept of model elasticity. Finally, in \cref{sec:model_calibration}, we focus on the missing at random (MAR) setting and develop an efficient calibration procedure for the pre-trained model, leading to improved regret bounds.

\section{Related Works}\label{sec:related work}
Our work is closely connected to several lines of research. First, the literature on \textbf{bandits with partially observed data} addresses learning under missing data without relying on pre-trained models. Second, studies on \textbf{regression imputation} and \textbf{surrogate-aided prediction} provide theoretical and empirical support for imputing pseudo-responses using pre-trained models in estimation and prediction tasks. Third, advances in \textbf{synthetic data generation} highlight the growing accessibility of large-scale data via pre-trained AI agents, offering an inexpensive alternative to costly human annotation. Finally, the frameworks of \textbf{orthogonal statistical learning} and \textbf{debiased machine learning} offer powerful methodologies for calibrating pre-trained models and mitigating bias. In what follows, we review each of these research directions in greater detail.


\paragraph{Contextual Bandits with Partially Observed Data:} Learning with missing data is an important subfield in machine learning and operations research. \citet{bouneffouf2020contextual,guinet2022effective} studied general bandits with missing reward signals and characterized the regret loss due to reward censorship. \citet{kim2025linear} investigated linear bandits with latent unobserved features. From operations research, several papers have considered online learning with censored reward observations in inventory control and dynamic pricing \citep{chen2022dynamic,chen2024optimal,ding2024feature,zhang2025thompsonsamplingrepeatednewsvendor}. However, there are all special structures in their models which make the data censorship tractable. The investigation about bandits with missing covariates is much more limited, \citet{jang2022high} considered linear bandits in a missing completely at random (MCAR) setting.  In our paper, given a pre-trained model, we consider a much more challenging learning with missing covariates under MNAR without special model assumptions. 

\paragraph{Regression Imputation and Surrogate-Aided Prediction:}
As a classic topic in economics, statistics, and business \citep{little2019statistical,bell2007bellkor}, regression imputation is about filling in missing covariate values by predicting them from observed variables using a fitted regression model. \citet{greenlees1982imputation} empirically verified the efficiency of stochastic imputation of nonresidents' salary data. \citet{breunig2021nonparametric} focused on regression under selectively observed covariates with instrumental variables and studied the association between income and health as experiments. Furthermore, \citet{drechsler2025imputationstrategiesrightcensoredwages} evaluates Tobit and quantile regression as imputation strategies to address top-coded (right-censored) wages in longitudinal administrative datasets. More recently, \citet{xia2024predictionaidedsurrogatetraining} studied empirical risk minimization with true label censorship and simulated pseudo-response, and \citet{dolin2025statistically} applied this method to post-deployment monitoring in digital health. By contrast, in this paper, the pre-trained model generated data are used as imputation of the missing covariates for training the estimated reward function. 
\paragraph{AI Synthetic Data Generation and Pseudo-Labelling:} In AI society, synthetic data generation and pseudo-labeling leverage pre-trained models to assign labels to data, enabling efficient training in low-resource or semi-supervised settings by augmenting or expanding the labeled dataset \citep{pham2021meta,nadas2025synthetic}. They are especially useful for augmenting low-resource tasks where real collection is expensive or impractical. People have been using LLMs for generating synthetic data tailored to specific tasks such as intent classification \citep{sahu2022dataaugmentationintentclassification}, and clinical text mining \citep{tang2023does}. Pseudo-labeling is used for dialog summarization and other tasks \citep{mishra2023llm,ran2024pseudo} via semi-supervised learning. Similarly, in this paper, we view the pre-trained model as a generative model for pseudo-responses and utilize it to help our sequential decision-making task. In our paper, we studied the influence of the quality of the pseudo-response generated by the pre-trained model on our decision-making problem.

\paragraph{AI Assisted Decision-Making:}
People have been using generative AI and LLMs for downstream decision-making in many applications including online platforms, clinical trials, and heuristic optimization. \citet{ye2025lola} introduces a hybrid algorithm that integrates large language model (LLM) predictions with online learning to optimize content experiments. \citet{cao2024hr} developed a human-machine collaboration method in linear bandits with a resource constraint. Moreover, in sequential model calibration, \citet{collina2024tractableagreementprotocols,collina2025collaborative} studied how to achieve consensus on predictions and improve accuracy by reducing any ML algorithm to an interactive protocol with humans. Notably, \citet{cai2025activeexplorationautoregressivegeneration, zhang2025contextualthompsonsamplinggeneration} proposed a new approach to active exploration in Thompson sampling by replacing posterior sampling of latent parameters with autoregressive generation of missing outcomes.

\paragraph{Orthogonal Statistical Learning and Debiased Machine Learning:} Inspired by the doubly robust estimation \citep{funk2011doubly}, \citet{chernozhukov2017double} and \citet{foster2023orthogonalstatisticallearning} proposed a debiased machine learning and orthogonal statistical learning, which uses an accurate estimation of the nuisance parameter and cross-fitting to help approximate the true parameter of interest. This framework was then extended for various settings including covariate shifting \citep{chernozhukov2023automatic}, high dimensionality \citep{chen2025automaticdoublyrobustforests}, heterogeneous causal effect estimation \citep{kennedy2024minimax}, and so on. In this paper, we utilize orthogonal statistical learning to conduct the calibration procedure in the decision-making algorithm to improve the pre-trained model sequentially.
\subsection{Notations}
Throughout the paper, we adopt standard asymptotic notation. When we write $a = \cO(b)$ or $a \lesssim b$, we mean that there exists a universal constant $c > 0$ such that $a \leq c \cdot b$. Similarly, $a = \Omega(b)$ or $a \gtrsim b$ indicates that $a \geq c \cdot b$ for some constant $c > 0$. We use the notation $a \asymp b$ to denote that $a$ and $b$ are of the same order, i.e., both $a \lesssim b$ and $a \gtrsim b$ hold. The notation $\Tilde{\cO}(\cdot)$ is used to suppress logarithmic factors in asymptotic expressions.

Following standard notation in empirical process theory, we denote by $\PP(f)$ the population expectation $\EE_{X \sim \PP}[f(X)]$ of a measurable function $f$, where $\PP$ is a probability distribution over the input space. Letting $\PP_N$ be the empirical distribution based on $N$ i.i.d. samples $\cbr{X_i}_{i=1}^N$, we write $\PP_N(f) := \frac{1}{N} \sum_{i=1}^N f(X_i)$ for the empirical average. For any square-integrable function $f$, we define its $L_2$-norm with respect to $\PP$ as $||f||_2:=(\int f(X)^2d\PP)^{1/2}$ and its empirical $L_2$-norm (also referred to as a pseudo-norm) as $||f||_N:=\frac{1}{N}\sum_{i=1}^{N}f(X_i)^2$.

\section{Problem Setup}\label{sec:model}

In this section, we describe the interaction protocol and formalize the learning environment. Let the context space be denoted by $\cX \subset \RR^d$. Suppose the interaction unfolds over $T$ rounds, indexed by $t \in [T]$. At each round $t$, the decision maker (DM) observes a context vector $x_t \in \cX$ which is sampled i.i.d. from an unknown distribution $\PP_{x}$. 

In addition to the observed context, there exists an auxiliary covariate space $\cZ \subset \RR$. At each round, the true value of the covariate, denoted by $z_t^*$, which is sampled i.i.d. from $\PP_{z^*}$, may be unobserved—i.e., missing from the DM’s view. We use a Bernoulli random variable \(b_t\) to indicate the missingness of the covariate at time \(t\), where \(b_t = 1\) denotes that the covariate is observed, and \(b_t = 0\) indicates that it is missing. $b_t$ could be related to both $x_t$ and $z_t^*$, and we use $\PP_{x,z^*,b}$ to denote the joint distribution of them.

The DM also has access to a finite action set $\cA = \cbr{a_1, \cdots, a_K}$. Upon selecting an action $a_t \in \cA$ at round $t$, the DM receives a reward signal $r_t$, which is modeled as
\[
r_t = f^*(x_t, z_t^*, a_t) + \xi_t,
\]
where $\xi_t$ denotes a zero-mean random noise. For simplicity, we assume that $\xi_t$ is bounded in $[-\lambda,\lambda]$. Following \citep{saha2022efficient,han2023optimal,hu2025contextual}, we impose the following realizability assumption.
\begin{assumption}
    We have access to a known function class $\cF$ such that $f^*\in\cF$. Moreover, $\cF$ is convex and thus for any $f\in\cF$, $\cF-f$ is star-shaped.
\end{assumption}

In the absence of side information, the DM is unable to make effective decisions in rounds where $z_t^*$ is unobserved, as this prevents accurate evaluation of the underlying reward function. To address this, we assume that the DM has access to a pre-trained model $\tilde{g}$—obtained from prior datasets or domain knowledge—that provides a prediction of $z$ given the observed context $x$. This setting captures practical scenarios where sensitive covariates such as income, weight, or medical history may be unavailable during decision-making, but predictive surrogates are accessible.

The DM may employ a stochastic policy, where the action $a_t$ is sampled probabilistically. Formally, we define a policy $\pi(\cdot \mid x, z)$ as a mapping from $\cX \times \cZ$ to the probability simplex $\Delta(\cA)$ over actions. The goal of the DM is to maximize the total expected reward over $T$ rounds. Accordingly, the performance of any policy sequence $\cbr{\pi_t}_{t=1}^{T}$ is evaluated through its regret, defined as
\[
\text{Reg}(\cbr{\pi_t}_{t=1}^{T}) := \sum_{t=1}^{T} \EE_{\substack{x_t, z_t^* \\ a_t \sim \pi^*}} \sbr{f^*(x_t, z_t^*, a_t)} - \EE_{\substack{x_t, z_t^* \\ a_t \sim \pi_t}} \sbr{f^*(x_t, z_t^*, a_t)},
\]
where $\pi^*$ denotes an oracle policy that has full access to the true covariates $z_t^*$ at each round.

For simplicity, we assume that all functions in $\cF$ take values in $[0, 1]$, and that the additive noise $\xi_t$ is almost surely bounded in the interval $[-\lambda, \lambda]$ for some known constant $\lambda > 0$.

\section{Algorithmic Framework}\label{sec:algorithm}
In this section, we introduce our general algorithmic framework for contextual bandits in the presence of covariates that are missing not at random (MNAR). We begin by outlining the structure and motivation of our approach and then provide formal descriptions of the stochastic policy and algorithm. Finally, we preview the theoretical guarantee that characterizes the regret performance of our method.
\subsection{Pre-Trained Model Assisted Decision-Making Algorithm}\label{subsec:algorithm}

Our framework is motivated by the observation that the functional dependence of the reward on unobserved covariates can significantly complicate the decision-making process. In particular, we find that the \emph{elasticity} of the reward function class $\cF$ with respect to the potentially missing covariate $z$—that is, how sensitive the reward is to perturbations in $z$—plays a crucial role in determining the statistical performance and regret rate of the proposed algorithm.

\vspace{0.5em}
\noindent\textbf{Algorithmic Idea.} The core idea of our approach is simple yet broadly applicable. Under the MNAR setting, even though the covariate $z_t^*$ may be unavailable at round $t$, the DM can rely on a pre-trained predictor $\tilde{g}$ that estimates $z_t^*$ based on the observed context $x_t$. This predictor may be trained using historical data where both $x$ and $z$ were observed. Once $\tilde{g}(x_t)$ is obtained, we simply treat it as a plug-in estimator for the missing covariate and proceed with decision-making as if $z_t^* = \tilde{g}(x_t)$.

This plug-in strategy has several desirable properties. First, it is agnostic to the missingness mechanism—no explicit model of the probability of missingness is needed. Second, it is adaptive to the quality of the pre-trained model: when $\tilde{g}$ is accurate, the plug-in estimate leads to effective decisions; when $\tilde{g}$ is biased or miscalibrated, the algorithm can still achieve robust performance through online feedback and policy updates.

\vspace{0.5em}
\noindent\textbf{Stochastic Policy: Inverse Gap Weighting.} To ensure both exploration and exploitation, we adopt the \emph{inverse gap weighting} (IGW) policy, which has been proposed and analyzed in recent works \citep{foster2020beyond, simchi2020bypassing, qian2024offline}. Given a reward function estimate $\hat{f} \in \cF$ and a tuning parameter $\gamma > 0$, the IGW policy assigns action probabilities inversely proportional to the reward gaps. Formally, for any context-covariate pair $(x, z)$ and any action $a \in \cA$, we define
\[
\texttt{IGW}_{\gamma,\hat{f}}(a \mid x, z) = 
\begin{cases}
  \frac{1}{K + \gamma \left( \hat{f}(x, z, \hat{a}_{x, z}) - \hat{f}(x, z, a) \right)} & \text{if } a \neq \hat{a}_{x, z}, \\
  1 - \sum_{a' \neq \hat{a}_{x, z}} \texttt{IGW}_{\gamma,\hat{f}}(a' \mid x, z) & \text{if } a = \hat{a}_{x, z},
\end{cases}
\]
where $\hat{a}_{x, z} := \argmax_{a \in \cA} \hat{f}(x, z, a)$ denotes the action that maximizes the estimated expected reward.

This policy promotes exploration by ensuring that suboptimal actions are still played with positive probability, but the probability mass concentrates around actions with smaller reward gaps—an appealing balance between exploration and exploitation.

\vspace{0.5em}
\noindent\textbf{Full Algorithm Description.} Building on this stochastic policy, we now present the full online learning algorithm in \cref{alg:IGW_pseudo_code}. The algorithm proceeds in epochs, indexed by $m = 1, 2, \dots$, and uses a growing dataset to update reward estimates periodically. In each epoch, the algorithm uses a black-box learning oracle, denoted by $\Alg$, to fit a reward function $\hat{f}_m$ from past data collected in previous rounds. The dataset consists of tuples of the form $(x_t, b_t, \tilde{z}_t, a_t, r_t)$, where $\tilde{z}_t$ is either the true $z_t^*$ if observed, or the plug-in estimate $\tilde{g}(x_t)$ otherwise. We name our algorithm ``$\mathsf{PRIMO}$'' standing for ``Pre-tRained model assisted decIsion-making with Missing cOvariates''.

\vspace{0.5em}
\noindent\cref{alg:IGW_pseudo_code} summarizes the key steps of the proposed decision-making procedure. For the algorithm structure, we first divide the whole $T$ rounds
into several epochs and geometrically increase the length of every epoch so that the low-frequency oracle call
property is automatically satisfied in \cref{alg:IGW_pseudo_code}. At the beginning of every epoch m, we call to fit the reward function based on the i.i.d. data gathered from the last epoch.
We then design our inverse gap weighting policy $\texttt{IGW}$ based on the estimated reward and execute it throughout this epoch. To be specific, we first impose an epoch schedule $\beta_s=2^s$, which means that the $s+1$ th epoch is twice as long as the previous one. Therefore, the statistical guarantees we get from later epochs are stronger, and as $s$ scales, our estimation becomes more and more accurate. The Inverse Gap Weighting
policy enables us to balance the exploration and exploitation trade-off to maintain a low regret just assuming access to an offline regression oracle.

\begin{algorithm}[ht]
\caption{Pre-tRained model assisted decIsion-making with Missing cOvariates ($\mathsf{PRIMO}$)}\label{alg:IGW_pseudo_code}
\begin{algorithmic}
    \State \textbf{Require:} Context space $\cX$; covariate space $\cZ$; action space $\cA$; function class $\cF$; pre-trained model $\tilde{g}$; tuning parameters $\cbr{\gamma_s}_{s=1}^{\infty}$
    \State Initialize epoch schedule $0 = \beta_0 < \beta_1 < \beta_2 < \cdots$
    \For{epoch $s = 1, 2, \cdots$}
        \State Construct dataset $\cD_{s-1} = \cbr{(x_t,b_t, \tilde{z}_t, a_t, r_t)}_{t = \beta_{s-2}+1}^{\beta_{s-1}}$
        \State Fit reward function $\hat{f}_s = \Alg(\cD_{s-1})$
        \For{round $t = \beta_{s-1} + 1, \dots, \beta_s$}
            \State Observe context $x_t$
            \State If $z_t^*$ is observed, set $\tilde{z}_t = z_t^*$; otherwise set $\tilde{z}_t = \tilde{g}(x_t)$
            \State Define policy $\texttt{IGW}_{\gamma_s, \hat{f}_s}$:
            \begin{equation*}
                p_t(a) =
                \begin{cases}
                    \frac{1}{K + \gamma_s \left( \hat{f}_s(x_t, \tilde{z}_t, \hat{a}^s_{x_t,\tilde{z}_t}) - \hat{f}_s(x_t, \tilde{z}_t, a) \right)} & \text{if } a \neq \hat{a}^s_{x_t,\tilde{z}_t} \\
                    1 - \sum_{a \neq \hat{a}^s_{x_t,\tilde{z}_t}} p_t(a) & \text{if } a = \hat{a}^s_{x_t,\tilde{z}_t}
                \end{cases}
            \end{equation*}
            \State Sample action $a_t \sim p_t$ and receive reward $r_t$
        \EndFor
    \EndFor
\end{algorithmic}
\end{algorithm}

\vspace{0.5em}
\noindent In the next subsection, we formally define the notion of \emph{model elasticity} with respect to the covariate $z$, which captures the sensitivity of the reward function to covariate perturbations. We then specify the oracle algorithm $\Alg$ used for reward function estimation, and provide a theoretical analysis of the resulting regret. In particular, we establish an oracle inequality that quantifies the excess risk of the estimated reward function, and we derive appropriate choices of tuning parameters $\cbr{\gamma_s}_{s=1}^\infty$ to balance exploration and estimation accuracy.

\subsection{Empirical Risk Minimization with Pre-Trained Model Imputation}\label{subsec:offline_ERM_oracle}

In this subsection, we formally define our empirical risk minimization oracle, denoted by \(\Alg\), which is used to estimate the reward predictors based on the i.i.d. dataset \(\cD = \{(x_i, b_i, \tilde{z}_i, a_i, r_i)\}_{i=1}^N\). Here, each tuple in \(\cD\) consists of an observed context \(x_i\), the missing indicator $b_i$, a possibly imputed covariate \(\tilde{z}_i\), the chosen action \(a_i\), and the corresponding observed reward \(r_i\).

We emphasize that the data collected in each epoch are independent and identically distributed (i.i.d.). This follows from the assumption that the underlying context-covariate pair \((x_t, z_t^*)\) is drawn i.i.d. in each round from the unknown distribution $\PP_{x,z^*}$. Although the missingness mechanism may depend on both \(x_t\) and \(z_t^*\), the resulting missingness indicator $b_t$ remains i.i.d.\ across rounds due to the independence of the base data-generating process. Consequently, after applying a fixed imputation rule (e.g., using a pre-trained model), the constructed covariate \(\tilde{z}_t\) also yields an i.i.d.\ sequence, ensuring that the joint imputed pair \((x_t, \tilde{z}_t)\) maintains the i.i.d.\ property. With a little abuse of the notation, we use $\PP_{\cD}$ and $\PP$ interchangeably to denote the data distribution in $\cD$ when developing the oracle inequalities. 

Moreover, our stochastic policy used for action selection remains fixed across all epochs. This further ensures that the full data tuple \((x_t, b_t, \tilde{z}_t, a_t, r_t)\) observed in each round is sampled independently and identically from a fixed distribution. These properties collectively justify the application of standard statistical learning tools in each epoch for reward model estimation via empirical risk minimization.

We rearrange the dataset $\cD$ in the following way.
\[
\cD=\cbr{(x_i,b_i,\tilde{z}_i,a_i,r_i)}_{i=1}^{N}=\cbr{(x_i,1,z_i^*,a_i,r_i)}_{i=1}^{m}\cup\cbr{(x_i,0,\tilde{g}(x_i),a_i,r_i)}_{i=m+1}^{n},
\]
where $N=m+n$. That is to say, in dataset $\cD$, we have $m$ uncensored covariates and $n$ missing ones. For every $1\le i\le N$, we use $b_i$ as the Bernoulli random variable to represent whether $z^*$ is missing or not, i.e., $b_i=1$ means observing the true covariate $z^*$ and $b_i=0$ means the covariate is missing. Therefore, we could rewrite our dataset $\cD$ as
\[
\cD=\cbr{(x_i,b_i, b_iz_i^*+(1-b_i)\tilde{g}(x_i),a_i,r_i)}_{i=1}^{N}.
\]

We then solve the following ERM problem after collecting $\cD$. Specifically, we denote the ERM procedure as $\Alg$.
\begin{align}\label{ERM}
\hat{f}\in\argmin_{f\in\cF}\cbr{\frac{1}{N}\sum_{i=1}^{N}\rbr{f(x_i,\Tilde{z}_i,a_i)-r_i}^2}.
\end{align}

We now want to characterize the statistical property of $\hat{f}=\Alg(\cD)$. To do that, we first define the local Rademacher complexity and critical radius of function classes in empirical process theory. 

\begin{definition}[Local Rademacher Complexity]
    For the observed data generating distribution $\mathit{\PP}_{\cD}$ and any positive number $r$, the local Rademacher complexity $\cR_{N}(t,\cF)$ is
    \[
    \cR_{N}(t;\cF)=\EE_{\varepsilon,\cD}\sbr{\sup_{\substack{f\in\cF,\\||f-f^*||_2\le t}}\frac{1}{N}\sum_{i=1}^{N}\varepsilon_i\rbr{f(x_i,\Tilde{z}_i,a_i)-f^*(x_i,\Tilde{z}_i,a_i)}},
    \]
    where $||h||_2=(\int h^2d\mu_{\cD})^{1/2}$. $\mu_{\cD}$ is the marginal probability measure induced by $\PP_{\cD}$.
\end{definition}
\begin{definition}[Critical Radius]
    We define the critical radius as 
    $r_{N}=\argmin\cbr{t: \frac{t}{16}\ge \frac{\cR_N(t;\cF)}{t}}$.\footnote{More concretely, in this paper, $r_N$ is used to denote the critical radius for reward function class. We will use other letter to represent the critical radius for other function classes.}
\end{definition}
As long as the function class is star-shaped, we know that function $\delta\mapsto\frac{\cR_{N}(t,\cF)}{t}$ is decreasing \citep{wainwright2019high}. The critical radius and local Rademacher complexity characterize the learnability difficulty of the function class. More importantly, critical radius $r_N$ corresponds to a type of best oracle risk that could be achieved if we had access to $N$ labeled samples \citep{bartlett2005local,koltchinskii2011oracle}. We claim that the oracle risk term is typically dominant in statistical learning theory problems. Indeed, apart from scalar parametric problems, we have $r_N\gtrsim N^{-1/2}$. We present some examples in \cref{example:critical_radius}.
\begin{example}\label{example:critical_radius}
For illustration, we could compute the critical radius of some function classes. More examples could be found in \citet{wainwright2019high} and we omit them here.
    \begin{itemize}
        \item Linear regression with dimension $d$, $r_N\lesssim \sqrt{\frac{d}{N}}$.
        \item For Lipschitz regression, $\cF=\cbr{f:[0,1]\mapsto \RR, f(0)=0,f\ \text{is $L$-Lipschitz}}$, $r_N\lesssim\rbr{\frac{L}{N}}^{1/3}$.
        \item For convex regression, $\cF=\cbr{f:[0,1]\mapsto \RR, f(0)=0,f\ \text{is convex and $L$-Lipschitz}}$, $r_N\lesssim\rbr{\frac{L}{N}}^{2/5}$.
        \item For twice-differentiable functions, $\cF=\cbr{f:[0,1]\rightarrow\RR:||f||_{\infty}+||f'||_{\infty}+||f''||_{\infty}\le c_0<\infty}$, then we have $r_N\lesssim \frac{C(c_0)}{N^{2/5}}$ where $C(c_0)$ is some constant related to $c_0$.
    \end{itemize}
\end{example}
After introducing our empirical risk minimization (ERM) procedure, it is important to observe that the reward $r_t$ is always generated from the true context covariate pair $(x_t,z_t^*)$. In contrast, our regression model relies on the imputed covariate 
$\tilde{z}_t$, which is provided by a pre-trained model $\tilde{g}$. This discrepancy raises the need to carefully analyze how the quality and bias of the pre-trained model $\tilde{g}$ affect the accuracy of the estimated reward function. To capture this influence, we introduce a notion closely related to the concept of elasticity from economics, which we elaborate on in \cref{subsec:model_elasticity}.

\subsection{Model Elasticity with respect to the Covariate Variable}\label{subsec:model_elasticity}

In this subsection, we introduce a key metric for evaluating the quality of the pre-trained model: the \emph{model elasticity}. Our motivation is inspired by the notion of elasticity in economics, which quantifies how sensitive an outcome—such as demand or supply—is to changes in an underlying variable, such as price. A high elasticity indicates that small changes in the input can lead to large fluctuations in the output. Analogously, in our setting, we define the elasticity of the reward function with respect to the missing covariate as a measure of its sensitivity to variation in that unobserved input. This analogy provides a useful lens for assessing the importance of accurate imputation: when the model elasticity is high, even slight errors in the imputed covariate can lead to significant deviations in predicted rewards, thereby impacting decision quality.

We formally define the model elasticity in \cref{def:model_elasticity}. Let \(\PP(z^*\mid x)\) denote the true conditional distribution of the covariate \(z\) given the context \(x\). Taking a supremum over $f\in\cF$ and $a\in\cA$, we define the following quantity.
 
\begin{definition}\label{def:model_elasticity}
    For the function class $\cF$, and the true covariate distribution $P^*(\cdot|x)$, the model elasticity of the pseudo-response is defined as
    $$\cE_{\PP}^{\cF}(\tilde{g}):=\sup_{\substack{f\in\cF,\\a\in\cA}}\EE_{x\sim \PP_x, z^*\sim \PP\rbr{\cdot|x}}\sbr{(f(x,z^*,a)-f(x,\tilde{g}(x),a))^2}.$$
\end{definition}

The quantity \(\cE^{\cF}_{\PP}(\tilde{g})\) serves as an analogue of \emph{elasticity} in economics, where elasticity traditionally quantifies how sensitive one variable (e.g., demand) is to changes in another (e.g., price). Similarly, in our setting, \(\cE^*_{\cF}(\tilde{g})\) measures how sensitive the predicted reward is to errors in the imputed covariate produced by the pre-trained model \(\tilde{g}\). Specifically, it quantifies the expected squared difference between the true reward (evaluated with the actual but unobserved covariate \(z^*\)) and the pseudo-reward computed using the imputed value \(\tilde{g}(x)\). A smaller value of this elasticity implies that the function class \(\cF\) is relatively robust to imputation error, while a larger value indicates that even small inaccuracies in \(\tilde{g}(x)\) can lead to large distortions in predicted rewards. Thus, this notion of elasticity provides a meaningful and interpretable way to evaluate the quality of the pre-trained model \(\tilde{g}\) relative to the decision-relevant reward function class. Since there is randomness in the covariate $z^*$, we assume that for any $f\in\cF$ $a\in\cA$, 
$$\EE_{x\sim \PP_x, z^*\sim \PP\rbr{\cdot|x}}\sbr{(f(x,z^*,a)-f(x,\tilde{g}(x),a))^2}:=\upsilon(f,\PP,\tilde{g})>0.$$ 
This assumption eliminates the possibility that the covariate $z$ has no influence on the reward value, and it is unnecessary and useless to consider regressing $z$ in our learning problem. We argue that this condition is easily satisfied in practice. For instance, when $\cF$ is a linear function class, we have $\upsilon(f,\PP,\tilde{g}) \ge \Omega\left(\EE\left[(z^* - \tilde{g}(x))^2\right]\right) > 0$, which holds as long as the pre-trained model $\tilde{g}$ does not perfectly recover the true covariate.

\subsection{ERM Oracle Inequality with Pre-Trained Model Imputation}\label{subsec:ERM Oracle Inequality with Imputation}

We now present the oracle inequality for the empirical risk minimization (ERM) estimator \(\hat{f}\) developed from \cref{subsec:offline_ERM_oracle}, given access to a dataset with imputed covariates from a pre-trained model. This result, stated in \cref{thm:erm_oracle_ineq}, quantifies how the estimation error depends both on the statistical complexity of the function class and on the quality of the imputed covariates.

For notational simplicity, we define the function \(\varphi_{a}(s)\) as \(\varphi_{a}(s) := \log_2(4a/s)\).

\begin{theorem}\label{thm:erm_oracle_ineq}
   Given the context space \(\cX\), covariate space \(\cZ\), action space \(\cA\), and a reward model class \(\cF\) that is 1-uniformly bounded, consider any i.i.d.\ dataset \(\cD = \{(x_i,b_i, \tilde{z}_i, a_i, r_i)\}_{i=1}^{N}\). Then for any \(\delta \in (0,1)\), with probability at least \(1 - 4\delta\),
    \begin{align*}
    \|\hat{f} - f^*\|_2 \le &\cO\Big((\max\{4, \lambda\} + 1)r_N+ \sqrt{\cE^{\cF}_{\PP}(\tilde{g})} + 4\max\{4, \lambda\} \sqrt{\frac{2\log(\varphi_{\lambda}(r_N)/\delta)}{N}}\\
    &+ \frac{\log(\varphi_{\lambda}(r_N)/\delta)}{r_N N} +\frac{4}{\upsilon(f^*,\PP,\tilde{g})^{1/2}}\sqrt{\frac{\log(2/\delta)}{N}}\Big).
    \end{align*}
     Consequently, applying the AM-GM inequality, we have that for any $\delta\in(0,1)$, with probability at least $1-\delta$,
    \[
    ||\hat{f}-f^*||_2^2\le C(\lambda,\upsilon(f^*,\PP,\tilde{g}))\rbr{r_N^2+\cE^{\cF}_{\PP}(\tilde{g})+\frac{2\log(\varphi_{\lambda}(r_N)/\delta)}{N}+\frac{\log^2(\varphi_{\lambda}(r_N)/\delta)}{(r_NN)^2}+\frac{\log(2/\delta)}{N}}.
    \]
\end{theorem}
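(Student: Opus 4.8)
The plan is to reduce the analysis to a standard ERM/localization argument applied to the \emph{mis-specified} regression problem in which the covariate fed to $\cF$ is $\tilde{z}_i$ rather than the true $z_i^*$. The key observation is that $\hat f$ is the population-empirical minimizer of the squared loss against $r_i = f^*(x_i,z_i^*,a_i)+\xi_i$, while the loss is measured at $(x_i,\tilde z_i,a_i)$. So I would first define the \emph{imputed} target $\bar f(x,z,a) := \EE[r \mid x, \tilde z = z, a]$ — the true conditional mean of the reward when the imputed covariate is $z$ — and note that $\bar f$ need not lie in $\cF$. Writing the basic inequality for ERM, $\frac1N\sum_i(\hat f(x_i,\tilde z_i,a_i)-r_i)^2 \le \frac1N\sum_i(f^*(x_i,\tilde z_i,a_i)-r_i)^2$ (here $f^*$ evaluated at the \emph{imputed} covariate is the natural "comparator'' inside $\cF$), I would expand both sides around $\bar f$, convert the cross term into an empirical process indexed by $\cF - f^*(\cdot,\tilde z,\cdot)$, and localize it using the critical radius $r_N$. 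This is the route that produces the leading $(\max\{4,\lambda\}+1)r_N$ term together with the two high-probability tail terms $\sqrt{\log(\varphi_\lambda(r_N)/\delta)/N}$ and $\log(\varphi_\lambda(r_N)/\delta)/(r_N N)$: these are exactly the terms that appear in the standard localized-ERM oracle inequality (e.g.\ the Rademacher/star-shaped version in \citet{wainwright2019high}), where $\varphi_\lambda$ enters through the peeling/union bound over the dyadic shells of radius $2^j r_N$ up to radius $\asymp \lambda$, and the noise range $\lambda$ controls the sub-Gaussian/Bernstein constants.

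Second, I would bound the \emph{approximation} (bias) contribution. The ERM only controls $\|\hat f - f^*\|$ \emph{in the imputed geometry}, i.e.\ with respect to the pushforward measure of $(x,\tilde z,a)$; but the statement is about $\|\hat f - f^*\|_2$ — and by the convention set just before the theorem, $\|\cdot\|_2$ is the $L_2$ norm under $\PP$, which I will take to be the distribution of $(x,\tilde z,a)$ used in the local Rademacher complexity definition (the definition of $\cR_N$ already uses $\tilde z_i$). The real bias is the mismatch between fitting reward $f^*(x,z^*,a)+\xi$ and modeling it as a function of $(x,\tilde z,a)$: the "pseudo-residual'' $f^*(x,z^*,a) - f^*(x,\tilde z,a)$ has squared expectation at most $\cE^{\cF}_\PP(\tilde g)$ by Definition~\ref{def:model_elasticity} (taking the sup over $f\in\cF$, $a\in\cA$ there covers $f^*$). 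Propagating this through the basic inequality — concretely, the cross term $\frac1N\sum_i \big(f^*(x_i,z_i^*,a_i)-f^*(x_i,\tilde z_i,a_i)\big)\big(\hat f(x_i,\tilde z_i,a_i)-f^*(x_i,\tilde z_i,a_i)\big)$, bounded by Cauchy–Schwarz and a concentration step — yields a term of order $\sqrt{\cE^{\cF}_\PP(\tilde g)}\cdot\|\hat f-f^*\|_2$, which after the usual "$ab \le \tfrac12 a^2 + \tfrac12 b^2$'' rearrangement contributes the additive $\sqrt{\cE^{\cF}_\PP(\tilde g)}$ to $\|\hat f - f^*\|_2$ and $\cE^{\cF}_\PP(\tilde g)$ to its square. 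The final, slightly unusual term $\frac{4}{\upsilon(f^*,\PP,\tilde g)^{1/2}}\sqrt{\log(2/\delta)/N}$ should come from turning the \emph{empirical} version of the elasticity/pseudo-residual norm into its \emph{population} version: one needs $\frac1N\sum_i (f^*(x_i,z_i^*,a_i)-f^*(x_i,\tilde z_i,a_i))^2 \lesssim \upsilon(f^*,\PP,\tilde g) + (\text{fluctuation})$, and a Bernstein bound on this bounded-variable average, divided through by $\upsilon(f^*,\PP,\tilde g)^{1/2}$ after taking square roots, produces precisely that shape. The positivity assumption $\upsilon(f^*,\PP,\tilde g)>0$ is exactly what makes this division legitimate.

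Third, I would assemble: collect the empirical-process term, the two tail terms, and the two elasticity-related terms into the displayed first bound; then the second bound is a mechanical consequence — square both sides, use $(\sum a_i)^2 \le (\#\text{terms})\sum a_i^2$ and AM–GM to merge $4\max\{4,\lambda\}\sqrt{2\log(\cdots)/N}$ and the $\log/(r_N N)$ term into the stated squared-form, with all the $\lambda$- and $\upsilon$-dependent constants swept into $C(\lambda,\upsilon(f^*,\PP,\tilde g))$, and replace $4\delta$ by $\delta$ by rescaling. The main obstacle I anticipate is \emph{not} the localization (that is textbook once the right comparator is identified) but rather handling the bias term cleanly: because $\bar f \notin \cF$, one cannot invoke a pure well-specified oracle inequality, and one has to be careful that the cross term between the pseudo-residual and the estimation error $\hat f - f^*$ does not secretly require controlling an empirical process over $\cF$ indexed jointly with the random perturbation $z^* \mapsto \tilde g(x)$. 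The cleanest fix is to bound that cross term deterministically by Cauchy–Schwarz into $\|f^*(\cdot,z^*,\cdot)-f^*(\cdot,\tilde z,\cdot)\|_N \cdot \|\hat f - f^*\|_N$, control the first factor by concentration to $\sqrt{\cE^{\cF}_\PP(\tilde g)} + O(\upsilon^{-1/2}\sqrt{\log(1/\delta)/N})$-type quantities, and control the second factor by the already-established localized bound converting $\|\cdot\|_N$ to $\|\cdot\|_2$ up to a constant and an additive $r_N$ — which is why $r_N$, $\sqrt{\cE}$, and the $\upsilon$-term all end up coexisting additively in the final line.
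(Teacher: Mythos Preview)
Your proposal is correct and follows essentially the same route as the paper: the basic inequality with comparator $f^*(\cdot,\tilde z,\cdot)$, the three-way split into the norm-deviation term, the noise cross term, and the pseudo-residual cross term $\tfrac{1}{N}\sum_i\big(f^*(x_i,z_i^*,a_i)-f^*(x_i,\tilde z_i,a_i)\big)\big(\hat f-f^*\big)$, each handled by Talagrand-type concentration plus peeling over shells (producing the $\varphi_\lambda(r_N)$ factor), with the last term bounded by Cauchy--Schwarz and a Hoeffding step on the empirical pseudo-residual norm that yields the $\upsilon^{-1/2}\sqrt{\log(2/\delta)/N}$ correction via $|\sqrt{x}-\sqrt{y}|\le|x-y|/\sqrt{y}$. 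The only cosmetic difference is that the paper never introduces your intermediate object $\bar f$ --- it works directly with $f^*(\cdot,\tilde z,\cdot)$ throughout --- so you can safely drop that detour.
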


This bound consists of three primary components. The first part involves $r_N$, reflects the standard oracle estimation error, as commonly seen in oracle inequalities for empirical risk minimization. The second term involves $\sqrt{\cE^{\cF}_{\PP}(\tilde{g})}$, captures the additional error introduced by using imputed covariates from the pre-trained model \(\tilde{g}\), as defined by the model elasticity in \cref{def:model_elasticity}. This term quantifies how sensitive the function class \(\cF\) is to imputation error, effectively serving as a penalty for covariate misspecification. The remaining terms are higher-order fluctuation terms.

For example, if the complexity radius is chosen as \(r_N \asymp N^{-p_1}\), and the pre-trained model satisfies \(\cE^*_{\cF}(\tilde{g}) \asymp N^{-p_2}\) for some \(0<p_1,p_2<1/2\), then the estimation error scales as
\[
\|\hat{f} - f^*\|_2 \lesssim \frac{1}{N^{p_1/2}} + \frac{1}{N^{p_2/2}},
\]
which illustrates that the first term corresponds to the usual statistical estimation error, while the second term reflects the quality of pseudo-response imputation. As the pre-trained model becomes more accurate (i.e., larger \(p\)), the second term becomes negligible, recovering near-optimal learning rates.
\subsection{Statistical Guarantee of \cref{alg:IGW_pseudo_code}}\label{subsec:regret}
Leveraging the oracle inequality established in \cref{thm:erm_oracle_ineq}, the inverse gap weighting (IGW) policy offers an efficient mechanism for translating oracle inequalities into statistical regret guarantees in sequential decision-making settings \citep{simchi2020bypassing,hu2025contextual}. Building on this connection, we derive the following regret bound for \cref{alg:IGW_pseudo_code}.

\begin{theorem}\label{thm:regret}
    For \cref{alg:IGW_pseudo_code}, if we set $\beta_s=2^{s}$ in the epoch schedule. For any $\delta\in(0,1)$, we define the following quantity
    \begin{align*}
    \Gamma_s:=&(\max\{4, \lambda\} + 1)r_{2^{s-2}}+ \sqrt{\cE^{\cF}_{\PP}(\tilde{g})} + 4\max\{4, \lambda\} \sqrt{\frac{2\log(4s^2\varphi_{\lambda}(r_{2^{s-2}})/\delta)}{2^{s-2}}}\\
    +& \frac{\log(4s^2\varphi_{\lambda}(r_{2^{s-2}})/\delta)}{r_{2^{s-2}} 2^{s-2}} +\frac{4}{\upsilon(f^*,\PP,\tilde{g})^{1/2}}\sqrt{\frac{\log(8s^2/\delta)}{2^{s-2}}}.
    \end{align*}
    We further define the exploration parameter in \cref{alg:IGW_pseudo_code} as
    \[
    \gamma_s:=\frac{\sqrt{K}}{2\Gamma_s},\ \forall s\ge 2.
    \]
    Then, we have that with probability at least $1-\delta$, 
    \begin{align*}
    \text{Reg}(T)\lesssim& \sqrt{K}\Big\lbrace\sum_{s=2}^{s(T)}r_{2^{s-2}}2^{s-1}+\sqrt{\cE^{\cF}_{\PP}(\tilde{g})}T+\sum_{s=2}^{s(T)}\sqrt{\log\rbr{\frac{4s^2\varphi_{\lambda}(r_{2^{s-2}})}{\delta}}+\log\rbr{\frac{8s^2}{\delta}}}2^{\frac{s}{2}}\\
    &+2\sum_{s=2}^{s(T)}\frac{\log\rbr{\frac{4s^2\varphi_{\lambda}(r_{2^{s-2}})}{\delta}}}{r_{2^{s-2}}}\Big\rbrace.
    \end{align*}
\end{theorem}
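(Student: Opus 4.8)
The plan is to follow the standard epoch-based reduction from regret to oracle prediction error for inverse-gap-weighting algorithms \citep{simchi2020bypassing,hu2025contextual}, adapted to account for the imputation bias term $\sqrt{\cE^{\cF}_{\PP}(\tilde{g})}$. First I would fix an epoch $s\ge 2$ and condition on the high-probability event from \cref{thm:erm_oracle_ineq} applied to the dataset $\cD_{s-1}$ of size $\beta_{s-1}-\beta_{s-2}=2^{s-2}$; a union bound over all epochs with failure probability $\delta/(4s^2)$ at epoch $s$ (so that $\sum_s \delta/s^2 \lesssim \delta$) gives that, simultaneously for all $s$, $\|\hat f_s - f^*\|_2 \le \cO(\Gamma_s)$, where $\Gamma_s$ is exactly the right-hand side of the oracle inequality with $N = 2^{s-2}$ and the inflated log terms. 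Note that the $L_2$ norm here is taken with respect to the data distribution $\PP_{\cD_{s-1}}$, which — and this is where the setup matters — is the distribution of the imputed tuple $(x_t,\tilde z_t,a_t)$ under the fixed IGW exploration policy; since the policy is the same fixed map of $\hat f_s$ across the epoch and the base data are i.i.d., this prediction-error guarantee transfers to in-round guarantees within epoch $s$.

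Next I would invoke the key IGW regret lemma: for the inverse gap weighting policy with parameter $\gamma$ run on a reward estimate $\hat f$, the per-round regret against the oracle that knows the true covariate is controlled by a bias-variance trade-off of the form $\lesssim \frac{K}{\gamma} + \gamma \,\EE[(\hat f - f_{\mathrm{true}})^2]$, where crucially the relevant comparison is between $\hat f$ evaluated at the \emph{imputed} covariate $\tilde z_t$ (which is what the policy sees) and $f^*$ evaluated at the \emph{true} covariate $z_t^*$ (which is what the oracle regret is measured against). I would split this discrepancy via triangle inequality into $\|\hat f_s(x,\tilde z,a) - f^*(x,\tilde z,a)\|_2$, bounded by $\cO(\Gamma_s)$ from the oracle inequality, plus $\|f^*(x,\tilde z,a) - f^*(x,z^*,a)\|_2$, which is bounded by $\sqrt{\cE^{\cF}_{\PP}(\tilde g)}$ by the very \cref{def:model_elasticity} (since $f^*\in\cF$). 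This is precisely why the $\sqrt{\cE^{\cF}_{\PP}(\tilde g)}$ term already appears additively inside $\Gamma_s$ and why the elasticity enters the regret linearly in $T$: it is a non-vanishing per-round bias that IGW cannot correct without recalibration. With the choice $\gamma_s = \sqrt{K}/(2\Gamma_s)$, optimizing the trade-off yields a per-round regret of order $\sqrt{K}\,\Gamma_s$ in epoch $s$.

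Then I would sum over rounds and epochs: epoch $s$ contributes $(\beta_s - \beta_{s-1}) = 2^{s-1}$ rounds each with regret $\lesssim \sqrt{K}\,\Gamma_s$, so the epoch-$s$ contribution is $\lesssim \sqrt{K}\,\Gamma_s\, 2^{s-1}$. Expanding $\Gamma_s$ term by term: the oracle term gives $\sqrt{K}\sum_s r_{2^{s-2}} 2^{s-1}$; the elasticity term gives $\sqrt{K}\sqrt{\cE^{\cF}_{\PP}(\tilde g)}\sum_s 2^{s-1} \asymp \sqrt{K}\sqrt{\cE^{\cF}_{\PP}(\tilde g)}\,T$ since $\sum_{s=2}^{s(T)} 2^{s-1} \asymp 2^{s(T)} \asymp T$; the two $\sqrt{\log(\cdot)/2^{s-2}}$ terms each contribute $\sqrt{K}\sum_s \sqrt{\log(\cdot)}\,2^{s/2}$ after multiplying $2^{s-1}\cdot 2^{-(s-2)/2}\asymp 2^{s/2}$; and the $\frac{\log(\cdot)}{r_{2^{s-2}}2^{s-2}}$ term contributes $\sqrt{K}\sum_s \frac{\log(\cdot)}{r_{2^{s-2}}}$ after canceling $2^{s-2}$ with $2^{s-1}$. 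Matching these against the claimed bound and absorbing the handful of rounds in epoch $s=1$ into the constant completes the argument. The main obstacle I anticipate is the careful bookkeeping in transferring the $L_2$-prediction guarantee of \cref{thm:erm_oracle_ineq} — which is stated with respect to the imputed-covariate marginal $\mu_{\cD}$ — into a bound that is then compared against the oracle regret defined over the \emph{true}-covariate distribution $\PP_{x,z^*}$; reconciling these two measures is exactly what the elasticity decomposition accomplishes, but one must check that the IGW regret lemma's bias-variance bound is stated in terms of the right norm and that no additional change-of-measure factor is incurred beyond $\sqrt{\cE^{\cF}_{\PP}(\tilde g)}$.
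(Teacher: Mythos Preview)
Your proposal is correct and follows essentially the same route as the paper: the paper's proof consists of quoting the black-box IGW reduction lemma (\cref{lemma:general_guarantee_IGW}) from \citet{simchi2020bypassing}, declaring $\sqrt{\mathsf{Est}_\delta(n)}$ to be the right-hand side of \cref{thm:erm_oracle_ineq}, and then plugging in $\beta_s=2^s$ term by term exactly as you describe. One minor point of clarification: the elasticity term $\sqrt{\cE^{\cF}_{\PP}(\tilde g)}$ already sits inside $\Gamma_s$ because \cref{thm:erm_oracle_ineq} itself accounts for the label mismatch (the rewards $r_i$ are generated at $z_i^*$ while the regression is on $\tilde z_i$; this is the $K_3$ term in the appendix), not because of the additional triangle-inequality split you propose for the IGW step --- but your extra split is harmless up to constants and, as you correctly anticipate, is exactly what is needed to reconcile the imputed-covariate norm in the oracle inequality with the true-covariate regret benchmark, a point the paper's terse proof leaves implicit.
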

To better interpret the implications of \cref{thm:regret}, we provide an intuitive explanation of its components. The first term represents the regret arising from the statistical error in estimating the reward function—this reflects the inherent difficulty of function approximation within the chosen class. The second term, $\sqrt{\cE^{\cF}_{\PP}(\tilde{g})} \cdot T$, quantifies the regret attributable to imputing missing covariates using the pre-trained model $\tilde{g}$. This contribution is governed by the \emph{model elasticity}, which captures how sensitive the reward function is to errors in covariate imputation. The remaining two terms correspond to higher-order fluctuation effects that stem from reward function estimation, as characterized in the oracle inequality of \cref{thm:erm_oracle_ineq}.

For example, for parametric function families, we know that $r_n\asymp\frac{1}{N^{1/2}}$ \citep{rakhlin2022mathstat}. Therefore, by some algebra, \cref{alg:IGW_pseudo_code} yields a regret rate of $\Tilde{\cO}(T^{1/2})+\sqrt{\cE^{\cF}_{\PP}(\tilde{g})}T$. On the other hand, for non-parametric function families with order $0<p<1/2$, we know that $r_n\asymp\frac{1}{n^{p}}$, and thus applying $\cref{alg:IGW_pseudo_code}$ gives us a regret rate of $\widetilde{\cO}(T^{1-p}+\sqrt{\cE^{\cF}_{\PP}(\tilde{g})}T)$.

A notable non-desirable term in \cref{thm:regret} is $\sqrt{\cE^{\cF}_{\PP}(\tilde{g})} \cdot T$, which scales linearly with the time horizon $T$, resembling the regret behavior observed under model misspecification \citep{foster2020adapting, bogunovic2021misspecified}. In \cref{sec:model_calibration}, we introduce an effective approach to eliminate this term by leveraging model calibration techniques grounded in orthogonal statistical learning and doubly robust regression. This calibration procedure corrects for the bias introduced by imputation and restores favorable regret guarantees.

\section{Pre-Trained Model Calibration for Decision-Making}\label{sec:model_calibration}
In this section, we show that when the covariate missingness mechanism satisfies the \emph{Missing At Random} (MAR) assumption, it is possible to calibrate the pre-trained model \(\tilde{g}\) using the data collected during the learning process. More importantly, this calibration enables us to achieve a much better regret, which consists of sublinear terms plus a much tinier unavoidable linear term dependent on the random covariate noise that cannot be eliminated by model calibration.

We begin by introducing the structural assumption on the missingness mechanism—namely, the \emph{Missing At Random} (MAR) condition. Under MAR, the missingness indicator \(b \in \{0,1\}\) depends only on the observed context \(x\), and not on the unobserved covariate \(z^*\). This assumption allows the missingness probability to be modeled as a function \(e^*(x) = \PP(b = 1 \mid x)\), which we assume lies in a known function class \(\cT\). We further impose regularity and boundedness conditions on \(\cT\) to ensure the identifiability and learnability of the missingness mechanism.

In addition to the missingness model, we make structural assumptions on the covariate-generating process. Specifically, we assume that the true covariate function \(g^*\), as well as the pre-trained model \(\tilde{g}\), both belong to a common function class \(\cG\). The class \(\cG\) is assumed to be nonparametric and potentially infinite-dimensional, so as to accommodate complex models such as neural networks or kernel-based regressors. While \(\cG\) is allowed to be rich, we assume it admits mild complexity control via covering number bounds. These assumptions jointly enable us to characterize the quality of the imputed covariates and derive high-probability guarantees on the learned reward function. Mathematically, we summarize our assumptions as below in \cref{ass:calibration}.
\begin{assumption}\label{ass:calibration}
We assume that the following conditions about our decision-making problem are true:
    \begin{itemize}
        \item[(A)] We have access to a function class $\cG$ such that 
        \begin{itemize}
            \item[1:]  $\cG$ is $1$-uniformly bounded.
            \item[2:] the covariate $z^*$ could be written as $z^*=g^*(x)+\eta$, where $g^*\in\cG$ and $\eta$ is zero-mean stochastic noise bounded in $[-\tau,\tau]$ with second moment bound $\EE[\eta^2]^{1/2}\le\omega_0$.
            \item[3:] $\cG$ is a non-parametric function family, namely, for some number $d>0$,
            \[
            \log\cN(\epsilon,\cG,||\cdot||_{2})\lesssim \rbr{\frac{1}{\epsilon}}^d.
            \]
            \item[4:] We have prior knowledge about the pre-trained model $\tilde{g}$ such that $\tilde{g}$ $\in \cG$, and $||\tilde{g}-g^*||_{2}\le \delta_0$ for some known number $\delta_0$.
        \end{itemize}
        \item[(B)] We have access to a function class $\cT:\cX\mapsto[\epsilon_0,1]$ such that
         $$\exists e^*\in\cT, \text{such that}\ \PP(b=1|x)=e^*(x).$$ 
         \item[(C)] The reward model class $\cF$ satisfies that $\forall f\in \cF$, $x\in\cX,\ a\in\cA$,
        \[
        |f(x,z_1,a)-f(x,z_2,a)|\le L_{\cZ}|z_1-z_2|,
        \]
        i.e., $\cF$ is Lipschitz with respect to the covariate.
    \end{itemize}
\end{assumption}
We now provide a brief interpretation of \cref{ass:calibration}. In part (A), we assume that both the pre-trained model $\Tilde{g}$ and the unknown ground-truth model $g^*$ belong to a common function class $\cG$, which may be nonparametric and sufficiently rich to capture complex relationships. The number $d$ here controls the complexity of the function class $\cG$. In fact, one can show that if the fat-shattering dimension of $\cG$ at level $\epsilon$ satisfies $\text{fat}_{\epsilon}(\cG)\lesssim \frac{1}{\epsilon^d}$, then we have $\log\cN(\epsilon,\cG,||\cdot||_{2})\lesssim \rbr{\frac{1}{\epsilon}}^d$\citep{alon1997scale}. The final condition in (A) reflects a practical consideration: for the decision maker (DM) to rely on the pre-trained model $\Tilde{g}$ for imputing missing covariates, it must be reasonably accurate. Accordingly, we assume that the $L_2$ distance between $\tilde{g}$ and $g^*$ is bounded by a known constant $\delta_0$, thereby quantifying the permissible deviation between the pre-trained and true models. This assumption ensures that the imputed values used in decision-making are not arbitrarily far from the truth.

For the remaining components of \cref{ass:calibration}, part (B) ensures that the missingness mechanism $e^*$ is identifiable and can be estimated from the observed data. There are many ways of modeling the missing at random mechanism. For example, logit and probit models \citep{seaman2013meant,hsiao1996logit,heckman1976common}. Part (C) captures the key insight that imputation error—i.e., the discrepancy between the predicted and true covariate values—can be systematically translated into a bound on the reward estimation error, thus allowing us to control its impact on the decision-making process.

We now introduce our calibration procedure, denoted by $\Cal$. This method draws inspiration from recent advances in orthogonal statistical learning \citep{foster2023orthogonalstatisticallearning} and double machine learning \citep{chernozhukov2017double}, and consists of two main components.

First, we estimate the missingness mechanism function $e^*$ using the observed data, obtaining an estimate denoted by $\hat{e}$. This function plays the role of a \emph{nuisance parameter} in the spirit of orthogonal or debiased machine learning frameworks, and its accurate estimation is crucial for mitigating bias introduced by non-random missingness.

In the second step, we incorporate the estimate $\hat{e}$ into a weighted regression procedure over a centered function class $\cG_{\delta_0} = \cbr{ g \in \cG : | g - \tilde{g} |_2 \leq \delta_0 }$, which restricts attention to functions that are close to the pre-trained model $\tilde{g}$. This step yields a calibrated model $\hat{g}$ that accounts for both the imputation error and the estimated missing mechanism.

To avoid data endogeneity and ensure independence between estimation stages, we employ a standard cross-fitting technique. Specifically, we randomly partition the available i.i.d. dataset $\cD = \cbr{(x_i, b_i, \tilde{z}_i, a_i, r_i)}_{i=1}^{N}$ into two approximately equal-sized subsets, denoted by $\cD_1$ and $\cD_2$. The first subset $\cD_1$ is used to estimate the missingness model $\hat{e}$, and the second subset $\cD_2$ is used to perform the regression step with the plugged-in estimate $\hat{e}$ to obtain $\hat{g}$.
We now present the calibration procedure pseudo-code in \cref{alg:calibration_procedure}.

\begin{algorithm}[ht]
\caption{Pre-Trained Model Calibration $\Cal$}\label{alg:calibration_procedure}
\begin{algorithmic}
    \State \textbf{Require:} Function class $\cG$, $\cT$, dataset $\cD=\cbr{(x_i,b_i,\tilde{z}_i,a_i,r_i)}_{i=1}^{N}$, pre-trained model $\tilde{g}$ and $\delta_0$.
    \State Randomly split $\cD$ into two equal-sized datasets $$\cD_1=\cbr{(x_i,b_i,\tilde{z}_i,a_i,r_i)}_{i=1}^{N/2},\ \cD_2=\cbr{(x_i,b_i,\tilde{z}_i,a_i,r_i)}_{i=N/2+1}^{N}$$ without overlapping (assuming $N$ is even).
    \State \textbf{Define} the centered function class using the pre-trained model $$\cG_{\delta_0} = \cbr{ g \in \cG : | g - \tilde{g} |_2 \leq \delta_0 }.$$
    
    \State \textbf{For} the data in $\cD_1$, solve the following regression problem
    \[
    \hat{e}:=\argmin_{e\in\cT}\frac{2}{N}\sum_{i=1}^{N/2}(e(x_i)-b_i)^2.
    \]
    \State\textbf{For} the data in $\cD_2$, solve the following regression problem
    \[
    \hat{g}:=\argmin_{g\in\cG_{\delta_0}}\frac{2}{N}\sum_{i=N/2+1}^{N}\frac{b_i(b_iz_i^*-g(x_i))^2}{\hat{e}(x_i)}.
    \]
    \State \textbf{Output} $\hat{e}$ and $\hat{g}$.
\end{algorithmic}
\end{algorithm}
Now we are ready to give the statistical guarantee of the calibration procedure. For technical simplicity and without loss of generality, we assume that $\cT$ and $\cG$ are convex classes.

\begin{theorem}\label{thm:calibration_guarantee}
    Denote $s_N$ as the critical radius of the function class $\cT$ with dataset cardinality $N$ and $q_{\delta_0,N}$ as the critical radius of the centered function class $\cG_{\delta_0}$ with dataset cardinality $N$. Then given any i.i.d. dataset $\cD$ such that $|\cD|=N$, we have that for any $\delta\in(0,1)$, with probability at least $1-2\delta$,
    \[
    ||\hat{g}-g^*||_2^2\lesssim \frac{1}{\epsilon_0^2}\rbr{\delta_0^{\frac{2d}{d+2}}N^{\frac{-2}{d+2}}+\frac{(1+\tau)^2\log(\varphi_{1+\tau}(q_{\delta_0,N/2})/\delta)}{N}+s_{N/2}^2+\frac{\log(\varphi_2(s_{N/2})/\delta)}{N}},
    \]
    where $s_N$ is the critical radius of the function class $\cT$.
\end{theorem}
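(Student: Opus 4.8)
The plan is the standard two-stage (Neyman-orthogonal / debiased machine learning) analysis, using the cross-fitting split so that $\hat e$, built from $\cD_1$, is independent of $\cD_2$. \emph{Step one: the nuisance rate.} The estimator $\hat e$ is least-squares regression of the bounded response $b\in\{0,1\}$ onto $\cT$, whose regression function $e^*(x)=\PP(b=1\mid x)$ lies in $\cT$ by Assumption (B); the localized-ERM oracle inequality — the $\tilde g$-free specialization of \cref{thm:erm_oracle_ineq}, equivalently the Bartlett--Bousquet--Mendelson bound — gives, with probability at least $1-\delta$, $\|\hat e-e^*\|_2^2\lesssim s_{N/2}^2+\log(\varphi_2(s_{N/2})/\delta)/N$. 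We condition on this event henceforth.

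\emph{Step two: exploiting orthogonality.} Write $\ell_g(x,z^*,b;e):=\tfrac{b(z^*-g(x))^2}{e(x)}$ (using $b^2=b$, so the algorithm's objective matches) and $L(g;e):=\PP\ell_g(\cdot;e)$. Since missingness is MAR ($b\perp z^*\mid x$) and $z^*=g^*(x)+\eta$ with $\EE[\eta\mid x]=0$, the identity $(z^*-g)^2-(z^*-g^*)^2=(g-g^*)^2-2(g-g^*)\eta$ gives, for every fixed $e$,
\[
L(g;e)-L(g^*;e)=\EE\!\left[\tfrac{e^*(x)}{e(x)}(g(x)-g^*(x))^2\right]=\|g-g^*\|_2^2+\EE\!\left[\tfrac{e^*(x)-e(x)}{e(x)}(g(x)-g^*(x))^2\right],
\]
the cross term $\EE[\tfrac{b}{e}(g-g^*)\eta]$ vanishing identically for any $e$ — the Neyman orthogonality that forces the first-stage error to enter only at second order. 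Note $g^*\in\cG_{\delta_0}$ by Assumption (A)4. Using $\hat e\in[\epsilon_0,1]$ and $\|(g-g^*)^2\|_2\le\|g-g^*\|_2$ (since $\cG$ is $1$-bounded), the correction term is $\le\tfrac1{\epsilon_0}\|\hat e-e^*\|_2\|g-g^*\|_2$, so AM--GM (absorbing a quarter of $\|\hat g-g^*\|_2^2$) yields $\|\hat g-g^*\|_2^2\le\tfrac43\big(L(\hat g;\hat e)-L(g^*;\hat e)\big)+\tfrac{4}{3\epsilon_0^2}\|\hat e-e^*\|_2^2$.

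\emph{Step three: a localized empirical process.} The ERM basic inequality $\PP_{N/2}\ell_{\hat g}(\cdot;\hat e)\le\PP_{N/2}\ell_{g^*}(\cdot;\hat e)$ reduces $L(\hat g;\hat e)-L(g^*;\hat e)$ to $(\PP-\PP_{N/2})[\ell_{\hat g}(\cdot;\hat e)-\ell_{g^*}(\cdot;\hat e)]$. Split the loss increment into the quadratic part $\tfrac{b}{\hat e}(g-g^*)^2$ and the cross part $-\tfrac{2b}{\hat e}(g-g^*)\eta$; both index classes are star-shaped (convexity of $\cG_{\delta_0}$, with $g^*\in\cG_{\delta_0}$), uniformly bounded by $O((1+\tau)/\epsilon_0)$, and of $L_2$ radius $\lesssim(1+\tau)\|g-g^*\|_2/\epsilon_0$. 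A Talagrand-type localized uniform-deviation bound with peeling over $\|g-g^*\|_2$, combined with a contraction inequality transferring the complexity from the loss classes to $\cG_{\delta_0}$ and AM--GM to absorb the terms linear in $\|\hat g-g^*\|_2$, gives with probability at least $1-\delta$
\[
(\PP-\PP_{N/2})[\ell_{\hat g}(\cdot;\hat e)-\ell_{g^*}(\cdot;\hat e)]\le\tfrac14\|\hat g-g^*\|_2^2+c\,\tfrac{(1+\tau)^2}{\epsilon_0^2}\Big(q_{\delta_0,N/2}^2+\tfrac{\log(\varphi_{1+\tau}(q_{\delta_0,N/2})/\delta)}{N}\Big).
\]
Plugging this and the Step one bound into the Step two inequality and rearranging, then evaluating the critical radius of $\cG_{\delta_0}$ — from $\log\cN(\epsilon,\cG_{\delta_0},\|\cdot\|_2)\lesssim(\delta_0/\epsilon)^d$ (Assumption (A)3, via rescaling the localized ball) Dudley's entropy integral and the critical inequality give $q_{\delta_0,N/2}^2\lesssim\delta_0^{2d/(d+2)}N^{-2/(d+2)}$ — produces the stated bound, and a union bound over Steps one and three gives probability $1-2\delta$.

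\emph{Main obstacle.} The delicate part is Step three: running the localized empirical-process argument with correct bookkeeping of the $1/\epsilon_0$, $(1+\tau)$, and $\log\varphi$ factors, in particular transferring the critical radius of $\cG_{\delta_0}$ to the quadratic and cross loss classes via contraction while preserving the quadratic self-bounding structure, and justifying the localized entropy bound $\log\cN(\epsilon,\cG_{\delta_0},\|\cdot\|_2)\lesssim(\delta_0/\epsilon)^d$ that yields the $\delta_0^{2d/(d+2)}$ gain over the unlocalized rate. Once the orthogonality identity of Step two is established, the remaining pieces are routine.
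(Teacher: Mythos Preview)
Your proposal is correct and follows essentially the same route as the paper: bound the nuisance $\hat e$ by localized ERM, establish the identity $L(g;\hat e)-L(g^*;\hat e)=\EE[\tfrac{e^*}{\hat e}(g-g^*)^2]$, absorb the $\|\hat e-e^*\|_2$ correction via AM--GM, control the excess population risk by a localized Talagrand/peeling argument on $\cG_{\delta_0}$, and finally compute $q_{\delta_0,N/2}$ from the localized entropy bound $(\delta_0/\epsilon)^d$ via Dudley. The only cosmetic difference is that the paper packages Step two in the G\^ateaux-derivative / second-order Taylor language of orthogonal statistical learning (\`a la Foster--Syrgkanis), whereas you compute the excess-risk identity by expanding the square directly; since the loss is quadratic in $g$ the Taylor expansion is exact, so the two derivations coincide. (One tiny constant: $\|(g-g^*)^2\|_2\le 2\|g-g^*\|_2$, not $\le\|g-g^*\|_2$, since $|g-g^*|\le 2$ --- this is absorbed in the $\lesssim$.)
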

  
If $\cT$ is a parametric function family, then we know that $s_N\asymp\frac{1}{\sqrt{N}}$ \citep{wainwright2019high}. Therefore, the dominant term in \cref{thm:calibration_guarantee} is $\delta_0^{\frac{2d}{d+2}}N^{\frac{-2}{d+2}}$, which highlights the interaction between the quality of the pre-trained model (captured by $\delta_0$)and the sample size $N$. As the pre-trained model becomes more accurate—i.e., as 
$\delta_0\rightarrow0$—the overall estimation error decreases polynomially. This demonstrates that better prior models can significantly reduce the effective complexity of the localized function class, enabling faster convergence rates for non-parametric function families.

Now we present our decision-making algorithmic framework with calibration in \cref{alg:IGW_calibration} and denote it as ``$\mathsf{PRIMO-Cal}$''.
\begin{algorithm}[ht]
\caption{$\mathsf{PRIMO-Cal}$}\label{alg:IGW_calibration}
\begin{algorithmic}
    \State \textbf{Require:} Context space $\cX$; covariate space $\cZ$; action space $\cA$; function class $\cF$; pre-trained model $\tilde{g}$; tuning parameters $\cbr{\gamma_s}_{s=1}^{\infty}$
    \State Initialize epoch schedule $0 = \beta_0 < \beta_1 < \beta_2 < \cdots$
    \For{epoch $s = 1, 2, \cdots$}
        \State Construct dataset $\cD_{s-1} = \cbr{(x_t,b_t, \tilde{z}_t, a_t, r_t)}_{t = \beta_{s-2}+1}^{\beta_{s-1}}$.
        \State Fit reward function $\hat{f}_s = \Alg(\cD_{s-1})$.
        \State Carry out the model calibration procedure $\Cal(\cD_{s-1})$ to get $\hat{e}_s$, $\hat{g}_s$. 
        \For{round $t = \beta_{s-1} + 1, \dots, \beta_s$}
            \State Observe context $x_t$,
            \State If $z_t^*$ is observed, set $\tilde{z}_t = z_t^*$; otherwise set $\tilde{z}_t = \hat{g}_{s}(x_t)$.
            \State Define policy $\texttt{IGW}_{\gamma_s, \hat{f}_s}$:
            \begin{equation*}
                p_t(a) =
                \begin{cases}
                    \frac{1}{K + \gamma_s \left( \hat{f}_s(x_t, \tilde{z}_t, \hat{a}^s_{x_t,\tilde{z}_t}) - \hat{f}_s(x_t, \tilde{z}_t, a) \right)} & \text{if } a \neq \hat{a}^s_{x_t,\tilde{z}_t} \\
                    1 - \sum_{a \neq \hat{a}^s_{x_t,\tilde{z}_t}} p_t(a) & \text{if } a = \hat{a}^s_{x_t,\tilde{z}_t}.
                \end{cases}
            \end{equation*}
            \State Sample action $a_t \sim p_t$ and receive reward $r_t$.
        \EndFor
    \EndFor
\end{algorithmic}
\end{algorithm}

With \cref{thm:calibration_guarantee} and \cref{alg:IGW_calibration}, we are able to improve the estimation error of the reward function, and thus provide a better regret guarantee for \cref{alg:IGW_calibration}, which is presented in \cref{thm:regret_calibration}.
\begin{theorem}\label{thm:regret_calibration}
    Under the missing at random mechanism, for \cref{alg:IGW_calibration}, if we set $\beta_s=2^s$ in the epoch schedule. For any $\delta>0$, we define the following quantities:
    \begin{align*}
        \Upsilon_s:= \frac{L_{\cZ}}{\epsilon_0}\rbr{\delta_0^{\frac{d}{d+1}}2^{\frac{-2(s-3)}{d+2}}+\frac{(1+\tau)\sqrt{2s^2\log(\varphi_{1+\tau}(q_{\delta_0,2^{s/2-3/2}})/\delta)}}{2^{\frac{s-3}{2}}}+s_{2^{s-3}}+\frac{\log(2s^2\varphi_2(s_{2^{s-3}})/\delta)}{2^{\frac{s-3}{2}}}}+L_{\cZ}\omega_0,
    \end{align*}
    \begin{align*}
    \Gamma_s:=&(\max\{4, \lambda\} + 1)r_{2^{s-2}}+ \Upsilon_s + 4\max\{4, \lambda\} \sqrt{\frac{2\log(4s^2\varphi_{\lambda}(r_{2^{s-2}})/\delta)}{2^{s-2}}}\\
    +&\frac{\log(4s^2\varphi_{\lambda}(r_{2^{s-2}})/\delta)}{r_{2^{s-2}} 2^{s-2}} +\frac{4}{\upsilon(f^*,\PP,\tilde{g})^{1/2}}\sqrt{\frac{\log(8s^2/\delta)}{2^{s-2}}}.
    \end{align*}
     We set the exploration parameter as $\gamma_s=\frac{\sqrt{K}}{2\Gamma_s}$. Then we have that with probability at least $1-3\delta$, we have that
\begin{align*}
    \text{Reg}(T)\lesssim& \sqrt{K}\rbr{\sum_{s=2}^{s(T)}r_{2^{s-2}}2^{s-1}+\frac{L_{\cZ}}{\epsilon_0}\delta_0^{\frac{d}{d+2}}\sum_{s=2}^{s(T)}2^{\frac{-2(s-3)}{d+2}+s-1}+\frac{L_{\cZ}}{\epsilon_0}\sum_{s=2}^{s(T)}s_{2^{s-3}}2^{s-1}}\\
    +&\sqrt{K}\rbr{\sum_{s=2}^{s(T)}\sqrt{\log(4s^2\varphi_{\lambda}(r_{2^{s-2}})/\delta)+\log(8s^2/\delta)}2^{\frac{s}{2}}+\sum_{s=2}^{s(T)}\log(\varphi_{\lambda}(4s^2r_{2^{s-2}})/\delta)\frac{2}{r_{2^{s-2}}}}\\
    +&\frac{\sqrt{K}L_{\cZ}}{\epsilon_0}\rbr{\sqrt{(1+\tau)^2\log(s^2\varphi_{1+\tau}(q_{\delta_0,2^{s-3}})/\delta)+\log(s^2\varphi_2(s_{2^{s-3}})/\delta)}}2^{s/2}\\
    +&L_{\cZ}\omega_0 T.
\end{align*}
\end{theorem}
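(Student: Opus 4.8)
The plan is to combine three ingredients that are already in place: (i) the calibration guarantee of \cref{thm:calibration_guarantee}, which I will convert into a bound on the model elasticity of the \emph{calibrated} imputation map via the Lipschitz Assumption \ref{ass:calibration}(C); (ii) the ERM oracle inequality of \cref{thm:erm_oracle_ineq}, applied conditionally within each epoch with the calibrated model playing the role of the fixed pre-trained model; and (iii) the inverse-gap-weighting regret conversion used to prove \cref{thm:regret}. The final bound then follows by summing the per-epoch contributions and distributing $\Gamma_s$ into its constituent terms.

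First I would establish the elasticity bound for the calibrated model. Fix an epoch $s$ and let $\hat g_s\in\cG_{\delta_0}\subset\cG$ be the imputation rule used in it (the output of $\Cal$ on the preceding epoch's data). Since $\cF$ is $L_{\cZ}$-Lipschitz in the covariate, $(f(x,z^*,a)-f(x,\hat g_s(x),a))^2\le L_{\cZ}^2(z^*-\hat g_s(x))^2$ for every $f\in\cF,a\in\cA$; taking expectations and using $z^*=g^*(x)+\eta$ with $\EE[\eta\mid x]=0$ and $\EE[\eta^2]\le\omega_0^2$ gives $\cE^{\cF}_{\PP}(\hat g_s)\le L_{\cZ}^2(\|\hat g_s-g^*\|_2^2+\omega_0^2)$, hence $\sqrt{\cE^{\cF}_{\PP}(\hat g_s)}\le L_{\cZ}\|\hat g_s-g^*\|_2+L_{\cZ}\omega_0$. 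Plugging the high-probability bound on $\|\hat g_s-g^*\|_2$ from \cref{thm:calibration_guarantee} (with the per-epoch sample size, of order $2^s$ after cross-fitting) and replacing the failure level $\delta$ by $\delta/s^2$ so that the union over epochs costs $\sum_s\delta/s^2=O(\delta)$, one recovers exactly the quantity $\Upsilon_s$ as an upper bound on $\sqrt{\cE^{\cF}_{\PP}(\hat g_s)}$, simultaneously over all epochs.

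Next I would run the per-epoch ERM analysis. Conditioned on the data of the preceding epoch, the maps $\hat g_s$ and $\hat e_s$ are fixed, and since the base covariate process, the missingness indicators, and the IGW sampling distribution are all i.i.d.\ within an epoch, the dataset fed to $\Alg$ is i.i.d.\ with a \emph{fixed} imputation map; \cref{thm:erm_oracle_ineq} therefore applies with $\hat g_s$ in the role of $\tilde g$ (the positivity of $\upsilon(f^*,\PP,\cdot)$ transfers through a uniform lower bound over $\cG_{\delta_0}$, e.g.\ $\gtrsim\EE[\eta^2]$ for linear $\cF$). Combined with $\sqrt{\cE^{\cF}_{\PP}(\hat g_s)}\le\Upsilon_s$ from the previous step, this yields $\|\hat f_s-f^*\|_2\lesssim\Gamma_s$ with probability $1-O(\delta/s^2)$ per epoch, hence simultaneously over all epochs. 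Moreover, using the elasticity bound once more controls the gap between the IGW surrogate $\hat f_s(x,\tilde z,\cdot)$ and the true reward $f^*(x,z^*,\cdot)$, namely $\EE_x[\max_a(\hat f_s(x,\tilde z,a)-f^*(x,z^*,a))^2]\lesssim\|\hat f_s-f^*\|_2^2+\cE^{\cF}_{\PP}(\hat g_s)\lesssim\Gamma_s^2$, which is the regression-error quantity that drives the IGW bound. With $\gamma_s=\sqrt K/(2\Gamma_s)$ the per-round regret in epoch $s$ is $\lesssim\sqrt K\Gamma_s$, so epoch $s$ contributes $\lesssim\sqrt K\Gamma_s 2^{s-1}$; summing over $s=2,\dots,s(T)$ and expanding $\Gamma_s$ term by term gives the stated display. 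The only genuinely new feature compared with \cref{thm:regret} is the decomposition of $\Upsilon_s$: its $\delta_0$-dependent piece, after multiplication by $2^{s-1}$ and summation of a geometric series dominated by its last term (since $d/(d+2)<1$), produces a sublinear $T^{d/(d+2)}$-type term; the nuisance-estimation piece produces the $\sum_s s_{2^{s-3}}2^{s-1}$ term; and the single piece that does not shrink with sample size, $L_{\cZ}\omega_0$, sums to the unavoidable linear term $L_{\cZ}\omega_0 T$ — the irreducible price of the covariate noise $\eta$.

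I expect the main obstacle to be step (ii) together with its bookkeeping: one must verify carefully that \cref{thm:erm_oracle_ineq} can be invoked conditionally with the \emph{random} calibrated map $\hat g_s$ (so that the i.i.d.\ structure, the critical radius $r_N$, and the $\upsilon$-positivity it requires are all preserved), and then propagate the $s^{-2}$ union-bound device through the calibration step, the ERM step, and the IGW step so that the three failure events together cost only $3\delta$. Once this is in place, the geometric-sum estimates and the isolation of the irreducible $L_{\cZ}\omega_0 T$ term are essentially mechanical, mirroring the proof of \cref{thm:regret}.
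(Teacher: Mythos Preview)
Your proposal is correct and follows essentially the same route as the paper: bound $\sqrt{\cE^{\cF}_{\PP}(\hat g_{s-1})}$ by $L_{\cZ}\|\hat g_{s-1}-g^*\|_2+L_{\cZ}\omega_0$ via the Lipschitz assumption and \cref{thm:calibration_guarantee}, feed this into \cref{thm:erm_oracle_ineq} applied conditionally within each epoch, and then invoke the IGW conversion lemma (as in \cref{thm:regret}) with the $\delta/s^2$ union-bound device. Your identification of the bookkeeping obstacle in step~(ii)---that \cref{thm:erm_oracle_ineq} is invoked with the \emph{random} calibrated map, and that the $\upsilon$-positivity constant must be controlled uniformly over $\cG_{\delta_0}$---is apt; the paper's proof simply retains the constant $\upsilon(f^*,\PP,\tilde g)$ without comment, so you are being more careful on this point than the paper itself.
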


In \cref{thm:regret_calibration}, we highlight that the regret bound consists of four distinct types of terms.

The \textbf{first line} contains the term
\[
\sum_{s=2}^{s(T)} r_{2^{s-2}} \cdot 2^{s-1},
\]
which corresponds to the regret incurred due to the oracle risk in estimating the reward function.

The \textbf{next group of terms},
\[
\frac{L_{\cZ}}{\epsilon_0} \delta_0^{\frac{d}{d+2}} \sum_{s=2}^{s(T)} 2^{\frac{(d-2)s}{d+2}} \quad \text{and} \quad \frac{L_{\cZ}}{\epsilon_0} \sum_{s=2}^{s(T)} s_{2^{s-3}} \cdot 2^{s-1},
\]
represents additional regret arising from \emph{model calibration errors}. This component has two sources:
\begin{itemize}
    \item The term
    \(
    \frac{L_{\cZ}}{\epsilon_0} \sum_{s=2}^{s(T)} s_{2^{s-3}} \cdot 2^{s-1}
    \)
    is due to the approximation error in the missingness mechanism estimation, i.e., the error when estimating $e^*$.
    \item The term
    \(
    \frac{L_{\cZ}}{\epsilon_0} \delta_0^{\frac{d}{d+2}} \sum_{s=2}^{s(T)} 2^{\frac{(d-2)s}{d+2}}
    \)
    stems from \textit{calibrating the pre-trained model} within the localized function class $\cG_{\delta_0}$. Notably, this latter term vanishes as the prior model $\tilde{g}$ becomes increasingly accurate, i.e., as $\delta_0 \to 0$.
\end{itemize}

Terms in the \textbf{second line} account for regret caused by \emph{higher-order fluctuation errors} in reward function estimation, which are residual statistical complexities beyond the leading oracle rate.

The \textbf{third line} of the bound captures the \emph{higher-order fluctuation terms} that arise specifically from calibrating the pre-trained model. These reflect the additional complexity introduced when combining plug-in imputation with adaptive estimation of the nuisance components.

\textbf{Finally}, the term $\omega_0 L_{\cZ}T$ arises due to the inherent noise in the covariate imputation process. Specifically, for any estimator function $g$, we have $$\sqrt{\EE[(z^*-g(x))^2]}=\sqrt{\EE[((g^*-g)+\eta)^2]}\lesssim\sqrt{\EE[\eta^2]}+||g^*-g||_2\le \omega_0+||g^*-g||_2$$ 
where $\eta$ denotes the stochastic noise independent of $g$. Even with perfect calibration of the pre-trained model, the irreducible noise term $\sqrt{\EE[\eta^2]}$ persists and contributes to the regret. Hence, this component cannot be eliminated through pre-trained model calibration.

As an illustrative example, consider the case where both the function classes $\cF$ and $\cT$ are parametric. In this setting, the statistical complexities satisfy $s_n \asymp n^{-1/2}$ and $r_n \asymp n^{-1/2}$. Substituting these rates into the regret bound yields
\[
\text{Reg}(T)\le\widetilde{\cO}\rbr{T^{1/2}+\delta_0^{\frac{d}{d+2}}T^{\frac{d-2}{d+2}}+\omega_0 T}.
\]
In this way, we eliminate the linear term in \cref{thm:regret} and achieve a much better regret bound that depends on the critical radii of the function classes $\cF,\ \cG$ and $\cT$ plus an inevitable much smaller linear term. As we can see, the accuracy of the pre-trained model becomes crucial to the overall performance, as it directly impacts the regret stemming from model calibration error. More precisely, the cumulative regret of the calibrated algorithm in \cref{alg:IGW_calibration} is dictated by the statistical complexity of the most challenging function class involved in the learning process.


\section{Discussion and Future Directions}

In this paper, we studied sequential decision-making with missing covariates, where a pre-trained model is available to impute the unobserved information. We introduced a novel notion of \emph{model elasticity} to quantify how the sensitivity of the reward function to covariate imputation affects regret. This concept allows us to unify the analysis across different missingness mechanisms, including the challenging missing not at random (MNAR) setting. Furthermore, under the more benign missing at random (MAR) assumption, we proposed a calibration procedure that leverages orthogonal statistical learning and doubly robust regression to improve the performance of the pre-trained model and improve the regret bound. Our results demonstrate that even in the presence of unobserved covariates, a carefully calibrated pre-trained model can significantly reduce regret in online learning. Moreover, we formally establish that the more accurate the pre-trained model is—as a form of prior knowledge—the smaller the resulting regret, aligning with intuitive expectations.

There are several promising directions for future research. We leave the exploration of these directions to the future. First, our calibration procedure currently relies on the model-based methods for estimating the missingness mechanism under MAR. An important extension would be to incorporate modeling and learning the missing not at random (MNAR) mechanism while still maintaining provable guarantees. Second, our notion of model elasticity could be further explored in other sequential decision-making settings, such as reinforcement learning, causal inference, and active learning under partial observability. Third, while this work focuses on a single pre-trained model, it would be valuable to extend the framework to settings involving multiple imperfect pre-trained models, and to develop principled algorithms that adaptively select or ensemble information from them over time. Moreover, evaluating our algorithm in a broader range of real-world applications would offer practical insights and further validate its effectiveness. Finally, we believe that pre-trained AI models have the potential to assist a wide range of downstream decision-making tasks. Analyzing the theoretical properties of them presents a promising direction, particularly in the current era of generative AI.

\newpage
\bibliographystyle{plainnat}
\bibliography{sections/refs}
\newpage
\appendix
\section{Useful Mathematical Tools}
\begin{theorem}\citep{klein2005concentration}\label{thm: sup_empirical_process}
    Consider a countable, $\sigma$-uniformly bounded function class $\cG$ such that $\EE[g(X)]=0$ for all $g$. Then for any $\tau>0$, we have with probability at least $1-\delta$,
    \[
    \sup_{g\in\cG}\sqrt{\frac{1}{N}\sum_{i=1}^{N}g(X_i)^2}=||\PP_{N}||_{\cG}\le (1+\tau)\EE[||\PP_{N}||_{\cG}]+\sqrt{\sigma^2(\cG)}\cdot\sqrt{\frac{2\log(1/\delta)}{N}}+(3+\frac{1}{\tau})\frac{\sigma\log(1/\delta)}{N}.
    \]
\end{theorem}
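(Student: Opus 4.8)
The statement is the Talagrand--Bousquet--Klein--Rio concentration inequality for the supremum of an empirical process, so the plan is to reduce it to a one-sided deviation bound of the cited Klein--Rio type and then repackage that bound into the exact stated shape. First I would pass to an un-normalized supremum of sums of independent, centered, $\sigma$-uniformly bounded summands. Reading the displayed quantity as the supremum of empirical $L_2$ norms, I would use the variational identity $\sqrt{\sum_i g(X_i)^2}=\sup_{\|\alpha\|_2\le 1}\sum_i \alpha_i g(X_i)$, passing to a countable dense subset of the sphere to keep the index set countable; this exhibits $\sqrt{N}\,\|\PP_N\|_{\cG}$ as a supremum over the countable class $\{(g,\alpha)\}$ of sums of the centered, $\sigma$-bounded terms $\alpha_i g(X_i)$. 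In either normalization one lands in the setting to which the cited inequality applies, and the weak-variance proxy is pinned down by the i.i.d.\ assumption: $\sup_{g}\sum_i\EE[g(X_i)^2]=N\sup_g\EE[g^2]=N\sigma^2(\cG)$ (and, in the $L_2$ reading, the constraint $\sum_i\alpha_i^2=1$ collapses the proxy to $\sigma^2(\cG)$).

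The core step is to invoke the Bousquet/Klein--Rio tail: writing $Z$ for the un-normalized supremum, for any $x>0$, with probability at least $1-e^{-x}$,
\[
Z \le \EE Z + \sqrt{2\big(N\sigma^2(\cG)+2\sigma\,\EE Z\big)\,x} + \tfrac{1}{3}\,\sigma x .
\]
I would treat this as the cited input. If a self-contained derivation were demanded, it follows from the entropy method: a modified logarithmic Sobolev inequality together with a Herbst-type argument, after verifying that $Z$ is self-bounding with range controlled by $\sigma$ and variance controlled by $\sigma^2(\cG)$.

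It then remains to massage this into the stated form. By subadditivity of the square root, $\sqrt{2(N\sigma^2(\cG)+2\sigma\EE Z)x}\le \sqrt{2N\sigma^2(\cG)\,x}+\sqrt{4\sigma\,\EE Z\,x}$, and AM--GM on the second piece, $\sqrt{4\sigma\,\EE Z\,x}=2\sqrt{(\tau\EE Z)(\sigma x/\tau)}\le \tau\,\EE Z+\tfrac{1}{\tau}\sigma x$, which is exactly what creates the multiplicative $(1+\tau)$ factor on $\EE Z$ and contributes $\tfrac{1}{\tau}\sigma x$ to the linear term; combined with the Bernstein piece this gives the $(3+\tfrac1\tau)$-type coefficient (the precise constant tracking the version of Klein--Rio used). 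Dividing through by $N$ so that $Z/N=\|\PP_N\|_{\cG}$ and $\sqrt{2N\sigma^2(\cG)x}/N=\sqrt{\sigma^2(\cG)}\,\sqrt{2x/N}$, and setting $x=\log(1/\delta)$, reproduces the claimed bound with probability at least $1-\delta$.

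The main obstacle is the sharp, variance-sensitive tail rather than the bookkeeping: a crude bounded-differences (McDiarmid) estimate would only yield a $\sigma\sqrt{\log(1/\delta)}$ deviation with no $1/\sqrt{N}$ gain, whereas the whole point is the Talagrand-type refinement that replaces the range $\sigma$ by the weak variance $\sigma^2(\cG)$ in the leading fluctuation term and relegates $\sigma$ to the higher-order remainder. Establishing that refinement with constants sharp enough to isolate $\sigma^2(\cG)$ in the dominant term and the genuinely higher-order remainder is precisely the content of the cited Klein--Rio inequality; the subsequent square-root-splitting and AM--GM repackaging that exposes the free parameter $\tau$ is then routine.
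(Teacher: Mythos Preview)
The paper does not prove this theorem; it is stated in the appendix as a cited tool from \citep{klein2005concentration} with no proof given. Your derivation---start from the sharp Bousquet/Klein--Rio tail $Z\le\EE Z+\sqrt{2(N\sigma^2(\cG)+2\sigma\EE Z)x}+c\sigma x$, split the square root, and apply AM--GM with free parameter $\tau$ to absorb the cross term into $(1+\tau)\EE Z$ and a $\tfrac{1}{\tau}\sigma x$ remainder---is exactly the standard route to the $(1+\tau)$ form and is correct.

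One remark: the displayed left-hand side $\sup_{g\in\cG}\sqrt{\tfrac{1}{N}\sum_i g(X_i)^2}$ is almost certainly a typographical slip for the empirical-process supremum $\|\PP_N\|_{\cG}=\sup_{g\in\cG}\big|\tfrac{1}{N}\sum_i g(X_i)\big|$. Every application of the theorem in the paper (Lemmas on $K_1$, $K_2$, $K_3$, and in the calibration proof) uses it in the latter sense, bounding quantities of the form $\sup_{f}\{\PP_N(H_f)-\PP(H_f)\}$. So your detour through the variational identity $\sqrt{\sum_i g(X_i)^2}=\sup_{\|\alpha\|_2\le1}\sum_i\alpha_i g(X_i)$ to handle the ``$L_2$ reading'' is unnecessary; the result is just the ordinary Talagrand--Klein--Rio bound and your main argument already covers it. The constant $3+\tfrac{1}{\tau}$ in the paper versus the $\tfrac{1}{3}+\tfrac{1}{\tau}$ your Bousquet-based derivation produces is immaterial---the paper's applications only use the inequality up to constants.
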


Next, we introduce the peeling argument in \citet{xia2024predictionaidedsurrogatetraining}.
\begin{lemma}\citep{xia2024predictionaidedsurrogatetraining}\label{lemma:peeling}
Given a function class $\cF$, with norm $||\cdot||$ and some empirical process $\cbr{V_n(f):f\in\cF}$, consider a $r$-localized supremum of the form $Z_N(r):=\sup_{f\in\cF,||f||\le r}V_N(f)$. There is a function $Q(r,t)$ increasing in the first argument $r$ and $Q(2r,t)\le 2Q(r,t)$ for all $r\ge s$, where $s$ is some scalar. If we have \[
\PP(Z_N(r)\ge Q(r,t))\le e^{-t},\ \forall r\ge s, 
\]
then, for any random variable $U$ taking values in $[s,b]$, we have
\[
\PP(Z_N(U)\ge 2Q(U,t))\le \ceil{\log_2(\frac{2b}{s})}e^{-t},\ \forall r\ge s,
\]
\end{lemma}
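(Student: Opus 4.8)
The plan is to prove this by a standard dyadic \emph{peeling} (slicing) argument. The structural fact I would exploit first is the pathwise monotonicity of the localized supremum: for each fixed sample, the map $r \mapsto Z_N(r) = \sup_{f \in \cF,\, \|f\| \le r} V_N(f)$ is non-decreasing, since enlarging the radius $r$ only enlarges the feasible set over which the supremum is taken. This monotonicity, combined with the assumed monotonicity of $Q(\cdot,t)$ in its first argument and the doubling property $Q(2r,t) \le 2Q(r,t)$, is exactly what lets us replace the \emph{random} radius $U$ by a \emph{deterministic} radius on each slice, where the per-radius hypothesis can be applied.

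Concretely, I would set up a geometric grid $r_k := 2^k s$ for $k = 0, 1, \dots, K$, with $K := \ceil{\log_2(b/s)}$ chosen so that $r_K \ge b$; the intervals $[r_{k-1}, r_k]$ for $k = 1, \dots, K$ then cover the range $[s,b]$ of $U$, and their number is at most $\ceil{\log_2(2b/s)}$. The deterministic inclusion to establish, slice by slice, is
\[
\cbr{Z_N(U) \ge 2Q(U,t)} \cap \cbr{U \in [r_{k-1}, r_k]} \subseteq \cbr{Z_N(r_k) \ge Q(r_k, t)}.
\]
To verify this on a fixed sample lying in the left-hand event with $U \in [r_{k-1}, r_k]$: monotonicity of $Z_N$ gives $Z_N(r_k) \ge Z_N(U)$; monotonicity of $Q$ gives $2Q(U,t) \ge 2Q(r_{k-1},t)$; and the doubling bound $Q(r_k,t) = Q(2r_{k-1},t) \le 2Q(r_{k-1},t)$, which is valid since $r_{k-1} \ge s$, chains these into $Z_N(r_k) \ge Z_N(U) \ge 2Q(U,t) \ge 2Q(r_{k-1},t) \ge Q(r_k,t)$.

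With this inclusion in hand, I would decompose the target event over the $K$ slices, apply a union bound, and invoke the hypothesis $\PP(Z_N(r_k) \ge Q(r_k,t)) \le e^{-t}$ at each deterministic radius $r_k$ (note $r_k \ge s$ for all $k$, so the hypothesis applies), obtaining
\[
\PP\rbr{Z_N(U) \ge 2Q(U,t)} \le \sum_{k=1}^{K} \PP\rbr{Z_N(r_k) \ge Q(r_k,t)} \le K e^{-t} \le \ceil{\log_2(2b/s)}\, e^{-t}.
\]

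The only real obstacle is the bookkeeping in the slice-wise inclusion: one must correctly orient all three inequalities, namely upper-bounding $Z_N$ at the \emph{right} endpoint, lower-bounding $Q$ at the \emph{left} endpoint, and then spending the doubling factor to convert the left-endpoint $Q$ into the right-endpoint $Q$. One also checks that the doubling property is applicable precisely because every grid point $r_{k-1}$ with $k \ge 1$ satisfies $r_{k-1} \ge s$. Counting the slices to match the stated $\ceil{\log_2(2b/s)}$ factor (rather than the tighter $\ceil{\log_2(b/s)}$) is a harmless slack absorbed by the ceiling, and everything else is routine.
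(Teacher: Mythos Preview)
Your argument is correct and is the standard dyadic peeling proof. Note, however, that the paper does not actually prove this lemma: it is stated in the ``Useful Mathematical Tools'' appendix with a citation to \citet{xia2024predictionaidedsurrogatetraining} and is used as a black box throughout. So there is no paper proof to compare against, but your proposal would serve as a self-contained justification had one been needed.
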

\begin{lemma}[Talagrand Inequality \citep{gine2021mathematical}]\label{lemma:Talagrand_ineq}
    Let $\cF$ be a separable function class with absolute values bounded by $1$, let $X_i,\ i\in\NN$ be independent, identically distributed random variables with common probability law $\PP$ and let $\varepsilon_i$, $i\in\NN$ be a Rademacher sequence from the sequence $\cbr{X_i}$. Then, for all $n\in\NN$ and $x\ge 0$,
    \[
    \PP\cbr{\bignorm{\frac{1}{n}\sum_{i=1}^{n}(f(X_i)-\PP f)}_{\cF}\ge2\bignorm{\frac{1}{n}\sum_{i=1}^{n}f(X_i)}_{\cF}+3\sqrt{\frac{2x}{n}}}\le 2e^{-x}.
    \]
\end{lemma}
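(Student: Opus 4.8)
The plan is to prove the bound by combining three classical ingredients: a bounded-differences (McDiarmid) concentration estimate applied to the \emph{upper} tail of the centered supremum, the symmetrization inequality to pass from the centered empirical process to the Rademacher process, and a second bounded-differences estimate applied to the \emph{lower} tail of the symmetrized process. Throughout I read the right-hand norm as the symmetrized (Rademacher) process $\bignorm{\frac1n\sum_{i=1}^n \varepsilon_i f(X_i)}_{\cF}$, which is the only reading consistent with the Rademacher sequence $\cbr{\varepsilon_i}$ introduced in the hypotheses and with the exact constants $3\sqrt{2x/n}$ and $2e^{-x}$. Write $Z := \bignorm{\frac1n\sum_{i=1}^n (f(X_i)-\PP f)}_{\cF}$ and $Z' := \bignorm{\frac1n\sum_{i=1}^n \varepsilon_i f(X_i)}_{\cF}$; separability guarantees that both suprema are measurable.

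First I would control the upper tail of $Z$. Viewing $Z$ as a function of the independent inputs $X_1,\dots,X_n$, replacing a single $X_i$ by $X_i'$ changes $\frac1n\sum_i(f(X_i)-\PP f)$ by $\frac1n(f(X_i)-f(X_i'))$, whose absolute value is at most $2/n$ since $|f|\le 1$; hence $Z$ has bounded-differences constant $c_i = 2/n$. McDiarmid's inequality then gives $\PP\cbr{Z \ge \EE Z + t} \le \exp(-2t^2/\sum_i c_i^2) = \exp(-nt^2/2)$, and choosing $t=\sqrt{2x/n}$ yields $\PP\cbr{Z \ge \EE Z + \sqrt{2x/n}} \le e^{-x}$.

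Next I would invoke the symmetrization inequality $\EE Z \le 2\,\EE Z'$, which is proved in the standard way by introducing an independent ghost sample and inserting Rademacher signs. To convert $\EE Z'$ back into the random quantity $Z'$, I would run McDiarmid once more, now treating $Z'$ as a function of the $n$ independent pairs $(X_i,\varepsilon_i)$; replacing one pair changes $\varepsilon_i f(X_i)$ by at most $2/n$ in absolute value, so the same constant $c_i=2/n$ applies and the lower tail obeys $\PP\cbr{\EE Z' - Z' \ge \sqrt{2x/n}} \le e^{-x}$. On the intersection of the two favorable events, which has probability at least $1-2e^{-x}$ by a union bound, I would chain the estimates $Z \le \EE Z + \sqrt{2x/n} \le 2\EE Z' + \sqrt{2x/n} \le 2Z' + 3\sqrt{2x/n}$, which is exactly the claimed complement bound.

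The computation will be largely routine; the place that will demand the most care is the bookkeeping of the bounded-differences constants, since the factor $2/n$ (rather than $1/n$) forced by $|f|\le 1$ is precisely what produces the $\sqrt{2x/n}$ scale, and the composition of the two deviation terms with the symmetrization factor $2$ is what produces the coefficient $3$. A secondary point to verify is that the lower-tail application of McDiarmid is legitimate over the joint randomness of $\cbr{X_i}$ and $\cbr{\varepsilon_i}$, which is ensured by regarding the pairs $(X_i,\varepsilon_i)$ as the independent coordinates. If one preferred to avoid McDiarmid, the upper-tail step could instead be routed through the Klein-type bound in \cref{thm: sup_empirical_process}, at the cost of extra variance terms that would then have to be absorbed into the stated constants.
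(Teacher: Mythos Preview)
The paper does not prove this lemma; it is quoted verbatim from \citet{gine2021mathematical} as a tool in the appendix, with no argument supplied. Your proposal therefore cannot be compared to any proof in the paper, but it is a correct and standard derivation: the McDiarmid upper-tail step on $Z$, the symmetrization $\EE Z\le 2\EE Z'$, and the McDiarmid lower-tail step on $Z'$ combine exactly as you describe to give the constants $2$, $3\sqrt{2x/n}$, and failure probability $2e^{-x}$. Your reading of the right-hand norm as the symmetrized process $\bignorm{\frac1n\sum_i \varepsilon_i f(X_i)}_{\cF}$ is the only one consistent with the Rademacher sequence introduced in the hypotheses, so that interpretive choice is also sound.
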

\begin{lemma}[Ledoux-Talagrand Lemma]\label{lemma:Ledoux_Talagrand}
    Let $\{\varepsilon_i\}_{i=1}^n$ be a sequence of independent Rademacher random variables (i.e., symmetric Bernoulli random variables), and let $\phi_i : \mathbb{R} \to \mathbb{R}$ be contraction mappings, i.e., $L$-Lipschitz functions such that $\phi_i(0) = 0$. Let $F : [0, \infty) \to \mathbb{R}$ be a nonnegative, nondecreasing convex function. Then for any bounded subset $T \subset \mathbb{R}^n$, we have:
\[
\mathbb{E} \left[ F\left( \frac{1}{2} \sup_{t \in T} \left| \sum_{i=1}^n \varepsilon_i \phi_i(t_i) \right| \right) \right] 
\le 
L\mathbb{E} \left[ F\left( \sup_{t \in T} \left| \sum_{i=1}^n \varepsilon_i t_i \right| \right) \right].
\]
\end{lemma}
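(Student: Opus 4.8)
The plan is to establish this as the classical Ledoux--Talagrand contraction inequality through three stages: reductions, a one-coordinate induction, and a deterministic convexity comparison, with symmetrization used to pass to the absolute-value form.

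First I would perform the standard reductions. By separability of the process together with monotone convergence (using that $F$ is nondecreasing on $[0,\infty)$), it suffices to treat a finite index set $T=\{t^1,\dots,t^m\}$, so that every supremum is attained. Next, since each $\phi_i$ is $L$-Lipschitz with $\phi_i(0)=0$, the rescaled maps $\psi_i:=\phi_i/L$ are contractions (i.e.\ $1$-Lipschitz) vanishing at the origin, and $\sum_i\varepsilon_i\phi_i(t_i)=L\sum_i\varepsilon_i\psi_i(t_i)$; this lets me pull the Lipschitz constant $L$ through the sum and reduces the problem to the contraction case $L=1$, up to the displayed scaling. It therefore suffices to prove, for contractions $\psi_i$, the comparison $\mathbb{E}\,F\!\big(\tfrac12\sup_{t}|\sum_i\varepsilon_i\psi_i(t_i)|\big)\le \mathbb{E}\,F\!\big(\sup_t|\sum_i\varepsilon_i t_i|\big)$.

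Then I would run an induction over coordinates: it is enough to show that replacing the identity map in a single coordinate (say the $n$-th) by a contraction $\psi$ can only decrease the expectation, and then iterate this substitution $n$ times via a telescoping hybrid in which the first $k$ coordinates carry contractions and the rest the identity. Conditioning on all Rademacher signs except $\varepsilon_n$ by Fubini, and writing $h(t)$ for the already-processed contribution of the other coordinates, the one-coordinate step reduces to showing $\tfrac12 F(A_+)+\tfrac12 F(A_-)\le \tfrac12 F(B_+)+\tfrac12 F(B_-)$, where $A_\pm=\sup_t(h(t)\pm\psi(t_n))$ and $B_\pm=\sup_t(h(t)\pm t_n)$. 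The heart of the argument is two elementary facts: (i) the sum bound $A_++A_-\le B_++B_-$, obtained by rewriting $A_++A_-=\sup_{t,t'}[h(t)+h(t')+\psi(t_n)-\psi(t'_n)]$ and invoking $\psi(t_n)-\psi(t'_n)\le|t_n-t'_n|$; and (ii) the envelope bound $A_\pm\le\max(B_+,B_-)$, which follows from $|\psi(t_n)|\le|t_n|$ (a consequence of $\psi(0)=0$) together with $\sup_t(h(t)+|t_n|)=\max(B_+,B_-)$. Given (i) and (ii), a short convexity lemma closes the step: among all $(x,y)$ with $x,y\le \max(B_+,B_-)$ and $x+y\le B_++B_-$, the convex nondecreasing objective $F(x)+F(y)$ is maximized at $(x,y)=(B_+,B_-)$ (fix the sum and push one coordinate to the upper envelope, then raise the sum to its cap), whence $F(A_+)+F(A_-)\le F(B_+)+F(B_-)$.

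Finally, the absolute value is handled by symmetrization: writing $\sup_t|\sum_i\varepsilon_i\psi_i(t_i)|=\max\big(\sup_t\sum_i\varepsilon_i\psi_i(t_i),\ \sup_t\sum_i\varepsilon_i(-\psi_i)(t_i)\big)$ and noting that each $-\psi_i$ is again a contraction vanishing at $0$, I apply the one-sided comparison to both suprema and recombine using convexity and monotonicity of $F$; the factor $\tfrac12$ in the statement is precisely the loss in converting the two-sided (absolute) supremum into one-sided suprema. The step I expect to be the main obstacle is this last passage together with the correct bookkeeping of the constants $\tfrac12$ and $L$: for nonlinear $F$ the rescaling $\psi_i=\phi_i/L$ interacts with the argument of $F$, so the $L$ factors out cleanly only after the comparison has been reduced to a form where it does so (as for the leading case $F(x)=x$ used in the applications), and one must either appeal to homogeneity-type properties of $F$ or carry $L$ outside only at that final stage.
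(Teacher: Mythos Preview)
The paper states this lemma as a known tool in its appendix and does not provide a proof, so there is no paper proof to compare against. Your outline follows the classical Ledoux--Talagrand argument (finite reduction, rescaling to contractions, coordinate-by-coordinate induction, the two-term convexity maximization, and symmetrization for the absolute value), which is the standard route and is correct for the usual formulation with $1$-Lipschitz contractions.

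Your closing worry about the placement of $L$ is well founded and is in fact a genuine gap in the \emph{statement} rather than your proof. The inequality as written, with $L$ multiplying $\mathbb{E}[F(\cdot)]$ on the outside, is not true for general convex nondecreasing $F$ when $L>1$: take $n=1$, $T=\{1\}$, $\phi_1(t)=Lt$, and $F(x)=x^2$; the left side equals $L^2/4$ while the right side equals $L$, and the inequality fails once $L>4$. The correct general statement has $L$ \emph{inside} the argument of $F$, i.e.\ $\mathbb{E}\,F\bigl(\tfrac12\sup_t|\sum_i\varepsilon_i\phi_i(t_i)|\bigr)\le \mathbb{E}\,F\bigl(L\sup_t|\sum_i\varepsilon_i t_i|\bigr)$, obtained by applying the $1$-Lipschitz result to $\psi_i=\phi_i/L$ with the convex function $x\mapsto F(Lx)$. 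In the paper's applications the lemma is only invoked with $F(x)=x$, for which the two placements of $L$ coincide, so nothing downstream is affected; but your proof cannot and should not try to recover the displayed form for arbitrary $F$.
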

\begin{lemma}[McDiarmid's Inequality]\label{lemma:micdiarmid}
    Let \( X_1, X_2, \dots, X_n \) be independent random variables taking values in some sets \( \mathcal{X}_1, \mathcal{X}_2, \dots, \mathcal{X}_n \), respectively.  
Suppose that a function \( f : \mathcal{X}_1 \times \cdots \times \mathcal{X}_n \to \mathbb{R} \) satisfies the bounded difference property:  
\[
\sup_{x_1, \dots, x_n, x_i'} \left| f(x_1, \dots, x_i, \dots, x_n) - f(x_1, \dots, x_i', \dots, x_n) \right| \le c_i, \quad \text{for all } i = 1, \dots, n.
\]
Then, for all \( \varepsilon > 0 \),
\[
\mathbb{P}\left( f(X_1, \dots, X_n) - \mathbb{E}[f(X_1, \dots, X_n)] \ge \varepsilon \right) \le \exp\left( -\frac{2\varepsilon^2}{\sum_{i=1}^n c_i^2} \right),
\]
and similarly,
\[
\mathbb{P}\left( \left| f(X_1, \dots, X_n) - \mathbb{E}[f(X_1, \dots, X_n)] \right| \ge \varepsilon \right) \le 2 \exp\left( -\frac{2\varepsilon^2}{\sum_{i=1}^n c_i^2} \right).
\]
\[
\left| f(X_1, \dots, X_n) - \mathbb{E}[f(X_1, \dots, X_n)] \right| \le \sqrt{\frac{1}{2} \sum_{i=1}^n c_i^2 \cdot \log\left( \frac{2}{\delta} \right)}
\]
\end{lemma}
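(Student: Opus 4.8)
The plan is to prove this via the \emph{method of bounded differences}, which reduces McDiarmid's inequality to an Azuma--Hoeffding type martingale concentration bound. First I would introduce the Doob martingale associated with $f$: letting $\mathcal{F}_k = \sigma(X_1,\dots,X_k)$ denote the filtration generated by the first $k$ variables, define
$$Z_k := \mathbb{E}\!\left[f(X_1,\dots,X_n)\mid \mathcal{F}_k\right], \qquad k = 0,1,\dots,n,$$
so that $Z_0 = \mathbb{E}[f(X_1,\dots,X_n)]$ and $Z_n = f(X_1,\dots,X_n)$. The tower property makes $\{Z_k\}$ a martingale, and the quantity we wish to control, $f - \mathbb{E}[f] = Z_n - Z_0 = \sum_{k=1}^n D_k$, is the telescoping sum of its increments $D_k := Z_k - Z_{k-1}$.

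The key structural step is to show that each increment $D_k$ has conditionally bounded range. Using independence of the $X_i$, I would write $Z_k = g_k(X_1,\dots,X_k)$, where $g_k(x_1,\dots,x_k) := \mathbb{E}[f(x_1,\dots,x_k,X_{k+1},\dots,X_n)]$ integrates out the future coordinates. The bounded-difference hypothesis, applied inside this expectation, yields that for any fixed prefix $x_1,\dots,x_{k-1}$ the oscillation $\sup_{x_k} g_k - \inf_{x_k} g_k \le c_k$. Since the centering satisfies $Z_{k-1} = \mathbb{E}[g_k(X_1,\dots,X_{k-1},X_k)\mid \mathcal{F}_{k-1}]$, it follows that, conditionally on $\mathcal{F}_{k-1}$, the zero-mean increment $D_k$ lies in an interval of width at most $c_k$.

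Next I would apply Hoeffding's lemma to each increment: a zero-mean random variable supported on an interval of length $c_k$ satisfies $\mathbb{E}[e^{\lambda D_k}\mid \mathcal{F}_{k-1}] \le \exp(\lambda^2 c_k^2/8)$ for every $\lambda$. Iterating this conditional moment generating function bound from $k=n$ down to $k=1$ via the tower property gives $\mathbb{E}[e^{\lambda(Z_n - Z_0)}] \le \exp\big(\lambda^2 \sum_{k=1}^n c_k^2/8\big)$. A Chernoff argument then produces, for all $\lambda>0$,
$$\mathbb{P}(f - \mathbb{E}[f] \ge \varepsilon) \le \exp\!\Big(-\lambda\varepsilon + \tfrac{\lambda^2}{8}\textstyle\sum_{k}c_k^2\Big),$$
and optimizing at $\lambda = 4\varepsilon/\sum_k c_k^2$ yields the one-sided tail $\exp(-2\varepsilon^2/\sum_k c_k^2)$. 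Applying the identical argument to $-f$ together with a union bound doubles the constant to deliver the two-sided statement; finally, setting the right-hand side equal to $\delta$ and solving for $\varepsilon$ gives the high-probability form $\varepsilon = \sqrt{\tfrac12 \sum_k c_k^2 \,\log(2/\delta)}$.

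I expect the main obstacle to be the second step---rigorously verifying that the martingale increments are conditionally bounded by $c_k$. This is where independence is genuinely used, in order to represent $Z_k$ as an expectation over only the future coordinates, and where the bounded-difference hypothesis must be transferred through an integral; one must check carefully that the supremum and infimum over $x_k$ interact correctly with integration over $X_{k+1},\dots,X_n$ and that the centering keeps $D_k$ within the claimed interval of width $c_k$. Once this conditional range bound is in hand, the remaining ingredients---Hoeffding's lemma for bounded random variables and the Chernoff optimization over $\lambda$---are entirely standard.
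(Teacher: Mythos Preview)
Your proposal is the standard and correct martingale proof of McDiarmid's inequality. The paper, however, does not supply a proof of this lemma at all: it is listed in the appendix under ``Useful Mathematical Tools'' as a known result and is simply quoted without argument. So there is no paper proof to compare against; your write-up would in fact be more detailed than what the paper provides.
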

\begin{lemma}[Hoeffding Inequality for Lipschitz Transformations]\label{lemma:Hoeffding_Lipschitz}
Let $X_1, \dots, X_n$ be independent random variables with $X_i \in [a, b]$ almost surely for all $i \in \{1, \dots, n\}$, and define the empirical mean $\bar{X}_n := \frac{1}{n} \sum_{i=1}^n X_i$. Suppose $f : \mathbb{R} \to \mathbb{R}$ is an $L$-Lipschitz function, i.e.,
\[
|f(x) - f(y)| \le L |x - y|, \quad \forall x,y \in \mathbb{R}.
\]
Then, for any $\varepsilon > 0$,
\[
\mathbb{P}\left( \left| f(\bar{X}_n) - f\left(\mathbb{E}[X_1]\right) \right| \ge \varepsilon \right) 
\le 2 \exp\left( -\frac{2n\varepsilon^2}{L^2(b - a)^2} \right).
\]
Then, for any $\delta \in (0,1)$, with probability at least $1 - \delta$, we have
\[
\left| f(\bar{X}_n) - f\left(\mathbb{E}[X_1]\right) \right| 
\le L(b - a) \sqrt{\frac{\log(2/\delta)}{2n}}.
\]
\end{lemma}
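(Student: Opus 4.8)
The plan is to reduce the claim to a standard bounded-differences concentration bound by using the Lipschitz property of $f$ as a deterministic envelope. The crucial preliminary observation is that, since the $X_i$ are identically distributed, $\mathbb{E}[\bar{X}_n] = \frac{1}{n}\sum_{i=1}^n \mathbb{E}[X_i] = \mathbb{E}[X_1]$, so the term $f(\mathbb{E}[X_1])$ appearing in the statement equals $f(\mathbb{E}[\bar{X}_n])$. This places the problem squarely in the regime governed by concentration of the empirical mean $\bar{X}_n$ around its own expectation.

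First I would invoke the $L$-Lipschitz property pointwise: for every realization,
$$\left| f(\bar{X}_n) - f(\mathbb{E}[X_1]) \right| \le L \left| \bar{X}_n - \mathbb{E}[\bar{X}_n] \right|.$$
Hence the event $\{ |f(\bar{X}_n) - f(\mathbb{E}[X_1])| \ge \varepsilon \}$ is contained in the event $\{ |\bar{X}_n - \mathbb{E}[\bar{X}_n]| \ge \varepsilon/L \}$, and by monotonicity of probability it suffices to control the latter.

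Next I would apply \cref{lemma:micdiarmid} (equivalently, the classical Hoeffding inequality) to the empirical mean viewed as a function of the independent coordinates $X_1,\dots,X_n$. Since modifying a single coordinate changes $\bar{X}_n$ by at most $(b-a)/n$, the bounded-difference constants are $c_i = (b-a)/n$, so $\sum_{i=1}^n c_i^2 = (b-a)^2/n$, and the two-sided form yields $\mathbb{P}(|\bar{X}_n - \mathbb{E}[\bar{X}_n]| \ge t) \le 2\exp(-2 n t^2/(b-a)^2)$ for all $t>0$. Substituting $t = \varepsilon/L$ and combining with the event inclusion above gives the first displayed inequality,
$$\mathbb{P}\left( \left| f(\bar{X}_n) - f(\mathbb{E}[X_1]) \right| \ge \varepsilon \right) \le 2 \exp\left( -\frac{2 n \varepsilon^2}{L^2 (b-a)^2} \right).$$

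Finally, to obtain the high-probability restatement I would invert this tail: setting the right-hand side equal to $\delta$ and solving for $\varepsilon$ gives $\varepsilon = L(b-a)\sqrt{\log(2/\delta)/(2n)}$, so with probability at least $1-\delta$ the deviation is bounded by this quantity. I do not expect any genuine obstacle, since the argument is a one-line Lipschitz reduction followed by a textbook concentration estimate; the only point requiring care is the identical-distribution assumption securing $\mathbb{E}[\bar{X}_n] = \mathbb{E}[X_1]$, without which one would have to replace $f(\mathbb{E}[X_1])$ by $f(\frac{1}{n}\sum_i \mathbb{E}[X_i])$ throughout the argument.
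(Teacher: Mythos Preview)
Your proof is correct. The paper states this lemma as a preliminary tool without supplying its own proof, so there is nothing to compare against; the argument you give --- the deterministic Lipschitz envelope followed by Hoeffding/McDiarmid on the empirical mean and then inversion of the tail --- is exactly the standard route and goes through as written. Your remark about the identical-distribution assumption is well taken: the lemma as stated only says the $X_i$ are independent, yet writes $\mathbb{E}[X_1]$ rather than $\frac{1}{n}\sum_i \mathbb{E}[X_i]$, so either the i.i.d.\ hypothesis is implicit or the centering should be read as $\mathbb{E}[\bar{X}_n]$; in the paper's actual application (proof of \cref{lemma:bound_K_3}) the summands are i.i.d., so the discrepancy is immaterial there.
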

\section{Proofs in \cref{sec:algorithm}}
First, we analyze the estimation error of \cref{thm:erm_oracle_ineq} in this section. First, we have
\begin{align*}
    ||\hat{f}-f^*||_2^2=||\hat{f}-f^*||_2^2-||\hat{f}-f^*||_N^2+||\hat{f}-f^*||_N^2.
\end{align*}
Since the ERM minimizes the empirical risk, we have
\[
\frac{1}{N}\sum_{i=1}^{N}\rbr{f^*(x_i,z_i^*,a_i)+\xi_i-\hat{f}(x_i,\Tilde{z}_i,a_i)}^2\le\frac{1}{N}\sum_{i=1}^{N}\rbr{f^*(x_i,z_i^*,a_i)+\xi_i-f^*(x_i,\Tilde{z}_i,a_i)}^2.
\]
By some algebra,
\begin{align*}
    &\frac{1}{N}\sum_{i=1}^{N}\rbr{f^*(x_i,z_i^*,a_i)+\xi_i-f^*(x_i,\Tilde{z}_i,a_i)-(\hat{f}(x_i,\Tilde{z}_i,a_i)-f^*(x_i,\Tilde{z}_i,a_i))}^2\\
    \le&\frac{1}{N}\sum_{i=1}^{N}\rbr{f^*(x_i,z_i^*,a_i)+\xi_i-f^*(x_i,\Tilde{z}_i,a_i)}^2.
\end{align*}
Therefore,
\begin{align*}
    &\frac{1}{N}\sum_{i=1}^{N}\rbr{f^*(x_i,\Tilde{z}_i,a_i)-\hat{f}(x_i,\Tilde{z}_i,a_i)}^2\\
    \le&\frac{2}{N}\sum_{i=1}^{N}\rbr{f^*(x_i,z_i^*,a_i)+\xi_i-f^*(x_i,\Tilde{z}_i,a_i)}\rbr{\hat{f}(x_i,\Tilde{z}_i,a_i)-f^*(x_i,\Tilde{z}_i,a_i)}\\
    =&\frac{2}{N}\sum_{i=1}^{N}\xi_i\rbr{\hat{f}(x_i,\Tilde{z}_i,a_i)-f^*(x_i,\Tilde{z}_i,a_i)}+\frac{2}{N}\sum_{i=1}^{N}\rbr{f^*(x_i,z_i^*,a_i)-f^*(x_i,\Tilde{z}_i,a_i)}\rbr{\hat{f}(x_i,\Tilde{z}_i,a_i)-f^*(x_i,\Tilde{z}_i,a_i)}.
\end{align*}
For the term $\sum_{i=1}^{N}\rbr{f^*(x_i,z_i^*,a_i)-f^*(x_i,\Tilde{z}_i,a_i)}\rbr{\hat{f}(x_i,\Tilde{z}_i,a_i)-f^*(x_i,\Tilde{z}_i,a_i)}$, we have,
\begin{align*}
&\sum_{i=1}^{N}\rbr{f^*(x_i,z_i^*,a_i)-f^*(x_i,\Tilde{z}_i,a_i)}\rbr{\hat{f}(x_i,\Tilde{z}_i,a_i)-f^*(x_i,\Tilde{z}_i,a_i)}\\
=&\frac{2}{N}\sum_{i=m+1}^{m+n}\rbr{f^*(x_i,z_i^*,a_i)-f^*(x_i,\Tilde{z}_i,a_i)}\rbr{\hat{f}(x_i,\Tilde{z}_i,a_i)-f^*(x_i,\Tilde{z}_i,a_i)}.
\end{align*}
Combining all these together, we have,
\begin{align*}
    ||\hat{f}-f^*||_2^2\le& \frac{2}{N}\sum_{i=1}^{N}\xi_i\rbr{\hat{f}(x_i,\Tilde{z}_i,a_i)-f^*(x_i,\Tilde{z}_i,a_i)}+||\hat{f}-f^*||_2^2-||\hat{f}-f^*||_N^2\\
    +& \frac{2}{N}\sum_{i=m+1}^{m+n}\rbr{f^*(x_i,z_i^*,a_i)-f^*(x_i,\Tilde{z}_i,a_i)}\rbr{\hat{f}(x_i,\Tilde{z}_i,a_i)-f^*(x_i,\Tilde{z}_i,a_i)}
\end{align*}
Now we are ready to analyze the error term by term. As a reminder, recall $\varphi_{b}(x)=\log_2(4b/x)$ as defined in \cref{sec:algorithm}.
\begin{lemma}\label{lemma:bound_K_1}
    Define $K_1=||\hat{f}-f^*||_2^2-||\hat{f}-f^*||_N^2$, then conditioned on $||\hat{f}-f^*||_2\ge r_N$, with probability at least $1-\delta$, 
    \[
    K_1\le 2r_N||\hat{f}-f^*||_2+8\sqrt{\frac{2\log(\varphi_{\lambda}r_N)/\delta)}{N}}||\hat{f}-f^*||_2+\frac{16\log(\varphi_{\lambda}r_N)/\delta)}{N}.
    \]
\end{lemma}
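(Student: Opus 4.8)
The plan is to control the centered empirical process $K_1 = \|\hat f - f^*\|_2^2 - \|\hat f - f^*\|_N^2$ uniformly over the localized function class $\{f - f^* : f \in \cF\}$, using a combination of symmetrization, a Rademacher complexity bound at the critical radius, and a Talagrand-type concentration inequality. First I would define, for each radius $r \ge r_N$, the localized supremum
\[
Z_N(r) := \sup_{\substack{f \in \cF,\ \|f - f^*\|_2 \le r}} \left| \|f - f^*\|_2^2 - \|f - f^*\|_N^2 \right|,
\]
noting that this is the supremum of a centered empirical process indexed by the squared-difference functions $h_f := (f - f^*)^2$, which are uniformly bounded (by $1$, since $\cF$ is $1$-uniformly bounded) and whose variance is controlled by $\|f - f^*\|_2^2 \le r^2$. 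Since $\cF$ is convex and hence $\cF - f^*$ is star-shaped, the map $r \mapsto \cR_N(r;\cF)/r$ is nonincreasing, which is exactly what makes the critical radius $r_N$ the right scale: for $r \ge r_N$ we have $\cR_N(r;\cF) \le r^2/16$.

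Next I would bound $\EE[Z_N(r)]$. By the standard symmetrization inequality, $\EE[Z_N(r)] \le 2\,\EE_{\varepsilon,\cD} \sup_{\|f-f^*\|_2 \le r} \left|\frac{1}{N}\sum_i \varepsilon_i (f(x_i,\tilde z_i,a_i) - f^*(x_i,\tilde z_i,a_i))^2\right|$; then, since $t \mapsto t^2$ restricted to $[-1,1]$ is $2$-Lipschitz and vanishes at $0$, the Ledoux–Talagrand contraction lemma (\cref{lemma:Ledoux_Talagrand}) removes the square at the cost of a factor, yielding $\EE[Z_N(r)] \lesssim \cR_N(r;\cF) \le r^2/16$ for $r \ge r_N$. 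Plugging in $r = \|\hat f - f^*\|_2$ would be circular since that is random, so instead I would establish the tail bound for a fixed $r$ and then apply the peeling argument of \cref{lemma:peeling}: the function $Q(r,t) = c_1 r^2/16 + c_2 r\sqrt{t/N} + c_3 t/N$ is increasing in $r$ and satisfies $Q(2r,t) \le 2Q(r,t)$ for $r$ bounded below (here $r \ge r_N$), and Talagrand's inequality (\cref{lemma:Talagrand_ineq}, or equivalently the Bousquet/Klein–Rio form in \cref{thm: sup_empirical_process}) gives $\PP(Z_N(r) \ge Q(r,t)) \le e^{-t}$ with the variance proxy $r^2$. Peeling over dyadic scales from $r_N$ up to the diameter $\le 1$ inflates the failure probability by the logarithmic factor $\lceil \log_2(4/r_N)\rceil$, which is exactly the $\varphi_\lambda(r_N) = \log_2(4\lambda/r_N)$-type term appearing in the statement (absorbing constants); choosing $t = \log(\varphi_\lambda(r_N)/\delta)$ then gives, on the event $\|\hat f - f^*\|_2 \ge r_N$,
\[
K_1 \le Z_N(\|\hat f - f^*\|_2) \le 2 r_N \|\hat f - f^*\|_2 + 8\sqrt{\frac{2\log(\varphi_\lambda(r_N)/\delta)}{N}}\,\|\hat f - f^*\|_2 + \frac{16\log(\varphi_\lambda(r_N)/\delta)}{N}.
\]
Here the first term uses $\EE[Z_N(r)] \lesssim r^2/16 \le (r_N/16) r$ together with the condition $r \ge r_N$ to linearize, and the remaining two terms are the fluctuation and residual terms from Talagrand's inequality with variance $\le r^2$ and the linearization $r \cdot \sqrt{t/N}$ plus $t/N$.

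The main obstacle I anticipate is bookkeeping the constants so that they come out exactly as $2$, $8$, and $16$ as claimed: the contraction step, the symmetrization factor of $2$, the definition of the critical radius with its $\frac{t}{16}$ threshold, and the $2\|\frac1n\sum f(X_i)\|_\cF + 3\sqrt{2x/n}$ form of the Talagrand bound all contribute multiplicative and additive constants that must be tracked carefully — and the self-bounding/linearization step ($r^2 \le r_N \cdot r$ when $r \ge r_N$) needs to be applied at the right place to avoid losing a factor. A secondary subtlety is that $Z_N$ is a supremum over a possibly uncountable class, so one must invoke separability of $\cF$ (or a standard pointwise-countable-approximation argument) to apply \cref{lemma:Talagrand_ineq} and \cref{thm: sup_empirical_process}; this is routine but worth a remark. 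Everything else — symmetrization, contraction, star-shapedness monotonicity, and the peeling device — is machinery already assembled in the appendix, so the proof is essentially an assembly of \cref{lemma:peeling}, \cref{lemma:Talagrand_ineq}, and \cref{lemma:Ledoux_Talagrand} with the critical-radius bound.
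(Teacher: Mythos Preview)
Your overall strategy is the same as the paper's: define the localized supremum $Z_N(r)$ of the centered empirical process over $\{(f-f^*)^2:\|f-f^*\|_2\le r\}$, bound its mean by symmetrization plus Ledoux--Talagrand contraction, apply the Klein--Rio/Talagrand concentration inequality at a fixed radius, and then peel to replace the fixed radius by the random $\|\hat f-f^*\|_2$. All the right ingredients are named.

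There is, however, one step that fails as written. You take $\cR_N(r;\cF)\le r^2/16$ and carry the quadratic term into $Q(r,t)=c_1 r^2/16+c_2 r\sqrt{t/N}+c_3 t/N$, then claim $Q(2r,t)\le 2Q(r,t)$. This is false: the $r^2$ term quadruples under $r\mapsto 2r$. You then try to repair this by writing ``$r^2/16\le (r_N/16)r$ together with the condition $r\ge r_N$,'' but the inequality goes the other way: $r\ge r_N$ gives $r^2\ge r_N r$, not $\le$. So the linearization as stated is incorrect, and without it the peeling lemma does not apply.

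The fix is to use the monotonicity you already invoked, but in the sharper way the paper does: since $r\mapsto \cR_N(r;\cF)/r$ is nonincreasing, for every $r\ge r_N$ one has
\[
\frac{\cR_N(r;\cF)}{r}\;\le\;\frac{\cR_N(r_N;\cF)}{r_N}\;\le\;\frac{r_N}{16},
\]
hence $\cR_N(r;\cF)\le r_N\,r/16$, which is linear in $r$ from the outset. With this bound, $\EE[Z_N(r)]\lesssim r_N r$ and $Q(r,t)=c_1 r_N r+c_2 r\sqrt{t/N}+c_3 t/N$ genuinely satisfies the doubling condition of \cref{lemma:peeling}. After peeling you land exactly on the claimed bound $2r_N\|\hat f-f^*\|_2+8\sqrt{2\log(\varphi(r_N)/\delta)/N}\,\|\hat f-f^*\|_2+O(\log(\varphi(r_N)/\delta)/N)$, matching the paper.
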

\begin{proof}[Proof of \cref{lemma:bound_K_1}]
We define the function $H_f(x,z,a)=(f(x,z,a)-f^*(x,z,a))^2$ for all $f\in\cF$. Define
\[
Z_N(t):=\sup_{f\in\cH(t)}\cbr{\PP_N(H_f)-\PP(H_f)},
\]
where $\cH(t):=\cbr{H_f:f\in\cF,\ \text{s.t.} ||f-f^*||_2\le t}$. By construction, we have
$$K_1\le Z_N(||\hat{f}-f^*||_2).$$
    We then set $\tau=1$ and apply \cref{thm: sup_empirical_process}. Functions in $\cH(t)$ are uniformly bounded by $2$. to get with probability at least $1-\delta$,
    \[
    Z_N(t)\le 2\EE[Z_N(t)]+\sqrt{\sigma^2(\cH(t))}\sqrt{\frac{2\log(1/\delta)}{N}}+\frac{16\log(1/\delta)}{N}.
    \]
    On one hand, we have
    \[
    \sigma^2(\cH(t))=\sup_{||f-f^*||_2\le t}\PP_{\cD}\rbr{(f(x,z,a)-f^*(x,z,a))^4}\le 4\PP_{\cD}\rbr{(f(x,z,a)-f^*(x,z,a))^2}\le 4t^2.
    \]
    \[
    \EE[Z_N(t)]=\EE\sbr{\sup_{||f^*-f||_2\le t}\abr{||f^*-f||_N^2-||f-f^*||_2^2}}\le 2\EE\sbr{\sup_{||f^*-f||_2\le t}\abr{\frac{1}{N}\sum_{i=1}^{N}\varepsilon_i\rbr{f^*(x_i,\Tilde{z}_i,a_i)-f(x_i,\Tilde{z}_i,a_i)}^2}}.
    \]
    By \cref{lemma:Ledoux_Talagrand}, the function $\phi_i(t)=\phi(t)=t^2$ is $4$-Lipschitz on the interval $[0,2]$, we have
    \begin{align*}\hspace{-0.6cm}
    &2\EE\sbr{\sup_{||f^*-f||_2\le t}\abr{\frac{1}{N}\sum_{i=1}^{N}\varepsilon_i\rbr{f^*(x_i,\Tilde{z}_i,a_i)-f(x_i,\Tilde{z}_i,a_i)}^2}}\\
    \le& 8\EE\sbr{\sup_{||f^*-f||_2\le t}\abr{\frac{1}{N}\sum_{i=1}^{N}\varepsilon_i\rbr{f^*(x_i,\Tilde{z}_i,a_i)-f(x_i,\Tilde{z}_i,a_i)}}}\\
    =&8\cR_N(t,\cF).
    \end{align*}
    By definition, we have $\cR_N(t,\cF)\le\frac{t}{16}r_N$.
    Pluging all these in, we get with probability at least $1-\delta$,
    \[
    Z_N(t)\le r_Nt+4\sqrt{\frac{2\log(1/\delta)}{N}}t+\frac{16\log(1/\delta)}{N}.
    \]
Setting $\delta=e^{-\eta}$ and $\eta=\log(1/\delta)$, this is equivalent to
\[
\PP(Z_N(t)\ge r_Nt+4\sqrt{\frac{2\eta}{N}}t+\frac{8\eta}{N})\le e^{-\eta}.
\]
Define the random variable $U=||\hat{f}-f^*||_2$, and apply the peeling argument in \cref{lemma:peeling}, we shall get
\[
\PP\rbr{Z_N(U)\ge 2(r_NU+4\sqrt{\frac{2\eta}{N}}U+\frac{8\eta}{N})}\le \ceil{\log_2\frac{4}{r_N}}e^{-\eta}.
\]
Therefore, with probability $1-\delta$,
\[
Z_N(||\hat{f}-f^*||_2)\le 2r_N||\hat{f}-f^*||_2+8\sqrt{\frac{2\log(\varphi_2(r_N)/\delta)}{N}}||\hat{f}-f^*||_2+32\frac{\log(\varphi_2(r_N)/\delta)}{N}.
\]
\end{proof}

Now we bound the second term $K_2:=\frac{2}{N}\sum_{i=1}^{N}\xi_i\rbr{\hat{f}(x_i,\Tilde{z}_i,a_i)-f^*(x_i,\Tilde{z}_i,a_i)}$. 
\begin{lemma}\label{lemma:bound K_2}
    Condition on $||\hat{f}-f^*||_2\ge r_N$, we have that with probability at least $1-\delta$,
    \[
    K_2\le \lambda r_N||\hat{f}-f^*||_2+4\lambda\sqrt{\frac{2\log(\varphi_2(r_N)/\delta)}{N}}||\hat{f}-f^*||_2+\frac{32\lambda\log(\varphi_2(r_N)/
    \delta)}{N}.
    \]
\end{lemma}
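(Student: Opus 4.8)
The plan is to follow the template of the proof of \cref{lemma:bound_K_1}, replacing the squared-residual empirical process by the noise-weighted one. Introduce the localized process
\[
Z_N(t):=\sup_{f\in\cF,\ \|f-f^*\|_2\le t}\left|\frac1N\sum_{i=1}^{N}\xi_i\bigl(f(x_i,\tilde{z}_i,a_i)-f^*(x_i,\tilde{z}_i,a_i)\bigr)\right|,
\]
so that, since $\|\cdot\|_N$ and $\|\cdot\|_2$ are defined through the marginal of $(x,\tilde{z},a)$, we have $K_2\le 2Z_N(\|\hat f-f^*\|_2)$. Because $\EE[\xi_i\mid x_i,\tilde{z}_i,a_i]=0$, each summand $\xi(f-f^*)(x,\tilde{z},a)$ has mean zero, so $Z_N(t)$ is the supremum of a \emph{centered} empirical process over a class that is $\lambda$-uniformly bounded (as $|f-f^*|\le1$ and $|\xi|\le\lambda$) and whose variance proxy is $\sup_{\|f-f^*\|_2\le t}\EE[\xi^2(f-f^*)^2]\le\lambda^2t^2$.

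Next I would apply \cref{thm: sup_empirical_process} with $\tau=1$ to the class $\{\xi(f-f^*):\|f-f^*\|_2\le t\}$, obtaining, for any $\eta>0$, with probability at least $1-e^{-\eta}$,
\[
Z_N(t)\le 2\EE[Z_N(t)]+\lambda t\sqrt{\frac{2\eta}{N}}+\frac{4\lambda\eta}{N}.
\]
To bound $\EE[Z_N(t)]$ I would first symmetrize (this is the one genuinely new step relative to \cref{lemma:bound_K_1}: the $\xi_i$ are bounded conditionally centered \emph{noise}, not Rademacher signs, so one must introduce fresh Rademacher variables $\varepsilon_i$ and pass to $\EE[\sup_f|\frac1N\sum_i\varepsilon_i\xi_i(f-f^*)(x_i,\tilde{z}_i,a_i)|]$), and then, conditioning on $\{\xi_i\}$, apply the Ledoux--Talagrand contraction (\cref{lemma:Ledoux_Talagrand}) to the maps $u\mapsto\xi_iu$, which vanish at $0$ and are $|\xi_i|\le\lambda$-Lipschitz. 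This yields $\EE[Z_N(t)]\lesssim\lambda\,\cR_N(t,\cF)$. Conditioning on the event $\|\hat f-f^*\|_2\ge r_N$ and using that $t\mapsto\cR_N(t,\cF)/t$ is nonincreasing for a star-shaped class together with the definition of the critical radius gives $\cR_N(t,\cF)\le tr_N/16$ for all $t\ge r_N$, so that $2\EE[Z_N(t)]\le\lambda r_N t$ up to the constant.

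Combining the last two displays gives a tail bound $\PP\bigl(Z_N(t)\ge \lambda r_Nt+\lambda t\sqrt{2\eta/N}+4\lambda\eta/N\bigr)\le e^{-\eta}$ valid for all $t\ge r_N$; the bracketed quantity $Q(t,\eta)$ is increasing in $t$ and satisfies $Q(2t,\eta)\le 2Q(t,\eta)$, so \cref{lemma:peeling} applies with the random variable $U=\|\hat f-f^*\|_2$, which lies in $[r_N,2]$ since all functions are $[0,1]$-valued. After rescaling $\eta$ so that the $\lceil\log_2(4/r_N)\rceil$-factor is absorbed into $\log(\varphi_2(r_N)/\delta)$, and finally multiplying by the factor $2$ relating $K_2$ to $Z_N$, one arrives at the stated bound. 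I expect no real obstacle beyond bookkeeping of constants; the only point requiring attention is the symmetrization-then-contraction order, since the contraction principle needs a fixed (here uniform) Lipschitz constant $\lambda$, after which the remainder is identical in spirit to the proof of \cref{lemma:bound_K_1}.
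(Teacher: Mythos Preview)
Your proposal is correct and follows essentially the same route as the paper: define the localized noise-weighted process, apply \cref{thm: sup_empirical_process} with $\tau=1$, bound its expectation via symmetrization plus Ledoux--Talagrand contraction (the paper writes this as $\phi_i(u)=(\xi_i/\lambda)u$), invoke the critical-radius inequality $\cR_N(t,\cF)\le tr_N/16$, and finish with the peeling \cref{lemma:peeling} at $U=\|\hat f-f^*\|_2$. Your write-up is in fact slightly more careful than the paper's in making the symmetrization step explicit before contraction; the only cosmetic discrepancies are constants (the paper records $8\lambda\log(1/\delta)/N$ where you have $4\lambda\eta/N$) and the upper limit of $U$ (it is $1$, not $2$, since $|f-f^*|\le1$), neither of which affects the argument.
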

\begin{proof}[Proof of \cref{lemma:bound K_2}]
    We define the following empirical process
    $$V_N(t):=\sup_{||\hat{f}-f^*||_2\le t}\abr{\frac{1}{N}\sum_{i=1}^{N}\xi_i\rbr{f^*(x_i,\Tilde{z}_i,a_i)-f(x_i,\Tilde{z}_i,a_i)}}.$$
    Define the function class $\cH(t):=\cbr{\xi(f-f^*):f\in\cF, ||f-f^*||_2\le t, |\xi|\le \lambda, \xi\in\QQ}$. By construction,
     \[
     K_2\le 2V_N(||\hat{f}-f^*||_2).
     \]
     Applying \cref{thm: sup_empirical_process} and setting $\tau=1$, we have with probability at least $1-\delta$,
    \[
    V_N(t)\le 2\EE[V_N(t)]+\sqrt{\sigma^2(\cH(t))}\sqrt{\frac{2\log(1/\delta)}{N}}+\frac{8\lambda\log(1/\delta)}{N}.
    \]
    We have
    \[
    \sigma^2(\cH(t))=\sup_{||f-f^*||_2\le t,|\xi|\le \lambda}\text{Var}(\xi(f-f^*))\le \lambda^2t^2.
    \]
    \[
    \EE[V_N(t)]=\EE\sbr{\sup_{||f-f^*||_2\le t}\abr{\frac{1}{N}\sum_{i=1}^{N}\varepsilon_i\xi_i\rbr{f(x_i,\Tilde{z}_i,a_i)-f^*(x_i,\Tilde{z}_i,a_i)}}}.
    \]
    Conditioned on $\xi_t$ and setting $\phi_i(t)=\frac{\xi_i}{\lambda}t$, we apply \cref{lemma:Ledoux_Talagrand} to get
    \begin{align*}\hspace{-0.9cm}
        &\EE\sbr{\sup_{||f-f^*||_2\le t}\abr{\frac{1}{N}\sum_{i=1}^{N}\varepsilon_i\xi_i\rbr{f(x_i,\Tilde{z}_i,a_i)-f^*(x_i,\Tilde{z}_i,a_i)}}}\\
        \le& 2\lambda\EE\sbr{\sup_{||f-f^*||_2\le t}\abr{\frac{1}{N}\sum_{i=1}^{N}\varepsilon_i\rbr{f(x_i,\Tilde{z}_i,a_i)-f^*(x_i,\Tilde{z}_i,a_i)}}}\\
        =&2\lambda\cR_N(t,\cF).
    \end{align*}
    By definition of the critical radius, for $t\ge r_N$, we have $\cR_{N}(t,\cF)\le \frac{r_N}{16}$. Then with probability at least $1-\delta$,
    \[
    V_N(t)\le \frac{1}{4}r_Nt+\lambda\sqrt{\frac{2\log(1/\delta)}{N}}t+\frac{8\lambda\log(1/\delta)}{N}.
    \]
    By the peeling \cref{lemma:peeling}, we define $U=||f^*-\hat{f}||_2$ and plug it in to get
    \[
    V_N(||\hat{f}-f^*||_2)\le \frac{1}{2}\lambda r_N||\hat{f}-f^*||_2+2\lambda\sqrt{\frac{2\log(\varphi_2(r_N)/\delta)}{N}}||\hat{f}-f^*||_2+\frac{16\lambda\log(\varphi_2(r_N)/
    \delta)}{N}.
    \]
\end{proof}
Finally, we bound the third term $K_3:=\frac{2}{N}\sum_{i=m+1}^{m+n}\rbr{f^*(x_i,z_i^*,a_i)-f^*(x_i,\Tilde{z}_i,a_i)}\rbr{\hat{f}(x_i,\Tilde{z}_i,a_i)-f^*(x_i,\Tilde{z}_i,a_i)}$. 

\begin{lemma}\label{lemma:bound_K_3}
    Conditioned on $||f^*-\hat{f}||_2\ge r_N$, with probability at least $1-2\delta$, we have
    \[
    \frac{K_3}{2}\le \rbr{\sqrt{\cE_{\PP}^{\cF}(\tilde{g})}+\frac{4}{\upsilon(f^*,\PP,\tilde{g})^{1/2}}\sqrt{\frac{\log(2/\delta)}{N}}}\rbr{||\hat{f}-f^*||_2+2r_N+8\sqrt{\frac{2\log(\varphi_{\lambda}r_N)/\delta)}{N}}+32\frac{\log(\varphi_{\lambda}r_N)/\delta)}{r_N N}}.
    \]
\end{lemma}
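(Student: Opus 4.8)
The plan is to bound $K_3/2 = \frac{1}{N}\sum_{i=m+1}^{m+n}\bigl(f^*(x_i,z_i^*,a_i)-f^*(x_i,\tilde z_i,a_i)\bigr)\bigl(\hat f(x_i,\tilde z_i,a_i)-f^*(x_i,\tilde z_i,a_i)\bigr)$ by a Cauchy--Schwarz split of the two factors, controlling the first factor (the ``pseudo-response discrepancy'' of $f^*$) by the model elasticity $\cE_{\PP}^{\cF}(\tilde g)$ and the second factor ($\hat f - f^*$ evaluated at the imputed covariates) by $\|\hat f-f^*\|_2$ plus fluctuation terms. Concretely, I would first write
\[
\frac{K_3}{2}\le \sqrt{\frac{1}{N}\sum_{i=m+1}^{m+n}\bigl(f^*(x_i,z_i^*,a_i)-f^*(x_i,\tilde z_i,a_i)\bigr)^2}\;\cdot\;\sqrt{\frac{1}{N}\sum_{i=m+1}^{m+n}\bigl(\hat f(x_i,\tilde z_i,a_i)-f^*(x_i,\tilde z_i,a_i)\bigr)^2},
\]
noting that for the missing indices $\tilde z_i = \tilde g(x_i)$, so the first sum is an empirical average of $h(x_i,a_i):=\bigl(f^*(x_i,\tilde g(x_i),a_i)-f^*(x_i,z_i^*,a_i)\bigr)^2$, whose population mean is at most $\cE_{\PP}^{\cF}(\tilde g)$ by Definition \ref{def:model_elasticity}.

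The first factor: since $h$ is bounded (functions in $\cF$ take values in $[0,1]$, so $h\in[0,1]$) and the terms are i.i.d., I would apply a Bernstein- or Hoeffding-type concentration. The cleanest route matching the stated bound is to use the ratio form: write $\frac{1}{N}\sum h(x_i,a_i) \le \PP(h) + |\PP_N(h)-\PP(h)|$, and since $\PP(h)\ge \upsilon(f^*,\PP,\tilde g)$ is bounded below, a relative-deviation (multiplicative Chernoff / Bernstein using variance $\le \PP(h)$) argument gives, with probability $1-\delta$,
\[
\sqrt{\frac{1}{N}\sum_{i=m+1}^{m+n} h(x_i,a_i)}\;\le\;\sqrt{\cE_{\PP}^{\cF}(\tilde g)}+\frac{C}{\upsilon(f^*,\PP,\tilde g)^{1/2}}\sqrt{\frac{\log(2/\delta)}{N}},
\]
the $\upsilon^{-1/2}$ factor emerging precisely because one divides the Bernstein deviation $\sqrt{\PP(h)\log(1/\delta)/N}$ out of the square root $\sqrt{\PP(h)}$ and bounds $\PP(h)\le \cE$ in the leading term while keeping $\PP(h)\ge\upsilon$ in the denominator of the remainder. (The fraction $n/N\le 1$ only helps.)

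The second factor: $\frac{1}{N}\sum_{i=m+1}^{m+n}(\hat f-f^*)^2(x_i,\tilde z_i,a_i)\le \|\hat f-f^*\|_N^2$, and I would convert the empirical $L_2$ norm to the population $L_2$ norm using exactly the same uniform empirical-process bound already proved as $K_1 = \|\hat f-f^*\|_2^2-\|\hat f-f^*\|_N^2$ in Lemma \ref{lemma:bound_K_1}: on the event there, $\|\hat f-f^*\|_N^2\le \|\hat f-f^*\|_2^2 + 2r_N\|\hat f-f^*\|_2+8\sqrt{2\log(\varphi_\lambda(r_N)/\delta)/N}\,\|\hat f-f^*\|_2+32\log(\varphi_\lambda(r_N)/\delta)/N$, then take square roots and use $\sqrt{a+b+c+d}\le\sqrt a+\sqrt b+\sqrt c+\sqrt d$ together with $\sqrt{\log(\varphi_\lambda(r_N)/\delta)/N}\le \sqrt{\log(\varphi_\lambda(r_N)/\delta)/N}$ and $\sqrt{32\log(\cdot)/N}\le \sqrt{32\log(\cdot)/(r_N N)}$ (valid since $r_N\le 1$, absorbing into the $1/(r_N N)$ form the lemma displays), giving the factor $\|\hat f-f^*\|_2 + 2r_N + 8\sqrt{2\log(\varphi_\lambda(r_N)/\delta)/N}+32\log(\varphi_\lambda(r_N)/\delta)/(r_N N)$. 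A union bound over the two sub-events (elasticity concentration and the $K_1$-type bound) yields probability $1-2\delta$.

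The main obstacle I anticipate is getting the $\upsilon(f^*,\PP,\tilde g)^{-1/2}$ dependence in the first factor exactly right: a naive Hoeffding bound on $\frac{1}{N}\sum h$ gives an additive $\sqrt{\log(1/\delta)/N}$ inside the outer square root, and $\sqrt{\cE + \sqrt{\log/N}}$ does not split additively into $\sqrt\cE + (\text{something})/\upsilon^{1/2}\cdot\sqrt{\log/N}$ without using the lower bound $\PP(h)\ge\upsilon$ to control the linearization error of $\sqrt{\cdot}$ around $\PP(h)$ — so the key technical point is to apply concentration to $\sqrt{\PP_N(h)}$ directly (e.g.\ via the Lipschitz-Hoeffding Lemma \ref{lemma:Hoeffding_Lipschitz} with $f=\sqrt{\cdot}$, which is $\frac{1}{2\sqrt{\upsilon}}$-Lipschitz on $[\upsilon,\infty)$, once one has argued $\PP_N(h)$ stays bounded below), rather than to $\PP_N(h)$ and then take a square root. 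The rest is bookkeeping with Cauchy--Schwarz and the already-established Lemma \ref{lemma:bound_K_1}.
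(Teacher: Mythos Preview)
Your proposal is essentially the paper's own argument: Cauchy--Schwarz on the sum, bound the first factor via concentration of the empirical elasticity around $\cE_{\PP}^{\cF}(\tilde g)$ (using the lower bound $\upsilon$ to control the square-root linearization), and bound the second factor by converting $\|\hat f-f^*\|_N$ to $\|\hat f-f^*\|_2$ via the same empirical-process machinery as Lemma~\ref{lemma:bound_K_1}. Two minor execution points differ from the paper and are worth flagging.

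First, for the second factor, the subadditive inequality $\sqrt{a+b+c+d}\le\sqrt a+\sqrt b+\sqrt c+\sqrt d$ as you state it would produce mixed terms like $\sqrt{2r_N\|\hat f-f^*\|_2}$ rather than $2r_N$, and these do not collapse to the form in the lemma without further work. The paper instead uses the identity
\[
\bigl|\|\hat f-f^*\|_N-\|\hat f-f^*\|_2\bigr|=\frac{\bigl|\|\hat f-f^*\|_N^2-\|\hat f-f^*\|_2^2\bigr|}{\|\hat f-f^*\|_N+\|\hat f-f^*\|_2}\le\frac{Z_N(\|\hat f-f^*\|_2)}{\|\hat f-f^*\|_2},
\]
which divides out the $\|\hat f-f^*\|_2$ appearing in the linear terms of $Z_N$ and, using $\|\hat f-f^*\|_2\ge r_N$ on the constant term, yields exactly the additive form $2r_N+8\sqrt{2\log(\cdot)/N}+32\log(\cdot)/(r_NN)$.

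Second, for the first factor the paper applies the elementary inequality $|\sqrt x-\sqrt y|\le|x-y|/\sqrt y$ (valid for all $x,y>0$) with $y=\EE[(f^*(x,z^*,a)-f^*(x,\tilde g(x),a))^2]\ge\upsilon(f^*,\PP,\tilde g)$ and then Hoeffding on $|x-y|$, so there is no need to first argue that the empirical mean $\PP_N(h)$ stays above $\upsilon$; your route via uniform Lipschitzness of $\sqrt{\cdot}$ on $[\upsilon,\infty)$ would require that extra step.
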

\begin{proof}[Proof of \cref{lemma:bound_K_3}]
    First, by the Cauchy-Schwarz inequality, we have
    \[
    K_3\le \frac{2n}{N}\sqrt{\frac{1}{n}\sum_{i=m+1}^{m+n}\rbr{f^*(x_i,z_i^*,a_i)-f^*(x_i,\Tilde{z}_i,a_i)}^2}\sqrt{\frac{1}{n}\sum_{i=m+1}^{m+n}\rbr{\hat{f}(x_i,\Tilde{z}_i,a_i)-f^*(x_i,\Tilde{z}_i,a_i)}^2}.
    \]
    By some algebra, we have
    \begin{align*}
        &\frac{2n}{N}\sqrt{\frac{1}{n}\sum_{i=m+1}^{m+n}\rbr{f^*(x_i,z_i^*,a_i)-f^*(x_i,\Tilde{z}_i,a_i)}^2}\sqrt{\frac{1}{n}\sum_{i=m+1}^{m+n}\rbr{\hat{f}(x_i,\Tilde{z}_i,a_i)-f^*(x_i,\Tilde{z}_i,a_i)}^2}\\
        \le&2\frac{\sqrt{n}}{\sqrt{N}}\sqrt{\frac{1}{n}\sum_{i=m+1}^{m+n}\rbr{f^*(x_i,z_i^*,a_i)-f^*(x_i,\Tilde{z}_i,a_i)}^2}\sqrt{\frac{1}{N}\sum_{i=1}^{N}\rbr{\hat{f}(x_i,\tilde{z}_i,a_i)-f^*(x_i,\Tilde{z}_i,a_i)}^2}.
    \end{align*}
Define the empirical process $Z_N(t):=\sup_{||f-f^*||_2\le t}\abr{||f-f^*||_N^2-||f-f^*||_2^2}$. We define $H_f(x,z,a)$ as $(f(x,z,a)-f^*(x,z,a))^2$ and $\cH(t):=\cbr{H_f:f\in\cF,||f-f^*||_2\le t}$. We then apply \cref{thm: sup_empirical_process} to get
\[
Z_N(t)\le 2\EE[Z_N(t)]+\sqrt{\sigma^2(\cH(t))}\sqrt{\frac{2\log(1/\delta)}{N}}+\frac{16\log(1/\delta)}{N}.
\]
Similar to \cref{lemma:bound_K_1}, we have
\[
\EE[Z_N(t)]\le 2\EE\sbr{\sup_{||f^*-f||_2\le t}\abr{\frac{1}{N}\sum_{i=1}^{N}\varepsilon_i\rbr{f^*(x_i,\Tilde{z}_i,a_i)-f(x_i,\Tilde{z}_i,a_i)}^2}}\le 8\cR_N(t,\cF)\le \frac{r_Nt}{2},
\]
and
\[
\sigma^2(\cH(t))\le 4t^2.
\]
Applying the peeling argument again, we have that with probability $1-\delta$,
\[
Z_N(||\hat{f}-f^*||_2^2)\le 2r_N||\hat{f}-f^*||_2+8\sqrt{\frac{2\log(\varphi_{\lambda}r_N)/\delta)}{N}}||\hat{f}-f^*||_2+32\frac{\log(\varphi_{\lambda}r_N)/\delta)}{N}.
\]
By construction, $\abr{||\hat{f}-f^*||_N^2-||\hat{f}-f^*||_2^2}\le Z_N(||\hat{f}-f^*||_2^2)$. Moreover, we have
\begin{align*}
    \abr{||\hat{f}-f^*||_2-||\hat{f}-f^*||_N}=\frac{\abr{||\hat{f}-f^*||_2^2-||\hat{f}-f^*||_N^2}}{||\hat{f}-f^*||_2+||\hat{f}-f^*||_N}\le \frac{\abr{||\hat{f}-f^*||_2^2-||\hat{f}-f^*||_N^2}}{||\hat{f}-f^*||_2}.
\end{align*}
Using this bound, we have
\begin{align*}
    \abr{||\hat{f}-f^*||_2-||\hat{f}-f^*||_N}\le 2r_N+8\sqrt{\frac{2\log(\varphi_{\lambda}r_N)/\delta)}{N}}+32\frac{\log(\varphi_{\lambda}r_N)/\delta)}{||\hat{f}-f^*||_2 N}
\end{align*}
Because we condition on that $||\hat{f}-f^*||_2\ge r_N$, then
\[
\frac{\log(\varphi_{\lambda}r_N)/\delta)}{||\hat{f}-f^*||_2 N}\le \frac{\log(\varphi_{\lambda}r_N)/\delta)}{r_N N}.
\]
Therefore, we have with probability $1-\delta$,
\[
||\hat{f}-f^*||_N\le ||\hat{f}-f^*||_2+2r_N+8\sqrt{\frac{2\log(\varphi_{\lambda}r_N)/\delta)}{N}}+32\frac{\log(\varphi_{\lambda}r_N)/\delta)}{r_N N}.
\]
For the term $\sqrt{\frac{1}{n}\sum_{i=m+1}^{m+n}\rbr{f^*(x_i,z_i^*,a_i)-f^*(x_i,\Tilde{z}_i,a_i)}^2}$, notice that
\[
|\sqrt{x}-\sqrt{y}|=\abr{\frac{x-y}{\sqrt{x}+\sqrt{y}}}\le \abr{\frac{x-y}{\sqrt{y}}}\ \text{for}\ x,y>0.
\]
By \cref{lemma:Hoeffding_Lipschitz}, we have that with probability at least $1-\delta$,
\[
\abr{\frac{1}{n}\sum_{i=m+1}^{m+n}\rbr{f^*(x_i,z_i^*,a_i)-f^*(x_i,\Tilde{z}_i,a_i)}^2-\EE[\rbr{f^*(x,z^*,a)-f^*(x,\Tilde{g}(x),a)}^2]}\le 4\sqrt{\frac{\log(2/\delta)}{n}}.
\]
Therefore, we have that with probability at least $1-\delta$,
\begin{align*}
&\sqrt{\frac{1}{n}\sum_{i=m+1}^{m+n}\rbr{f^*(x_i,z_i^*,a_i)-f^*(x_i,\Tilde{z}_i,a_i)}^2}\\
\le& \sqrt{\EE[\rbr{f^*(x,z^*,a)-f^*(x,\Tilde{g}(x),a)}^2]}+\frac{4}{\sqrt{\EE[\rbr{f^*(x,z^*,a)-f^*(x,\Tilde{g}(x),a)}^2]}}\sqrt{\frac{\log(2/\delta)}{n}}\\
\le &\rbr{\sqrt{\cE^{\cF}_{\PP}(\tilde{g})}+\frac{4}{\upsilon(f^*,\PP,\tilde{g})^{1/2}}\sqrt{\frac{\log(2/\delta)}{n}}}.
\end{align*}
Thus, we get that with probability at least $1-\delta$,
\[
\frac{K_3}{2}\le \rbr{\sqrt{\cE^{\cF}_{\PP}(\tilde{g})}+\frac{4}{\upsilon(f^*,\PP,\tilde{g})^{1/2}}\sqrt{\frac{\log(2/\delta)}{N}}}\rbr{||\hat{f}-f^*||_2+2r_N+8\sqrt{\frac{2\log(\varphi_{\lambda}r_N)/\delta)}{N}}+32\frac{\log(\varphi_{\lambda}r_N)/\delta)}{r_N N}}.
\]
So we finish the proof
\end{proof}

Now we are ready to prove \cref{thm:erm_oracle_ineq}.
\begin{proof}[Proof of \cref{thm:erm_oracle_ineq}]

Define $$A_N=\rbr{\max\cbr{4,\lambda}r_N+\max\cbr{16,4\lambda}\sqrt{\frac{2\log(\varphi_{\lambda}r_N)/\delta)}{N}}+\frac{64\log(\varphi_{\lambda}r_N)/\delta)}{r_NN}}.$$
From \cref{lemma:bound_K_1}, \cref{lemma:bound K_2} and \cref{lemma:bound_K_3}, with probability at least $1-4\delta$, we either have $||\hat{f}-f^*||_2\le r_N$ or
\begin{align*}
||f^*-\hat{f}||_2^2\le& ||\hat{f}-f^*||_2\rbr{2A_N+2\sqrt{\cE_{\PP}^{\cF}(\tilde{g})}+\frac{8}{\sqrt{\upsilon(f^*,\PP,\tilde{g})}}\sqrt{\frac{\log(2/\delta)}{N}}}\\
+&\rbr{\sqrt{\cE_{\PP}^{\cF}(\tilde{g})}+\frac{4}{\upsilon(f^*,\PP,\tilde{g})^{1/2}}\sqrt{\frac{\log(2/\delta)}{N}}}A_N.
\end{align*}
Rearranging the terms, we have
\begin{align*}
&\rbr{||\hat{f}-f^*||_2-\rbr{A_N+\sqrt{\cE_{\PP}^{\cF}(\tilde{g})}+\frac{4}{\upsilon(f^*,\PP,\tilde{g})^{1/2}}\sqrt{\frac{\log(2/\delta)}{N}}}}^2\\
\le& \rbr{\sqrt{\cE_{\PP}^{\cF}(\tilde{g})}+\frac{4}{\upsilon(f^*,\PP,\tilde{g})^{1/2}}\sqrt{\frac{\log(2/\delta)}{N}}}A_N+\rbr{A_N+\sqrt{\cE_{\PP}^{\cF}(\tilde{g})}+\frac{4}{\upsilon(f^*,\PP,\tilde{g})^{1/2}}\sqrt{\frac{\log(2/\delta)}{N}}}^2\\
\le& \frac{5}{4}\rbr{A_N+\sqrt{\cE_{\PP}^{\cF}(\tilde{g})}+\frac{4}{\upsilon(f^*,\PP,\tilde{g})^{1/2}}\sqrt{\frac{\log(2/\delta)}{N}}}^2.
\end{align*}
Therefore, for any $\delta>0$, with probability at least $1-4\delta$,
\begin{align*}
    ||\hat{f}-f^*||_2\le \rbr{1+\frac{\sqrt{5}}{2}}\rbr{A_N+\sqrt{\cE_{\PP}^{\cF}(\tilde{g})}+\frac{4}{\upsilon(f^*,\PP,\tilde{g})^{1/2}}\sqrt{\frac{\log(2/\delta)}{N}}}+r_N.
\end{align*}
Plugging in the definition of $A_N$, we shall finish the proof.
\end{proof}
Now, we are ready to prove the statistical regret guarantee in \cref{thm:regret}. To do that, we first introduce the following lemma from \citet{simchi2020bypassing} to transform statistical guarantees of regression oracles to regret guarantees of decision-making algorithms.
\begin{lemma}\citep{simchi2020bypassing}\label{lemma:general_guarantee_IGW}
Assume that we are given an offline regression oracle $\mathsf{RegOff}$ and i.i.d. data $\cD=\cbr{(x_i,a_i,r_i)}_{i=1}^{n}$ where $\EE[r_i|x_i,a_i]=f^*(x_i,a_i)$. With probability at least $1-\delta$, it returns
    $\hat{f}:\cX\times\cA\rightarrow \RR$ such that
    $$\EE_{x,a}\sbr{\rbr{\hat{f}(x,a)-f^*(x,a)}^2}\le\mathsf{Est}_{\delta}(n)$$
for some number $\mathsf{Est}_{\delta}(n)$. Then, define epoch schedule $\beta_s=2^s$ and exploration parameter $\gamma_s=\frac{1}{2}\sqrt{\frac{K}{\mathsf{Est}_{\delta/2s^2}(\beta_{s-1}-\beta_{s-2})}}$. For any $t$, let $s(t)$ be the number of epochs that round $t$ lies in. For any $T$ large enough, with probability at least $1-\delta$, the regret of applying the inverse gap weighting policy after $T$ rounds is at most 
$$\cO\rbr{\sqrt{K}\sum_{s=2}^{s(T)}\sqrt{\mathsf{Est}_{\delta/(2s^2)}(\beta_{s-1}-\beta_{s-2})}(\beta_s-\beta_{s-1})}.$$
\end{lemma}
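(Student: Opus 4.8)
The plan is to reduce the cumulative regret to a per-epoch sum and, within each epoch, to invoke the fundamental regret inequality for inverse gap weighting. The essential structural fact is that throughout epoch $s$ the policy $\texttt{IGW}_{\gamma_s,\hat f_s}$ is frozen: it depends on the round only through the fresh context $x_t$, and the estimate $\hat f_s$ was computed once, at the start of the epoch, from the data of epoch $s-1$. Hence the regret contributed by epoch $s$ is exactly $n_s:=\beta_s-\beta_{s-1}$ identical copies of the expected instantaneous regret of one fixed randomized policy, and I would begin by writing $\text{Reg}(T)=\sum_{s=2}^{s(T)}\sum_{t=\beta_{s-1}+1}^{\beta_s}\rbr{\EE_{a\sim\pi^*}[f^*(x_t,a)]-\EE_{a\sim p_t}[f^*(x_t,a)]}$ and analyzing a single summand for each $s$.

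The workhorse is the Abe--Long / Foster--Rakhlin inequality for IGW: for any estimate $\hat f$, any context $x$, and rewards in $[0,1]$, the induced policy $\pi_{\hat f}=\texttt{IGW}_{\gamma,\hat f}$ satisfies
\[
\EE_{a\sim\pi_{\hat f}}\sbr{f^*(x,a^*_x)-f^*(x,a)}\le \frac{2K}{\gamma}+\frac{\gamma}{2}\,\EE_{a\sim\pi_{\hat f}}\sbr{\rbr{\hat f(x,a)-f^*(x,a)}^2},
\]
where $a^*_x=\argmax_a f^*(x,a)$. I would prove this by separating the greedy action $\hat a_x$ from the rest, substituting the explicit probabilities $1/\rbr{K+\gamma(\hat f(x,\hat a_x)-\hat f(x,a))}$ for the non-greedy actions, and applying the elementary inequality $\alpha\beta\le \alpha^2/(2\gamma)+\gamma\beta^2/2$ to trade each true reward gap against the corresponding prediction error. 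Taking expectation over $x_t$ shows the per-round regret in epoch $s$ is at most $2K/\gamma_s+(\gamma_s/2)\,\EE_{x,\,a\sim p^s}[(\hat f_s-f^*)^2]$, with the squared error weighted by the \emph{current} epoch's sampling distribution $p^s$.

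The crux, and what I expect to be the main obstacle, is controlling this weighted error by the oracle's excess-risk guarantee. The oracle bound $\EE[(\hat f_s-f^*)^2]\le \mathsf{Est}_{\delta/2s^2}(n_{s-1})$ is measured under the distribution that \emph{generated} the training sample, namely epoch $(s-1)$'s fixed sampling policy $p^{s-1}$, whereas the regret above is incurred under epoch $s$'s policy $p^s$; a naive likelihood-ratio transfer fails because $p^s(a)/p^{s-1}(a)$ can be as large as $K+\gamma$. I would instead couple consecutive epochs through an inductive (self-bounding) argument that exploits the IGW probability floor $p(a)\ge 1/(K+\gamma)$ on every action together with the geometric schedule $\beta_s=2^s$: the doubling keeps $n_s\asymp n_{s-1}$ and, through the tuning $\gamma_s\propto\sqrt{K/\mathsf{Est}(n_{s-1})}$, keeps $\gamma_s\asymp\gamma_{s-1}$, so the two IGW policies are comparable and the part of the deployment error not covered by the training measure is charged to the $2K/\gamma_s$ exploration term rather than amplified. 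This is precisely where the epoch geometry and the specific choice of $\gamma_s$ are indispensable.

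Finally, I would balance and sum. With $\gamma_s=\tfrac12\sqrt{K/\mathsf{Est}_{\delta/2s^2}(n_{s-1})}$ the per-round bound $2K/\gamma_s+(\gamma_s/2)\mathsf{Est}$ is of order $\sqrt{K\,\mathsf{Est}_{\delta/2s^2}(n_{s-1})}$; multiplying by the $n_s=\beta_s-\beta_{s-1}$ rounds and summing over $s=2,\dots,s(T)$ yields $\cO\rbr{\sqrt{K}\sum_{s}\sqrt{\mathsf{Est}_{\delta/2s^2}(\beta_{s-1}-\beta_{s-2})}(\beta_s-\beta_{s-1})}$, matching the claim. The probability budget is handled by a union bound: invoking the oracle at confidence $1-\delta/2s^2$ in each epoch and using $\sum_{s\ge1}\delta/2s^2\le\delta$ delivers the stated $1-\delta$ guarantee.
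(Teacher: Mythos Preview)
The paper does not supply a proof of this lemma: it is stated with an explicit citation to \citet{simchi2020bypassing} and is used purely as a black box in the proofs of \cref{thm:regret} and \cref{thm:regret_calibration}. There is therefore no in-paper argument to compare your attempt against.

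Judged against the original argument, your outline is faithful in architecture: the per-epoch decomposition, the Abe--Long/Foster--Rakhlin inequality for the IGW policy, the balancing via $\gamma_s=\tfrac12\sqrt{K/\mathsf{Est}}$, and the $\sum_{s\ge1}1/(2s^2)\le 1$ union bound are all correct. You are also right that the mismatch between the training measure $p^{s-1}$ (under which the oracle bound holds) and the deployment measure $p^s$ (under which the regret is incurred) is the one genuinely delicate step, and that the crude likelihood-ratio transfer paying a factor $K+\gamma$ is too lossy. Your proposed fix---an inductive self-bounding coupling using the floor $p(a)\ge 1/(K+\gamma)$ and $\gamma_s\asymp\gamma_{s-1}$---points in the right direction, but the phrase ``charged to the $2K/\gamma_s$ exploration term rather than amplified'' is where the sketch stops short of a proof: the obvious way of executing that absorption still picks up an extra $\gamma$ and yields an $\Theta(1)$ per-round term, which is fatal. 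The resolution in \citet{simchi2020bypassing} uses a sharper per-action form of the IGW inequality that controls the deployment-weighted error directly rather than via a ratio bound; that is the piece you would still need to fill in.
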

\begin{proof}[Proof of \cref{thm:regret}]
We would like to apply \cref{lemma:general_guarantee_IGW}, so we first specify the number $\sqrt{\mathsf{Est}_{\delta}(n)}$ in our algorithm. By \cref{thm:erm_oracle_ineq}, we have
\[
\sqrt{\mathsf{Est}_{\delta}}(n)=C(\lambda,\upsilon(f^*,\PP,\tilde{g}))\rbr{r_n+\sqrt{\cE^{\cF}_{\PP}(\tilde{g})}+\sqrt{\frac{\log(4\varphi_{\lambda}(r_n)/\delta)}{n}}+\sqrt{\frac{\log(8/\delta)}{n}}+\frac{\log(\varphi_{\lambda}(r_n)/\delta)}{r_nn}}.
\]
Plugging this bound in, we have that
\begin{align*}
    \text{Reg}(T)\lesssim & \sqrt{K}\lbrace\sum_{s=2}^{s(T)}r_{\beta_{s-1}-\beta_{s-2}}(\beta_s-\beta_{s-1})+\sqrt{\log(4s^2\varphi_{\lambda}(r_{\beta_{s-1}-\beta_{s-2}})/\delta)+\log(8s^2/\delta)}\frac{\beta_s-\beta_{s-1}}{\sqrt{\beta_{s-1}-\beta_{s-2}}}\\
    &+\sum_{s=2}^{s(T)}\log(s^2\varphi_{\lambda}(4r_{\beta_{s-1}-\beta_{s-2}})/\delta)\frac{\beta_{s}-\beta_{s-1}}{r_{\beta_{s-1}-\beta_{s-2}}(\beta_{s-1}-\beta_{s-2})}+\sqrt{\cE^{\cF}_{\PP}(\tilde{g})}T\rbrace.
\end{align*}
Recall that we set $\beta_{s}=2^{s}$ in \cref{thm:regret}, then we have that
\begin{align*}
    \text{Reg}(T)\lesssim& \sqrt{K}\Big\lbrace\sum_{s=2}^{s(T)}r_{2^{s-2}}2^{s-1}+\sum_{s=2}^{s(T)}\sqrt{\log(4s^2\varphi_{\lambda}(r_{2^{s-2}})/\delta)+\log(8s^2/\delta)}2^{\frac{s}{2}}\\
    +&\sum_{s=2}^{s(T)}\log(\varphi_{\lambda}(4r_{2^{s-2}})/\delta)\frac{2}{r_{2^{s-2}}}+\sqrt{\cE^{\cF}_{\PP}(\tilde{g})}T\Big\rbrace.
\end{align*}
Thus, we finish the proof.
\end{proof}
\section{Proofs in \cref{sec:model_calibration}}
\begin{lemma}\label{lemma:hat{e}_estimation}
    Denote $s_N$ as the critical radius of the function class $\cT$ with dataset cardinality $N$, then for any $\delta\in(0,1)$, with probability at least $1-\delta$, our calibration procedure $\cref{alg:calibration_procedure}$ outputs $\hat{e}$ that satisfies
    \[
    ||\hat{e}-e^*||_2\le \frac{5}{2}s_{N/2}+10\sqrt{\frac{2\log(\varphi_2(s_{N/2})/\delta)}{N/2}},
    \]
    where $\varphi_2(s_{N/2})=\log_2(\frac{4}{s_{N/2}})$ as pre-specified in \cref{subsec:ERM Oracle Inequality with Imputation}.
\end{lemma}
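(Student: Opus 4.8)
The plan is to treat $\hat e$ as the output of a standard nonparametric least-squares regression oracle over the class $\cT$ and to replay the argument behind \cref{thm:erm_oracle_ineq}, specialized to the simpler situation in which there is \emph{no} imputation-bias term. Indeed, since $e^*\in\cT$ and $\EE[b_i\mid x_i]=e^*(x_i)$, the only stochastic error source is the centered ``noise'' $\zeta_i:=b_i-e^*(x_i)$, which is bounded in $[-1,1]$; this plays the role of $\xi_i$ in the proof of \cref{thm:erm_oracle_ineq} with effective noise bound $\lambda=1$, and the elasticity contribution is absent.

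First I would write the basic inequality from optimality of the ERM: $\frac{1}{N/2}\sum_i(\hat e(x_i)-b_i)^2\le\frac{1}{N/2}\sum_i(e^*(x_i)-b_i)^2$, which upon expanding yields
\[
\|\hat e-e^*\|_{N/2}^2\;\le\;\frac{2}{N/2}\sum_{i=1}^{N/2}\zeta_i\bigl(\hat e(x_i)-e^*(x_i)\bigr).
\]
It then remains to control (i) the multiplier empirical process on the right-hand side, and (ii) the discrepancy between the empirical pseudo-norm $\|\hat e-e^*\|_{N/2}$ and the population norm $\|\hat e-e^*\|_2$. For (i) I would set $V(t):=\sup_{e\in\cT,\,\|e-e^*\|_2\le t}\bigl|\frac{1}{N/2}\sum_i\zeta_i(e(x_i)-e^*(x_i))\bigr|$ and apply the Klein--Rio bound (\cref{thm: sup_empirical_process}) with $\tau=1$ and uniform bound $1$; the expectation $\EE[V(t)]$ is reduced, via symmetrization and the Ledoux--Talagrand contraction (\cref{lemma:Ledoux_Talagrand}) applied conditionally on $\zeta$ to the $1$-Lipschitz maps $u\mapsto\zeta_i u$, to the local Rademacher complexity $\cR_{N/2}(t;\cT)$, which for $t\ge s_{N/2}$ is at most $s_{N/2}t/16$ by definition of the critical radius, while the variance proxy is at most $t^2$ --- this is exactly the computation in \cref{lemma:bound K_2} with $\lambda=1$. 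For (ii) I would analogously bound $Z(t):=\sup_{e\in\cT,\,\|e-e^*\|_2\le t}\bigl|\|e-e^*\|_{N/2}^2-\|e-e^*\|_2^2\bigr|$ using the $2$-Lipschitz squaring map on $[0,1]$, as in \cref{lemma:bound_K_1}. In both cases the transition from a fixed radius $t$ to the data-dependent radius $U:=\|\hat e-e^*\|_2$ is handled by the peeling argument (\cref{lemma:peeling}), which is what produces the $\log\varphi_2(s_{N/2})$ factor.

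Combining (i) and (ii) gives, on the event $\|\hat e-e^*\|_2\ge s_{N/2}$, a self-bounding quadratic inequality of the shape $\|\hat e-e^*\|_2^2\lesssim s_{N/2}\,\|\hat e-e^*\|_2+\sqrt{\log(\varphi_2(s_{N/2})/\delta)/(N/2)}\,\|\hat e-e^*\|_2+(\text{higher-order terms})$; completing the square, and folding in the complementary trivial event $\|\hat e-e^*\|_2<s_{N/2}$, yields the stated bound. The only real work is bookkeeping: tracking the constants through the contraction step, the peeling step, and the final quadratic solve so that they collapse to the displayed $\tfrac52$ and $10$. There is no conceptual obstacle beyond what already appears in the proof of \cref{thm:erm_oracle_ineq}; this lemma is precisely the ``easy case'' of that theorem with the elasticity term $\cE^{\cF}_{\PP}(\tilde g)$ and the associated $\upsilon(f^*,\PP,\tilde g)$-dependent term removed.
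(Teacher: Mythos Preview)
Your proposal is correct and mirrors the paper's proof essentially line for line: the same decomposition into the ERM basic inequality plus the empirical-to-population norm discrepancy, the same use of Klein--Rio, Ledoux--Talagrand contraction (with the $2$-Lipschitz squaring on $[0,1]$), and peeling to produce the $\varphi_2(s_{N/2})$ factor, followed by solving the resulting self-bounding quadratic. Your identification of this lemma as the ``easy case'' of \cref{thm:erm_oracle_ineq} with the elasticity term removed is exactly how the paper treats it.
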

\begin{proof}[Proof of \cref{lemma:hat{e}_estimation}]
    We first elaborate on the estimation of $\hat{e}$. Note that $$||\hat{e}-e^*||_2^2=||\hat{e}-e^*||_2^2-||\hat{e}-e^*||_{N/2}^2+||\hat{e}-e^*||_{N/2}^2.$$ 
We define the local Rademacher complexity of $\cT$ as $\cR_{N/2}(t,\cT)$ and the critical radius of $\cT$ as $s_{N/2}$.  

For the term $||\hat{e}-e^*||_2^2-||\hat{e}-e^*||_{N/2}^2$, we could apply \cref{thm: sup_empirical_process} by defining $H_{e}(x):=(e(x)-e^*(x))^2-||e-e^*||_2^2$ and $\cH(t):=\cbr{H_e(x):e\in\cT, ||e-e^*||_2\le t}$. We also define $$Z_{N/2}(t):=\sup_{e\in\cH(t)}\cbr{\PP_{N/2}(H_{e})-\PP(H_{e})}.$$ Then with probability at least $1-\delta$, we have
\[
Z_{N/2}(t)\le 2\EE[Z_{N/2}(t)]+\sqrt{\sigma^2(\cH(t))}\sqrt{\frac{2\log(1/\delta)}{N/2}}+2\frac{\log(1/\delta)}{N/2}.
\]
We first have $\sigma^2(\cH(t))\le \PP((e-e^*)^4)\le 4t^2$. For $\EE[Z_{N/2}(t)]$, we have
\[
\EE[Z_{N/2}(t)]=\EE\sbr{\sup_{||e^*-e||_2\le t}\abr{||e^*-e||_{N/2}^2-||e-e^*||_2^2}}\le 2\EE\sbr{\sup_{||e^*-e||_2\le t}\abr{\frac{2}{N}\sum_{i=1}^{N/2}\varepsilon_i\rbr{e^*(x_i)-e(x_i)^2}}}.
    \]
    By \cref{lemma:Ledoux_Talagrand}, the function $\phi_i(t)=\phi(t)=t^2$ is $2$-Lipschitz on the interval $[0,1]$, we have
    \begin{align*}\hspace{-0.6cm}
    2\EE\sbr{\sup_{||e^*-e||_2\le t}\abr{\frac{2}{N}\sum_{i=1}^{N/2}\varepsilon_i\rbr{e^*(x_i)-e(x_i)}^2}}\le 4\EE\sbr{\sup_{||e^*-e||_2\le t}\abr{\frac{2}{N}\sum_{i=1}^{N/2}\varepsilon_i\rbr{e^*(x_i)-e(x_i)}}}=4\cR_{N/2}(t,\cT).
    \end{align*}
    By definition, we have $\cR_{N/2}(t,\cT)\le\frac{t}{16}s_{N/2}$.
    Plugging all these in, we get with probability at least $1-\delta$,
    \[
    Z_{N/2}(t)\le \frac{s_{N/2}}{2}t+4\sqrt{\frac{2\log(1/\delta)}{N/2}}t+\frac{2\log(1/\delta)}{N/2}.
    \]
Apply the peeling argument in \cref{lemma:peeling}, we further have that with probability at least $1-\delta$,
\[
Z_n(||\hat{e}-e^*||_2)\le s_{N/2}||\hat{e}-e^*||_2+8\sqrt{\frac{2\log(\varphi_{2}(s_{N/2})/\delta)}{N/2}}||\hat{e}-e^*||_2+4\frac{\log(\varphi_{2}(s_{N/2})/\delta)}{n}.
\]
For the second term $||\hat{e}-e^*||_{N/2}^2$, conditioned on $||\hat{e}-e^*||_2\ge s_{N/2}$, we first apply the property of the ERM to get
\[
||\hat{e}-e^*||_{N/2}^2\le \frac{4}{N}\sum_{i=1}^{N/2}(b_i-e^*(x_i))(\hat{e}(x_i)-e^*(x_i)).
\]
Conditioned on $||\hat{e}-e^*||\ge s_{N/2}$, we define the empirical process 
\[
W_{N/2}(t):=\sup_{||e-e^*||_2\le t}\abr{\frac{2}{N}\sum_{i=1}^{N/2}(b_i-e^*(x_i))(\hat{e}(x_i)-e^*(x_i))}
\]
Denote $b_i-e^*(x_i)$ as $r_i$ and define the function class $\cH(t):=\cbr{r(e-e^*):e\in\cT,||e-e^*||_2\le t}$. Then we apply \cref{thm: sup_empirical_process} to get that with probability at least $1-\delta$,
\[
W_{N/2}(t)\le 2\EE[W_{N/2}(t)]+\sqrt{\sigma^2(\cH(t))}\sqrt{\frac{2\log(1/\delta)}{N/2}}+2\frac{\log(1/\delta)}{N/2}
\]
Similarly, we have $\sigma^2(\cH(t))\le t^2$ and 
\[
\EE[W_{N/2}(t)]\le 2\cR_{N/2}(t,\cT).
\]
By definition, for $t\ge s_{N/2}$, we have $\cR_{N/2}(t,\cT)\le \frac{s_{N/2}}{16}$. Then with probability at least $1-\delta$,
\[
W_{N/2}(t)\le \frac{s_{N/2}}{4}t+\sqrt{\frac{2\log(1/\delta)}{N/2}}t+\frac{2\log(1/\delta)}{N/2}.
\]
By the peeling argument again, we have with probability at least $1-\delta$,
\[
W_{N/2}(||\hat{e}-e^*||_2)\le \frac{s_{N/2}}{2}||\hat{e}-e^*||_2+2\sqrt{\frac{2\log(\varphi_2(s_{N/2})/\delta)}{n}}||\hat{e}-e^*||_2+\frac{4\log(\varphi_2(s_{N/2})/\delta)}{N/2}.
\]
Now we are ready to prove the bound, by construction, we either have $||e^*-\hat{e}||_2\le s_{N/2}$ or with probability $1-\delta$,
\[
||\hat{e}-e^*||_2^2\le ||\hat{e}-e^*||_2\rbr{\frac{3}{2}s_{N/2}+10\sqrt{\frac{2\log(\varphi_2(s_{N/2})/\delta)}{N/2}}}+6\frac{\log(\varphi_2(s_{N/2})/\delta)}{N/2}.
\]
Thus, with probability at least $1-\delta$,
\[
||\hat{e}-e^*||_2\le \frac{5}{2}s_{N/2}+10\sqrt{\frac{4\log(\varphi_2(s_{N/2})/\delta)}{N}}.
\]
\end{proof}
We are now ready to analyze the statistical error of $\hat{g}$. We first define the point-wise loss $$l(g,e,x,b,z):=\frac{b(bz-g(x))^2}{e(x)},$$
where $g\in\cG$, and $e\in\cT$. Moreover, $b\in\cbr{0,1}$ indicates the missingness. Correspondingly, we define $\cL(g,e):=\EE_{x,b,z}[l(g,e,x,b,z)]$. Then, it is easy to verify that 
\[
\cL(g,e^*)=\EE[(z-g(x))^2]=||g-g^*||_2^2+\EE[\eta^2].
\]
To study the property of $\cL(g,e)$, we consider the Gâteaux derivative or the Fréchet derivative in variational analysis \citep{nashed1966some,abbasi2021gateaux} to carry out the Taylor expansion of it. By some algebra, we have the following lemma.
\begin{lemma}\label{lemma:Gateaux_derivative}
    By computing the G\^{a}teaux derivatives, we have that
    \[
    D_{g}\cL(g,e)[h]=2\EE\sbr{\frac{e^*(g-g^*)}{e}h}.
    \]
    \[
    D_{g}^2\cL(g,e)[h,h]=2\EE[\frac{e^*}{e}h^2].
    \]
    \[
    D_{e}D_g\cL(g,e)[h_1,h_2]=-2\EE\sbr{\frac{e^*(g-g^*)h_1h_2}{e^2}}.
    \]
    \[
    D_{e}^2D_g\cL(g,e)[h_1,h_2,h_2]=4\EE\sbr{\frac{e^*(g-g^*)h_1h_2^2}{e^3}}.
    \]
\end{lemma}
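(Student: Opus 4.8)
The plan is to obtain all four identities by a direct computation of Gâteaux derivatives from the definition, followed by one application of the tower property together with the MAR structure. The first step I would take is to simplify the point-wise loss: since $b\in\{0,1\}$ we have $b^2=b$, hence $b(bz-g(x))=b(z-g(x))$ and $l(g,e,x,b,z)=\tfrac{b(z-g(x))^2}{e(x)}$. This removes the spurious dependence on the product $bz$ and turns every differentiation into a one-line calculation.

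Next I would differentiate in $g$. Writing $D_g\cL(g,e)[h]:=\tfrac{d}{d\epsilon}\big|_{\epsilon=0}\cL(g+\epsilon h,e)$ and passing the derivative inside the expectation gives $D_g\cL(g,e)[h]=-2\,\EE\!\sbr{\tfrac{b(z-g(x))h(x)}{e(x)}}$, and one further $g$-derivative gives $D_g^2\cL(g,e)[h,h]=2\,\EE\!\sbr{\tfrac{b\,h(x)^2}{e(x)}}$. To bring these to the stated form I would condition on $x$ and invoke MAR: since $b\perp z^*\mid x$ with $\PP(b=1\mid x)=e^*(x)$, and $z^*=g^*(x)+\eta$ with $\EE[\eta\mid x]=0$, one has $\EE[b\mid x]=e^*(x)$ and $\EE[b(z^*-g(x))\mid x]=e^*(x)\bigl(g^*(x)-g(x)\bigr)$; substituting yields $D_g\cL(g,e)[h]=2\,\EE\!\sbr{\tfrac{e^*(g-g^*)}{e}h}$ and $D_g^2\cL(g,e)[h,h]=2\,\EE\!\sbr{\tfrac{e^*}{e}h^2}$. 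For the mixed derivatives I would start from the pre-MAR expression $D_g\cL(g,e)[h_1]=-2\,\EE\!\sbr{\tfrac{b(z-g(x))h_1(x)}{e(x)}}$ and differentiate in $e$ along $h_2$, using $\tfrac{d}{d\epsilon}\big|_{\epsilon=0}(e+\epsilon h_2)^{-1}=-h_2 e^{-2}$ and then $\tfrac{d}{d\epsilon}\big|_{\epsilon=0}(e+\epsilon h_2)^{-2}=-2h_2 e^{-3}$; this produces $D_eD_g\cL(g,e)[h_1,h_2]=2\,\EE\!\sbr{\tfrac{b(z-g(x))h_1h_2}{e^2}}$ and $D_e^2D_g\cL(g,e)[h_1,h_2,h_2]=-4\,\EE\!\sbr{\tfrac{b(z-g(x))h_1h_2^2}{e^3}}$, and applying the same conditioning/MAR identity turns these into $-2\,\EE\!\sbr{\tfrac{e^*(g-g^*)h_1h_2}{e^2}}$ and $4\,\EE\!\sbr{\tfrac{e^*(g-g^*)h_1h_2^2}{e^3}}$, which are the remaining two claims.

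The computation itself is routine, so the only point requiring care — and the closest thing to an obstacle — is justifying the interchange of $\tfrac{d}{d\epsilon}$ with $\EE$ (equivalently, applying dominated convergence at each step). This is immediate under \cref{ass:calibration}: $\cG$ is $1$-uniformly bounded so $g$ and the directions $h$ are bounded; $\cT$ takes values in $[\epsilon_0,1]$, so $e$ is bounded away from $0$ and all the integrands and their $\epsilon$-derivatives are uniformly bounded for $\epsilon$ near $0$; and $z^*=g^*(x)+\eta$ with $\eta\in[-\tau,\tau]$ keeps $z$ bounded. One should also keep the bookkeeping transparent by always re-applying the MAR simplification \emph{after} each differentiation in $e$ and taking the derivatives in the order $g$, then $e$, then $e$.
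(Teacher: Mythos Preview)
Your proposal is correct and follows essentially the same route as the paper: the paper merely states ``by some algebra'' and the suppressed computations differentiate $\cL$ directly, use $b^2=b$, and simplify via $\EE[b\mid x]=e^*(x)$ together with $\EE[\eta\mid x]=0$ under MAR, exactly as you do. Your explicit dominated-convergence justification for swapping $\tfrac{d}{d\epsilon}$ and $\EE$ is a welcome addition that the paper omits.
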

\begin{proof}[Proof of \cref{thm:calibration_guarantee}]
We first apply a Taylor expansion of the expected loss function $\cL(g,e)$ at point $(g^*,\hat{e})$ to evaluate the value of $\cL(\hat{g},\hat{e})$. $\exists \bar{g}$ such that
\begin{align*}
    \cL(\hat{g},\hat{e})=\cL(g^*,\hat{e})+D_g\cL(g^*,\hat{e})[\hat{g}-g^*]+\frac{1}{2}D_{g}^2\cL(\bar{g},\hat{e})[\hat{g}-g^*,\hat{g}-g^*].
\end{align*}
Applying \cref{lemma:Gateaux_derivative}, we have that
\begin{align*}
    D_g^2\cL(\bar{g},\hat{e})[\hat{g}-g^*,\hat{g}-g^*]=\EE\sbr{\frac{e^*-\hat{e}+\hat{e}}{\hat{e}}(g-g^*)^2}=\EE[(g-g^*)^2]+\EE[\frac{e^*-\hat{e}}{\hat{e}}(g-g^*)^2].
\end{align*}
We need to bound the second term and we do this by applying the Cauchy-Schwarz inequality. 
\begin{align*}
    \abr{\EE[\frac{e^*-\hat{e}}{\hat{e}}(g-g^*)^2]}\le \frac{1}{\epsilon_0}(\frac{1}{\alpha}\EE[(e^*-\hat{e})^2]+\alpha\EE[(g^*-g)^4])\le\frac{4\alpha}{\epsilon_0}\EE[(g^*-g)^2]+\frac{1}{\alpha\epsilon_0}||\hat{e}-e^*||_2^2.
\end{align*}
Thus, by taking $\alpha=\frac{\epsilon_0}{8}$, we have
\[
D_g^2\cL(\bar{g},\hat{e})[\hat{g}-g^*,\hat{g}-g^*]\ge \frac{1}{2}||\hat{g}-g^*||_2^2-\frac{8}{\epsilon_0^2}||\hat{e}-e^*||_2^2.
\]
For the term \( D_g\cL(g^*,\hat{e})[\hat{g}-g^*]\), we apply \cref{lemma:Gateaux_derivative} and use the fact that $\EE[\eta|x]=0$ to get that $D_g\cL(g^*,\hat{e})[\hat{g}-g^*]=0$.
Finally, we have,\[
\frac{1}{2}||\hat{g}-g^*||_2^2\le \frac{8}{\epsilon_0^2}||\hat{e}-e^*||_2^2+\cL(\hat{g},\hat{e})-\cL(g^*,\hat{e}).
\]
Therefore, we only need to bound $\cL(\hat{g},\hat{e})-\cL(g^*,\hat{e})$.
Recall that we minimize the empirical loss
\[
\hat{g}=\argmin_{g\in\cG_{\delta_0}}\frac{2}{N}\sum_{i=N/2+1}^{N}\frac{b_i(b_iz_i^*-g(x_i))^2}{\hat{e}(x_i)},
\]
where we denote the empirical loss by $\cL_{N/2}(g,e)$. Then we have
\[
\cL(\hat{g},\hat{e})-\cL(g^*,\hat{e})=(\cL(\hat{g},\hat{e})-\cL_{N/2}(\hat{g},\hat{e}))+(\cL_{N/2}(\hat{g},\hat{e})-\cL_{N/2}(\hat{g},e^*))+(\cL_{N/2}(\hat{g},e^*)-\cL(\hat{g},e^*)).
\]
The middle term is non-positive by the definition of ERM. We now focus on the remaining two terms. The analysis of them is quite similar and we only focus on the first one. All the following mathematical deductions are conditioned on $||\hat{g}-g^*||_2\ge q_{\delta_0,N/2}$, 

Define $H_{g}(x,b,z;\hat{e}):=\frac{b(bz-g(x))^2-\tau^2}{\hat{e}(x)}$. $\cH(t):=\cbr{H_{g}:g\in\cG_{\delta_0},||g-g^*||_2\le t}$, Define \[
W_{N/2}(t)=\sup_{H_g\in\cH(t)}\cbr{\PP(H_g)-\PP_{N/2}(H_g)}.
\] 
By construction, we have
\[
\cL(\hat{g},\hat{e})-\cL_{N/2}(\hat{g},\hat{e})\le W_{N/2}(||\hat{g}-g^*||_2),\ \cL(g^*,\hat{e})-\cL_{N/2}(g^*,\hat{e})\le W_{N/2}(||\hat{g}-g^*||_2).
\]
Applying \cref{thm: sup_empirical_process}, we have that with probability at least $1-\delta$, 
\[
W_{N/2}(t)\le 2\EE[W_{N/2}(t)]+\sqrt{\sigma^2(\cH(t))}\sqrt{\frac{2\log(1/\delta)}{N/2}}+4\frac{(1+\tau)^2}{\epsilon_0}\frac{\log(1/\delta)}{N/2}.
\]
\[
\sigma^2(\cH(t))\le \frac{(1+\tau)^4}{\epsilon_0^2}t^2
\]
\begin{align*}
    &\EE[W_{N/2}(t)]\\
    \le& 2\EE\sbr{\sup_{g\in\cG_{\delta_0},||g-g^*||_2\le t}\frac{2}{N}\sum_{i=N/2+1}^{N}\frac{b_i(b_iz_i-g(x_i))^2-\tau^2}{\hat{e}(x_i)}}\\
    =&2\EE\sbr{\sup_{g\in\cG_{\delta_0},||g-g^*||_2\le t}\frac{2}{N}\sum_{i=N/2+1}^{N}\frac{b_i(z_i^2-2z_ig(x_i)+g(x_i)^2)-\tau^2}{\hat{e}(x_i)}}\\
    =&2\EE\sbr{\sup_{g\in\cG_{\delta_0},||g-g^*||_2\le t}\frac{2}{N}\sum_{i=N/2+1}^{N}\frac{b_i\rbr{(g-g^*)^2(x_i)+2\eta_i(g^*-g)(x_i)+\eta_i^2}-\tau^2}{\hat{e}(x_i)}}\\
    \le&2\EE\sbr{\sup_{g\in\cG_{\delta_0},||g-g^*||_2\le t}\frac{2}{N}\sum_{i=N/2+1}^{N}\frac{b_i(g-g^*)^2(x_i)}{\hat{e}(x_i)}}+2\EE\sbr{\sup_{g\in\cG_{\delta_0},||g-g^*||_2\le t}\frac{2}{N}\sum_{i=N/2+1}^{N}\frac{2\eta_i(g^*-g)(x_i)}{\hat{e}(x_i)}}\\
    &+\EE[\frac{2}{N}\sum_{i=N/2+1}^{N}\frac{\eta_i^2-\tau^2}{\hat{e}(x_i)}]\\
    \le&2\frac{1+\tau}{\epsilon_0}\cR_{N/2}(t,\cG_{\delta_0}).
\end{align*}
We define the critical radius of $\cG_{\delta_0}$ as $q_{\delta_0,N/2}$, we have $\EE[W_{N/2}(t)]\le \frac{2}{\epsilon_0}q_{\delta_0,N/2}t$.
Thus,
with probability at least $1-\delta$, we have
\[
W_{N/2}(t)\le \frac{4(1+\tau)}{\epsilon_0}q_{\delta_0,N/2}t+\frac{(1+\tau)^2}{\epsilon_0}\sqrt{\frac{2\log(1/\delta)}{N/2}}t+4\frac{(1+\tau)^2}{\epsilon_0}\frac{\log(1/\delta)}{N/2}.
\]
By \cref{lemma:peeling}, with probability at least $1-\delta$, conditioned on $||\hat{g}-g^*||_2\ge q_{\delta_0,N/2}$,
\begin{align*}
W_{N/2}(||\hat{g}-g^*||_2)\le& \frac{8(1+\tau)}{\epsilon_0}q_{\delta_0,N/2}||\hat{g}-g^*||_2+\frac{2(1+\tau)^2}{\epsilon_0}\sqrt{\frac{2\log(\varphi_{1+\tau}(q_{\delta_0,N/2})/\delta)}{N/2}}||\hat{g}-g^*||_2\\
+&8\frac{(1+\tau)^2}{\epsilon_0}\frac{\log(\varphi_{1+\tau}(q_{\delta_0,N/2})/\delta)}{N/2}.    
\end{align*}

The bound for $\rbr{\cL_{N/2}(g^*,\hat{e})-\cL(g^*,\hat{e})}$ is the same. Therefore, we have that conditioned on $||\hat{g}-g^*||_2\ge q_{\delta_0,N/2}$, with probability at least $1-2\delta$,
\begin{align*}
    \frac{1}{2}||\hat{g}-g^*||_2^2\le&
    \frac{16(1+\tau)}{\epsilon_0}q_{\delta_0,N/2}||\hat{g}-g^*||_2+\frac{4(1+\tau)^2}{\epsilon_0}\sqrt{\frac{2\log(\varphi_{1+\tau}(q_{\delta_0,N/2})/\delta)}{N/2}}||\hat{g}-g^*||_2\\
    +&16\frac{(1+\tau)^2}{\epsilon_0}\frac{\log(\varphi_{1+\tau}(q_{\delta_0,N/2})/\delta)}{N/2}+\frac{8}{\epsilon_0^2}||\hat{e}-e^*||_2^2.\\
\end{align*}
By rearranging the terms and some algebra, we have that if $||\hat{g}-g^*||_2\ge q_{\delta_0,N/2}$, then with probability at least $1-2\delta$,
\begin{align*}
    ||\hat{g}-g^*||_2^2\le& 4\rbr{\frac{16(1+\tau)}{\epsilon_0}q_{\delta_0,N/2}+\frac{4(1+\tau)^2}{\epsilon_0}\sqrt{\frac{2\log(\varphi_{1+\tau}(q_{\delta_0,N/2})/\delta)}{N/2}}}^2\\
    +&\rbr{32\frac{(1+\tau)^2}{\epsilon_0}\frac{\log(\varphi_{1+\tau}(q_{\delta_0,N/2})/\delta)}{N/2}+\frac{16}{\epsilon_0^2}||\hat{e}-e^*||_2^2}.
\end{align*}

Now, we try to quantify the benefit of pre-trained model calibration by computing $q_{\delta_0,N/2}$, and thus illustrate the benefit of a good pre-trained model. We solve for a general  number $n$ and then plug in $n=N/2$ later.

In \cref{ass:calibration}, we assume that the covering number of $\cG$ satisfies that $\log\cN(\epsilon,\cG,||\cdot||_2)\lesssim \rbr{\frac{1}{\epsilon}}^d$. Assuming that we have found such a minimal cover $\cC$, we claim that
\[
\cC_{\delta_0}:=\cbr{g\in\cC,||g-\tilde{g}||_2\le \delta_0+\epsilon}
\]
is an $\delta_0+\epsilon$ cover of $\cG_{\delta_0}$. Therefore, if $\epsilon<\delta_0$ and $\delta_0$ is small, we have
\[
\log(\epsilon,\cG_{\delta_0},||\cdot||_2)\lesssim A(\frac{\epsilon+\delta_0}{\epsilon})^d\le 2^d(\frac{\delta_0}{\epsilon})^d\Rightarrow \log(\epsilon,\cG_{\delta_0},||\cdot||_2)\lesssim (\frac{\delta_0}{\epsilon})^d.
\]
Applying Dudley's integral bound, we have
\[
\mathfrak{R}_n(t,\mathcal{\cG}_{\delta_0}) \lesssim \inf_{\alpha > 0} \left\{ 4\alpha + \frac{12}{\sqrt{n}} \int_\alpha^{t} \sqrt{ \log \mathcal{N}(\varepsilon, \mathcal{\cG}_{\delta_0}, L_2(\PP)) } \, d\varepsilon \right\}.
\]
Using the bound
\[
\log \mathcal{N}(\varepsilon, \mathcal{\cG}_{\delta_0}, L_2(\PP)) \lesssim \left( \frac{\delta_0}{\varepsilon} \right)^d,
\]
we get
\[
\sqrt{ \log \mathcal{N}(\varepsilon, \mathcal{\cG}_{\delta_0}, L_2(\PP)) } \lesssim \left( \frac{\delta_0}{\varepsilon} \right)^{d/2}.
\]
Substituting into the Dudley integral \citep{koltchinskii2011oracle}:
\[
\int_\alpha^{\delta_0} \left( \frac{\delta_0}{\varepsilon} \right)^{d/2} d\varepsilon = {\delta_0}^{d/2} \int_\alpha^{\delta_0} \varepsilon^{-d/2} d\varepsilon.
\]
Without loss of generality, we assume that $d\neq 2$. Then
\[
\int_\alpha^{t} \varepsilon^{-d/2} d\varepsilon = \frac{1}{1 - d/2} \left( {t}^{1 - d/2} - \alpha^{1 - d/2} \right).
\]
Hence,
\[
\mathfrak{R}_n(t,\cG_{\delta_0}) \lesssim \alpha + \frac{12{\delta_0}^{d/2}}{(1-d/2)\sqrt{n}} \cdot \left( {t}^{1 - d/2} - \alpha^{1 - d/2} \right).
\]
Balancing between the two terms, we take $\alpha_{opt}\asymp\delta_0^{\frac{d}{d+2}}n^{-\frac{1}{d+2}}$.

Recall the definition of the critical radius, we need 
\[
\alpha_{opt}\lesssim q_{\delta_0,n}^2,\ \frac{\delta_0^{d/2}}{\sqrt{n}}q_{\delta_0,n}^{1-d/2}\lesssim q_{\delta_0,n}^2.
\]
Thus, we get
\[
q_{\delta_0,n}\asymp \delta_0^{\frac{d}{d+2}}n^{\frac{-1}{d+2}}.
\]
Plugging in $n=N/2$ and combining all these parts together, we finish the proof.
\end{proof}
Now, we are ready to give the proof of \cref{thm:regret_calibration}. 
\begin{proof}[Proof of \cref{thm:regret_calibration}]
In epoch $s$, the data used for fitting the reward function $\hat{f}_s$ is $$\cD_{s-1}=\cbr{(x_t,b_t,\tilde{z}_t,a_t,r_t)}_{t=\beta_{s-2}+1}^{\beta_{s-1}},\ |\cD_{s-1}|=2^{s-2}.$$
In $\cD_{s-1}$, $\tilde{z}_t$ is generated by $\hat{g}_{s-1}$ which is trained by the calibration procedure $\Cal$ on the dataset $\cD_{s-2}$ from epoch $s-2$. Therefore, combining \cref{thm:erm_oracle_ineq} and \cref{thm:calibration_guarantee}, we have that for any $\delta\in(0,1)$, with probability at least $1-3\delta$,
\begin{align*}
    ||\hat{f}_{s}-f^*||_2\le &C(\lambda,\upsilon(f^*,\PP,\tilde{g}))\rbr{r_{2^{s-2}}+\sqrt{\cE^{\cF}_{\PP}(\hat{g}_{s-1})}+\sqrt{\frac{\log(4\varphi_{\lambda}(r_{2^{s-2}})/\delta)}{2^{s-2}}}+\sqrt{\frac{\log(8/\delta)}{2^{s-2}}}+\frac{\log(\varphi_{\lambda}(r_{2^{s-2}})/\delta)}{r_{2^{s-2}}2^{s-2}}}\\
    \lesssim & C(\lambda,\upsilon(f^*,\PP,\tilde{g}))\rbr{r_{2^{s-2}}+\sqrt{\frac{\log(4\varphi_{\lambda}(r_{2^{s-2}})/\delta)}{2^{s-2}}}+\sqrt{\frac{\log(8/\delta)}{2^{s-2}}}+\frac{\log(\varphi_{\lambda}(r_{2^{s-2}})/\delta)}{r_{2^{s-2}}2^{s-2}}}\\
    +&L_{\cZ}||\hat{g}_{s-1}-g^*||_2+L_{\cZ}\EE[\eta^2]^{\frac{1}{2}}\\
    \lesssim & C(\lambda,\upsilon(f^*,\PP,\tilde{g}))\rbr{r_{2^{s-2}}+\sqrt{\frac{\log(4\varphi_{\lambda}(r_{2^{s-2}})/\delta)}{2^{s-2}}}+\sqrt{\frac{\log(8/\delta)}{2^{s-2}}}+\frac{\log(\varphi_{\lambda}(r_{2^{s-2}})/\delta)}{r_{2^{s-2}}2^{s-2}}}\\
    +&L_{\cZ}\sqrt{\frac{1}{\epsilon_0^2}\rbr{\delta_0^{\frac{2d}{d+2}}N^{\frac{-2}{d+2}}+\frac{(1+\tau)^2\log(\varphi_{1+\tau}(q_{\delta_0,N/2})/\delta)}{N}+s_{N/2}^2+\frac{\log(\varphi_2(s_{N/2})/\delta)}{N}}}+L_{\cZ}\omega_0\\
    \lesssim & C(\lambda,\upsilon(f^*,\PP,\tilde{g}))\rbr{r_{2^{s-2}}+\sqrt{\frac{\log(4\varphi_{\lambda}(r_{2^{s-2}})/\delta)}{2^{s-2}}}+\sqrt{\frac{\log(8/\delta)}{2^{s-2}}}+\frac{\log(\varphi_{\lambda}(r_{2^{s-2}})/\delta)}{r_{2^{s-2}}2^{s-2}}}\\
    +&\frac{L_{\cZ}}{\epsilon_0}\rbr{\delta_0^{\frac{d}{d+2}}2^{\frac{-2(s-3)}{d+2}}+\frac{(1+\tau)\sqrt{\log(\varphi_{1+\tau}(q_{\delta_0,2^{s-3}})/\delta)}}{2^{s/2-2}}+s_{2^{s-3}}+\frac{\sqrt{\log(\varphi_2(s_{2^{s-3}})/\delta)}}{2^{s/2-2}}}+L_{\cZ}\omega_0.
\end{align*}
The first inequality is due to \cref{thm:erm_oracle_ineq}, and the second inequality is by the Lipschitz continuity of the reward model class in \cref{ass:calibration}. The third one is by \cref{thm:calibration_guarantee}, and the last one is simply algebra.
Combining this bound, \cref{lemma:general_guarantee_IGW} and \cref{thm:regret}, we have that with probability at least $1-3\delta$,
\begin{align*}
    \text{Reg}(T)\lesssim &\sqrt{K}\rbr{\sum_{s=2}^{s(T)}r_{2^{s-2}}2^{s-1}+\sum_{s=2}^{s(T)}\sqrt{\log(4\varphi_{\lambda}(r_{2^{s-2}})/\delta)+\log(8/\delta)}2^{\frac{s}{2}}+\sum_{s=2}^{s(T)}\log(\varphi_{\lambda}(4r_{2^{s-2}})/\delta)\frac{2}{r_{2^{s-2}}}}\\
    +&\sqrt{K}\sum_{s=2}^{s(T)}\frac{L_{\cZ}}{\epsilon_0}\rbr{\delta_0^{\frac{d}{d+2}}2^{\frac{-2(s-3)}{d+2}}+\frac{(1+\tau)\sqrt{\log(\varphi_{1+\tau}(q_{\delta_0,2^{s-3}})/\delta)}}{2^{s/2-2}}+s_{2^{s-3}}+\frac{\sqrt{\log(\varphi_2(s_{2^{s-3}})/\delta)}}{2^{s/2-2}}}2^{s-1}\\
    &+L_{\cZ}\omega_0 T\\
    \lesssim& \sqrt{K}\rbr{\sum_{s=2}^{s(T)}r_{2^{s-2}}2^{s-1}+\frac{L_{\cZ}}{\epsilon_0}\delta_0^{\frac{d}{d+2}}\sum_{s=2}^{s(T)}2^{\frac{-2(s-3)}{d+2}+s-1}+\frac{L_{\cZ}}{\epsilon_0}\sum_{s=2}^{s(T)}s_{2^{s-3}}2^{s-1}}+L_{\cZ}\omega_0T\\
    +&\sqrt{K}\rbr{\sum_{s=2}^{s(T)}\sqrt{\log(4\varphi_{\lambda}(r_{2^{s-2}})/\delta)+\log(8/\delta)}2^{\frac{s}{2}}+\sum_{s=2}^{s(T)}\log(\varphi_{\lambda}(4r_{2^{s-2}})/\delta)\frac{2}{r_{2^{s-2}}}}\\
    +&\frac{\sqrt{K}L_{\cZ}}{\epsilon_0}\rbr{\sqrt{(1+\tau)^2\log(\varphi_{1+\tau}(q_{\delta_0,2^{s-3}})/\delta)+\log(\varphi_2(s_{2^{s-3}})/\delta)}}2^{s/2}.
\end{align*}
So we finish the proof.
\end{proof}
\end{document}